\documentclass[11pt]{article}
\usepackage[margin=1in]{geometry}
\usepackage{times,natbib}
\usepackage[T1]{fontenc}
\usepackage[utf8]{inputenc}
\usepackage{amsmath,amssymb,amsthm,mathtools,bm}
\usepackage{graphicx,booktabs,array,multirow,tabularx,longtable}
\usepackage[table]{xcolor}
\usepackage{microtype}
\usepackage{algorithm,algorithmic}
\usepackage{subcaption}
\usepackage{enumitem}
\usepackage{url}
\usepackage[colorlinks=true,linkcolor=teal!65!black,citecolor=teal!65!black,urlcolor=teal!65!black]{hyperref}
\usepackage[nameinlink,noabbrev]{cleveref}
\newtheorem{theorem}{Theorem}[section]
\newtheorem{proposition}[theorem]{Proposition}
\newtheorem{lemma}[theorem]{Lemma}
\newtheorem{corollary}[theorem]{Corollary}
\theoremstyle{definition}

\theoremstyle{remark}

\newcommand{\E}{\mathbb E}

\newcommand{\Var}{\operatorname{Var}}
\newcommand{\Cov}{\operatorname{Cov}}
\newcommand{\KL}{\operatorname{KL}}
\newcommand{\TV}{\operatorname{TV}}
\newcommand{\F}{\mathcal F}
\newcommand{\Pn}{\mathbb P_n}
\newcommand{\PP}{\mathcal P}
\newcommand{\ind}{\mathbf 1}
\newcommand{\defect}{\mathfrak D_n}
\newcommand{\risk}{\mathcal R}
\newcommand{\norm}[1]{\left\lVert #1\right\rVert}

\newcommand{\smallhead}[1]{\paragraph{#1}}

\renewcommand{\arraystretch}{1.12}
\setlist{nosep,leftmargin=*}
\newcolumntype{Y}{>{\raggedright\arraybackslash}X}

\title{Learning to Fluctuate: Statistical Foundations for Causal Tabular Pretraining}
\author{Zhiheng Zhang\\School of Statistics and Data Science\\Shanghai University of Finance and Economics}
\date{}
\hypersetup{
  pdftitle={Learning to Fluctuate: Statistical Foundations for Causal Tabular Pretraining},
  pdfauthor={Zhiheng Zhang},
  pdfsubject={Causal tabular pretraining},
  pdfkeywords={causal inference, tabular foundation models, influence functions, pretraining}
}
\begin{document}
\maketitle
\begin{abstract}
Causal tabular foundation models amortize effect estimation across synthetic mechanisms, but supervision by the mechanism-level effect $\theta(P)$---which we call \emph{latent-effect supervision}---rewards posterior shrinkage instead of directly encoding the repeated-sample response needed in a fixed deployment population. We introduce fluctuation-supervised pretraining (FSP): each synthetic table is labeled by its average treatment effect plus its efficient influence-function fluctuation, while deployment remains a single frozen forward pass. Along the path $T_{\lambda,P}=\theta(P)+\lambda P_n\psi_P$, we prove an endpoint transition: every fixed $\lambda<1$ retains label ambiguity of order $(1-\lambda)^2/n$, whereas full fluctuation makes the Gaussian label observable and reduces optimal finite-stratum causal label-prediction risk to order $n^{-2}$. One finite-pretraining bound combines label, network, episode-sampling, and optimization errors; its resulting sampling defect controls fixed-mechanism bias, mean squared error, variance, Gaussian approximation, and, with variance-head accuracy, studentized coverage. Complementary lower bounds separate the local $n^{-1}$ ATE risk that deployment observations cannot erase from the $\log N/M$ excess risk of a generic finite-dictionary episode-learning problem. Experiments trace the learned sampling response. Across 24 nonlinear continuous-covariate cells at trained context lengths, a continuous-row FSP model lowers checkpoint-mean macro RMSE by 7.0\% versus S-learner and wins all 12 weak-overlap cells; validation-selected Summary FSP deploys $11.6\times$ faster per table than S-learner in our warm one-thread benchmark. Under effect shift, matched Raw FSP lowers mean-checkpoint RMSE by 54.2\% and teacher defect by 99.0\% versus latent-effect supervision, and RMSE by 10.2\% versus the released CausalPFN-S checkpoint. Known-effect semisynthesis tests coverage; two randomized-study evaluations show that lower RMSE can coexist with residual attenuation.
\end{abstract}
\section{Introduction}
Causal tabular foundation models promise to amortize an analysis that is otherwise repeated cohort by cohort: pretrain on synthetic causal mechanisms, freeze the network, and estimate an intervention effect from a new table in one forward pass. The TabPFN paradigm established the data set, rather than an individual row, as the unit of in-context prediction \citep{muller2022,hollmann2025}; CausalPFN, Do-PFN, and CausalFM extend synthetic table pretraining to causal targets and identifying settings \citep{causalpfn,dopfn,causalfm}. The question is: \emph{Can causal foundation models amortize an efficient estimator's sampling law?}

Consider a hypothetical hospital where smoking lowers mean birthweight by \(150\) g. A model pretrained mostly on near-zero effects may systematically attenuate estimates from this population toward the prior center, for example around \(-75\) g rather than the true \(-150\) g. Such shrinkage can improve average prediction across synthetic populations. Yet repeated cohorts from this same hospital can produce estimates concentrated around the wrong effect; even an accurate standard error then leaves confidence intervals centered incorrectly. Good prediction over virtual worlds and correct inference in one real world are different objectives. The issue persists even with exact labels and exact population-risk optimization: squared-loss prediction of the mechanism-level effect $\theta(P)$, which we call \emph{latent-effect supervision}, rewards prior-conditioned prediction, whereas inference requires the correct sampling response at this fixed population (Figure~\ref{fig:intro-overview}).

\begin{figure}[t]
\centering
\includegraphics[width=.94\linewidth]{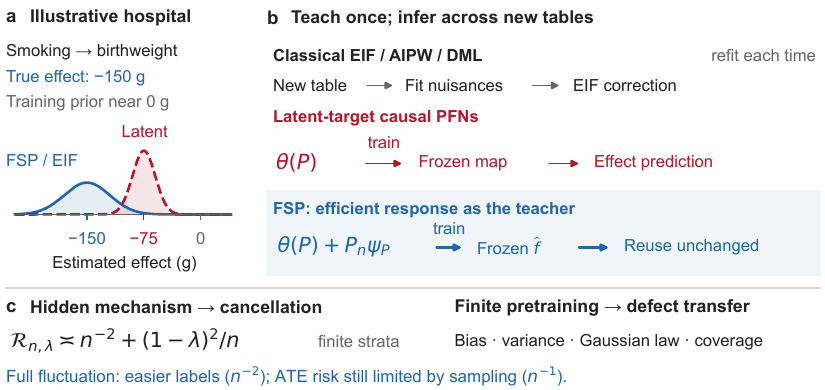}
\caption{\textbf{Teach the sampling response once, reuse it across new tables.} \textbf{Panel a} shows why latent-effect supervision can have the wrong repeated-sampling center under prior shift; \textbf{Panel b} shows that FSP moves EIF structure from per-dataset correction to reusable synthetic supervision; \textbf{Panel c} shows why the full-fluctuation endpoint is statistically special and how finite pretraining error propagates into inferential guarantees.}\label{fig:intro-overview}
\end{figure}

At this intersection, existing methods place the statistical work at three different stages. \textbf{The first} route pretrains a frozen model to infer causal effects or interventional objects directly from a new table \citep{causalpfn,dopfn,causalfm}. \textbf{A second} constructs an orthogonal or targeted estimate anew on each analyzed data set, through nuisance fitting, cross-fitting, or a targeted neural objective \citep{dml,dragonnet}. \textbf{A third} acts downstream of learned objects: MP-OSPC forms an EIF-based one-step ATE posterior from PFN-derived nuisance posteriors, while WALDO uses simulation-calibrated Neyman inversion to form confidence regions from a predictor or posterior estimator \citep{ospc,waldo}. \textbf{FSP trains a reusable inferential rule.} AIPW/DML typically constructs the nuisance-adjusted estimator afresh on each new data set \citep{dml}; FSP teaches its sampling response across synthetic tables and reuses the frozen map without per-table nuisance fitting or test-time correction. Our theory quantifies the statistical cost of that reuse through label ambiguity, network approximation, finite pretraining over $M$ episodes, and optimization error. Appendix Table~\ref{tab:routes} compares these representative, non-exclusive workflows.

We propose \textbf{fluctuation-supervised pretraining (FSP)}. For a synthetic mechanism $P$, a generated table $D_n$, its ATE $\theta(P)$, and efficient influence function $\psi_P$, define the supervision path
\begin{equation}\label{eq:opening-label}
 T_{\lambda,P}(D_n)=\theta(P)+\lambda\frac1n\sum_{i=1}^n\psi_P(O_i),
 \qquad \lambda\in[0,1].
\end{equation}
The conventional, table-invariant target $D_n\mapsto\theta(P)$ is latent-effect supervision, the $\lambda=0$ endpoint; FSP is the full-fluctuation endpoint $\lambda=1$. We use $\lambda$ as an analytical interpolation, not a deployment hyperparameter, to ask whether any fixed partial injection suffices, or whether full fluctuation is required to change the statistical order of the supervised problem. The simulator thus teaches the network both where the virtual world's causal effect lies and how an efficient estimator should move for this particular synthetic table. Simulator-known nuisances construct each EIF teacher; the student learns a table-to-estimate rule from observed inputs, and fixed teacher-nuisance errors enter its sampling defect (Proposition~\ref{prop:teachererror}). At deployment, the frozen rule needs no nuisance refitting or test-time correction. We call this uncorrected prediction the \emph{native frozen output}. A separate head learns the efficient variance coefficient $V(P)=\E_P\psi_P^2$, used as $V(P)/n$ for studentization; conditional uncertainty of the label is not the sampling uncertainty of the ATE.

Three results connect this target to inference. \textbf{First}, every fixed $\lambda<1$ retains label ambiguity proportional to $(1-\lambda)^2/n$ in Gaussian and nondegenerate finite-stratum submodels. Full fluctuation cancels this component: label-prediction risk becomes zero in the Gaussian experiment and has sharp order $n^{-2}$ in the finite-stratum model. \textbf{Second}, an oracle bound combines label approximation, network approximation, finite pretraining over \(M\) synthetic episodes, and optimization error into one native sampling defect. Our theorem transfers this defect to bias, mean squared error, sampling variance, and Gaussian approximation; variance-head accuracy additionally controls studentization. \textbf{Third}, complementary lower bounds distinguish deployment risk of order $n^{-1}$ from the $\Omega(\min\{1,\log N/M\})$ excess risk of a separate finite-dictionary episode-learning experiment. The latter matches the generic dictionary-size dependence in the upper bound, without asserting a causal-FSP pretraining necessity. A two-world construction further gives a $1/(nM)$ obstruction for partially fluctuated labels. Our contributions are summarized as follows:
\begin{itemize}
\item \textbf{Supervision for inference.} We turn efficient-influence-function theory into a synthetic-label design principle and show that full fluctuation is a singular endpoint: it removes the first-order label ambiguity that persists under every fixed partial fluctuation (Section~\ref{sec:results}).
\item \textbf{A reusable statistical procedure.} We quantify when finite pretraining yields a frozen model with repeated-sample guarantees across covered mechanisms, under explicit approximation, transfer, and variance conditions (Section~\ref{sec:results}).
\item \textbf{Two distinct roles for data.} We separate learning a reusable estimator from estimating a population's effect, clarifying which gains synthetic episodes provide and which still require deployment observations (Sections~\ref{sec:results}--\ref{sec:discussion}).
\end{itemize}

The experiments test this chain through controlled seven-point $\lambda$ comparisons, raw-table and continuous-covariate backbones, finite-dictionary scaling, known-effect semisynthesis, and two randomized studies. The matched-backbone test shows the predicted reversal: effect shift erases the prior-center point-risk advantage of latent-effect supervision (Figure~\ref{fig:paired-baseline-profiles}). A 48-cell nonlinear expansion and measured CPU frontier then show where amortization pays. Appendix~\ref{sec:related} positions FSP, Sections~\ref{sec:setup}--\ref{sec:results} develop the framework and theory, Section~\ref{sec:experiments} tests their predicted regimes, and Section~\ref{sec:discussion} develops the implications.

\section{Training experiment and deployed estimator}\label{sec:setup}
\smallhead{Causal model and estimand.}
Fix positive integers $K$ and $n$. Let $P$ be the observed-data law of $O=(X,A,Y)$, where $X\in\{1,\ldots,K\}$ is a pretreatment stratum, $A\in\{0,1\}$ is treatment, and $Y\in[0,1]$ is the outcome. A table $D_n=(O_i)_{i=1}^n$ consists of independent copies $O_i=(X_i,A_i,Y_i)$ of $O$, with law $P^n$. Write $\E_P$ for expectation of one observation and $\E_{P^n}$ for expectation over a table; use the same convention for variances. For a measurable real-valued row function $h$, define $\Pn h=n^{-1}\sum_{i=1}^n h(O_i)$. For $s\in\{1,\ldots,K\}$ and $a\in\{0,1\}$, set
\[
 p_s=P(X=s),\qquad e_s=P(A=1\mid X=s),\qquad
 m_{as}=\E_P(Y\mid A=a,X=s).
\]
Write $p=(p_s)_s$, $e=(e_s)_s$, and $m=(m_{as})_{a,s}$, suppressing dependence on $P$. Let $Y(a)$ be the potential outcome under treatment $a$; expectation involving potential outcomes refers to a compatible causal law with observed margin $P$. Under consistency, conditional exchangeability given $X$, and positive treatment probabilities, the average treatment effect is the observed-law functional
\begin{equation}\label{eq:target}
 \theta(P):=\sum_{s=1}^K p_s(m_{1s}-m_{0s})=\E\{Y(1)-Y(0)\}.
\end{equation}
For the finite-stratum analysis, fix $p_*\in(0,1/K]$ and $\epsilon\in(0,1/2]$. The class $\PP_{K,p_*,\epsilon}$ consists of observed laws with $p_s\ge p_*$ and $e_s\in[\epsilon,1-\epsilon]$ for every $s$; $\PP^{\rm bin}_{K,p_*,\epsilon}$ additionally requires $Y\in\{0,1\}$. Put $q_*=p_*\epsilon$ and $H=1+\epsilon^{-1}$. Then $P(X=s,A=a)\ge q_*$. The principal mean-label upper bounds allow bounded outcomes; the sharp finite-stratum lower bound, explicit variance-head construction, and uniform mechanism covering use the binary-outcome subclass.
Here $m_{aX}$ and $e_X$ are the preceding arrays evaluated at the observed stratum. Define the uncentered ATE score
\begin{equation}\label{eq:phi}
\phi_P(O)=m_{1X}-m_{0X}+\frac{A(Y-m_{1X})}{e_X}
-\frac{(1-A)(Y-m_{0X})}{1-e_X}.
\end{equation}
Lemma~\ref{lem:score} gives $|\phi_P|\le H$ under the stated bounds.
Its centered influence function, full-fluctuation label, and variance coefficient per observation are
\begin{equation}\label{eq:teacher}
 \psi_P(O)=\phi_P(O)-\theta(P),\qquad
 T_P(D_n)=\Pn\phi_P,\qquad V(P)=\E_P\{\psi_P(O)^2\}.
\end{equation}
The same realized table determines the network input and the label fluctuation: $T_P(D_n)=\theta(P)+\Pn\psi_P$. Lemma~\ref{lem:score} gives its exact repeated-sampling moments,
\[
 \E_{P^n}\{T_P(D_n)\}=\theta(P),\qquad
 \Var_{P^n}\{T_P(D_n)\}=V(P)/n.
\]
For $\lambda\in[0,1]$, define
\begin{equation}\label{eq:lambda-label}
 T_{\lambda,P}(D_n)=\theta(P)+\lambda\Pn\psi_P
 =(1-\lambda)\theta(P)+\lambda T_P(D_n).
\end{equation}
We refer to $T_{\lambda, P}\left(D_n\right)$ as the oracle \textbf{teacher label}. It may use simulator-known mechanism quantities during synthetic pretraining, but it is never evaluated at deployment. At the FSP endpoint, $T_P\left(D_n\right)=T_{1, P}\left(D_n\right)=P_n \phi_P$. Every oracle label on this path has expectation $\theta(P)$ and variance $\lambda^2V(P)/n$ at fixed $P$. FSP uses $\lambda=1$, latent-effect supervision uses $\lambda=0$, and $0<\lambda<1$ gives partial-fluctuation supervision. The simulator uses $(p,e,m)$ to construct the effect labels. For general bounded outcomes, its variance target also uses $\sigma_{as}^2:=\Var_P(Y\mid A=a,X=s)$ through~\eqref{eq:Vformula}; for binary outcomes, $\sigma_{as}^2=m_{as}(1-m_{as})$. The student network receives the observed table or deterministic features of it, while simulator quantities determine only supervision targets.

\smallhead{Pretraining and frozen deployment.}
Let $\Pi$ be a distribution over $\PP_{K,p_*,\epsilon}$. Pretraining uses $M\ge1$ independent episodes: for $j=1,\ldots,M$, draw $P_j\sim\Pi$ and then $D_n^{(j)}\mid P_j\sim P_j^n$. Thus $n$ counts rows per table and $M$ counts independently generated training tables. Let $\F$ be a nonempty class, fixed before training, of measurable table-to-scalar rules $f=f_\omega$ with values in $[-H,H]$, where $\omega$ denotes mean-map weights. For each fixed $\lambda$, the empirical squared-label loss is
\begin{equation}\label{eq:loss}
 \widehat\risk_{\lambda,\Pi}(f)=\frac1M\sum_{j=1}^M
 \{f(D_n^{(j)})-T_{\lambda,P_j}(D_n^{(j)})\}^2.
\end{equation}
For FSP, write $\widehat f=f_{\widehat\omega_M}\in\F$ for the mean map fitted at $\lambda=1$. Its achieved optimization tolerance $\eta_M\ge0$ satisfies
\[
 \widehat\risk_{1,\Pi}(\widehat f)
 \le\inf_{f\in\F}\widehat\risk_{1,\Pi}(f)+\eta_M.
\]
A separate positive map $\widehat v$ learns $V(P)$, the coefficient of fixed-mechanism sampling variance $V(P)/n$, which differs from conditional label spread across simulator mechanisms. The complete map includes any shared table representation. In the implementation, its loss is detached from the mean backbone; Appendix~\ref{app:experiments} specifies the log-variance loss and target floor. We suppress the fixed $n$ in $\widehat\omega_M$ and the $(M,n)$ dependence of the fitted-map notation.

Freeze both maps and evaluate them on a fresh $D_n\sim P^n$, independent of training, at a fixed deployment law $P$. Fix a nominal error probability $\alpha\in(0,1)$, and let $z_{1-\alpha/2}$ be the $(1-\alpha/2)$ standard-normal quantile. The deployed outputs and nominal Wald interval are
\begin{equation}\label{eq:deployed}
 \begin{gathered}
 C_{M,n}=\left[
 \widehat\theta_{M,n}-z_{1-\alpha/2}\sqrt{\widehat V_{M,n}/n},\;
 \widehat\theta_{M,n}+z_{1-\alpha/2}\sqrt{\widehat V_{M,n}/n}
 \right],
 \end{gathered}
\end{equation}
$\text{where}~ \widehat\theta_{M,n}=\widehat f(D_n),~
 \widehat V_{M,n}=\widehat v(D_n)>0.$ These outputs use the table and frozen weights. Theorem~\ref{thm:main} establishes repeated-sampling coverage under its mechanism-transfer, mean-approximation, variance-accuracy, and nondegeneracy conditions.

\smallhead{Probability levels and the sampling defect.}
Deployment expectations condition on the fitted maps and hold $P$ fixed while new tables vary under $P^n$. Averaging over an independent mechanism $P\sim\Pi$ defines a distinct task-average risk. For a fixed measurable table rule $f\in\F$, define
\begin{equation}\label{eq:defect}
 \begin{aligned}
 \defect(P;f)&=n\E_{P^n}\bigl[\{f(D_n)-T_P(D_n)\}^2\bigr],~
 \risk_\Pi(f)&=\E_{P\sim\Pi}\E_{P^n}\bigl[\{f(D_n)-T_P(D_n)\}^2\bigr].
 \end{aligned}
\end{equation}
Thus $\E_{P\sim\Pi}\defect(P;f)=n\risk_\Pi(f)$; the mechanism-transfer conditions in Theorem~\ref{thm:main} connect this average to fixed-$P$ control. The factor $n$ compares the learned error with the $n^{-1/2}$ sampling scale:
\[
 \sqrt n\{f(D_n)-\theta(P)\}
 =\frac1{\sqrt n}\sum_{i=1}^n\psi_P(O_i)
 +\sqrt n\{f(D_n)-T_P(D_n)\}.
\]
The second term has second moment $\defect(P;f)$; a vanishing defect makes the learned remainder negligible in mean square at the first-order inferential scale. Appendix~\ref{app:scope} gives assumption diagnostics; Appendix~\ref{app:variance} analyzes variance-head learning on its stated subclass.

\section{From synthetic labels to sampling laws}\label{sec:results}
This section establishes the chain from synthetic supervision to native inference.
Proposition~\ref{prop:gauss} and Theorem~\ref{thm:learnability} show why full fluctuation is statistically special, changing label-prediction error from first to second order.
Theorem~\ref{thm:main} then connects finite pretraining error to fixed-mechanism bias, variance, Gaussian approximation, and coverage. Finally, Theorems~\ref{thm:meta-lower}--\ref{thm:minimax} separate the information limits of pretraining from those of the fresh deployment sample. Together, these results show what inferential structure can be amortized by pretraining and what population information must still come from the new table.

\subsection{Full fluctuation is a singular supervision endpoint (Figure~\ref{fig:theory-map}-b)}
The Gaussian experiment isolates supervision from the bounded causal class; its predictors range over $L_2(Z)$. At fixed $\theta$, bias and MSE refer to repeated draws of $Z\mid\theta$.
\begin{proposition}[Exact Gaussian $\lambda$-path: an intuitive example]\label{prop:gauss}
Let $Z\mid\theta\sim N(\theta,v/n)$, $\theta\sim N(0,s^2)$ for fixed $s^2,v>0$, and $T_\lambda=(1-\lambda)\theta+\lambda Z$. With $\kappa_n=ns^2/(ns^2+v)$ and $a_{\lambda,n}=\lambda+(1-\lambda)\kappa_n$, the minimizer of population squared label loss, unique up to almost-sure equality, is $f_\lambda^*(Z)=\E(T_\lambda\mid Z)=a_{\lambda,n}Z$, and
$\Var(T_\lambda\mid Z)=(1-\lambda)^2\frac{s^2v}{ns^2+v}$,~$
 \operatorname{Bias}_\theta(f_\lambda^*)=-(1-\lambda)(1-\kappa_n)\theta$,~$
 \Var_\theta(f_\lambda^*)=a_{\lambda,n}^2v/n$,~$
 \operatorname{MSE}_\theta(f_\lambda^*)=(1-a_{\lambda,n})^2\theta^2+a_{\lambda,n}^2v/n.$

Hence, every fixed $\lambda<1$ leaves $\Theta(n^{-1})$ label ambiguity; $\lambda=1$ makes the label observable and removes shrinkage while retaining sampling variance $v/n$.
\end{proposition}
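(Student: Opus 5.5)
The plan is to reduce everything to the conjugate Gaussian posterior of $\theta$ given $Z$. The first step is standard $L_2$ projection theory. Since $T_\lambda$ is square integrable, the population squared loss $\E\{(f(Z)-T_\lambda)^2\}$ over $f\in L_2(Z)$ decomposes as
\[
\E\{(f(Z)-\E(T_\lambda\mid Z))^2\}+\E\Var(T_\lambda\mid Z).
\]
Hence the minimizer is $\E(T_\lambda\mid Z)$, unique up to almost-sure equality.

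The second step computes the posterior. Here $(\theta,Z)$ is jointly Gaussian with $\Cov(\theta,Z)=s^2$ and $\Var(Z)=s^2+v/n$. Therefore $\theta\mid Z\sim N\bigl(\kappa_n Z,\ s^2(v/n)/(s^2+v/n)\bigr)$, with posterior variance $s^2v/(ns^2+v)$.

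Because $Z$ is known given $Z$, linearity gives
\[
\E(T_\lambda\mid Z)=(1-\lambda)\kappa_nZ+\lambda Z=a_{\lambda,n}Z .
\]
Likewise $\Var(T_\lambda\mid Z)=(1-\lambda)^2\Var(\theta\mid Z)$, which is the stated formula.

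The third step computes the fixed-$\theta$ moments of $f_\lambda^*(Z)=a_{\lambda,n}Z$ under $Z\mid\theta\sim N(\theta,v/n)$:
\begin{itemize}
\item the bias is $(a_{\lambda,n}-1)\theta$, and since $1-a_{\lambda,n}=(1-\lambda)(1-\kappa_n)$ this equals $-(1-\lambda)(1-\kappa_n)\theta$;
\item the variance is $a_{\lambda,n}^2v/n$;
\item the MSE is the squared bias plus the variance.
\end{itemize}

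For the concluding sentence, fix $\lambda<1$. The quantity $(1-\lambda)^2s^2v/(ns^2+v)$ equals $n^{-1}(1-\lambda)^2v\cdot ns^2/(ns^2+v)$. Since $\kappa_n\to1$, this is bounded above by $(1-\lambda)^2v/n$ and below by a constant multiple of it once $n\ge v/s^2$, so it is $\Theta(n^{-1})$. This is the irreducible risk $\E\Var(T_\lambda\mid Z)$.

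At $\lambda=1$ the conditional variance vanishes, so $T_1=Z$ is observable. Also $a_{1,n}=1$, so the bias is zero and the variance is $v/n$.

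None of these steps is a real obstacle. The only care needed is to state uniqueness in $L_2(Z)$ and to track the algebraic identity $1-a_{\lambda,n}=(1-\lambda)(1-\kappa_n)$.
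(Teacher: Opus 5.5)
Your proposal is correct and follows essentially the same route as the paper: the conjugate posterior $\theta\mid Z\sim N(\kappa_nZ,\,s^2v/(ns^2+v))$ (you get it from the bivariate-normal conditioning formula, where the paper completes the square), the conditional mean/variance decomposition of squared loss for existence and a.s.\ uniqueness, linearity giving $a_{\lambda,n}Z$ and $(1-\lambda)^2$ times the posterior variance, and then the fixed-$\theta$ moments of $a_{\lambda,n}Z$. Your explicit two-sided bound for the $\Theta(n^{-1})$ claim is a quantitative version of the paper's remark that $n\Var(T_\lambda\mid Z)\to(1-\lambda)^2v$.
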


Thus the observable label at $\lambda=1$ has zero conditional prediction variance but fixed-$\theta$ sampling variance $v/n$. For fixed $\lambda<1$, shrinkage vanishes asymptotically; the singularity concerns label-prediction order. For finite strata, let $N_s,N_{as}$ be stratum and treatment-cell counts and $Z_{as}$ their outcome sums. The observable comparator is
\[
 S_n(D_n)=\sum_{s=1}^K\frac{N_s}{n}\left\{
 \frac{Z_{1s}}{\max(N_{1s},nq_*/2)}-
 \frac{Z_{0s}}{\max(N_{0s},nq_*/2)}\right\}.
\]
Gaussian observability is illustrative; with hidden finite-stratum nuisances, the next theorem establishes second-order label learnability from observed tables (Figure~\ref{fig:theory-map}-b):
\begin{theorem}[Finite-stratum supervision phase transition]\label{thm:learnability}
Uniformly over $\PP_{K,p_*,\epsilon}$, $S_n$ satisfies
\begin{equation}\label{eq:comparator}
 \E_{P^n}(S_n-T_P)^2\le A_n := (n^2p_*\epsilon^2)^{-1}+2K(H+1)^2e^{-nq_*/8} = O(n^{-2}).
\end{equation}
For $\mathcal R_{n,\lambda}:=\inf_g\sup_{P\in\PP_{K,p_*,\epsilon}}\E_{P^n}\{g(D_n)-T_{\lambda,P}(D_n)\}^2$, where $g$ observes only the table,
\begin{equation}\label{eq:lambda-upper}
 \mathcal R_{n,\lambda}\le\{2\lambda^2+4(1-\lambda)^2\}A_n
 +16H^2(1-\lambda)^2/n.
\end{equation}
On a binary submodel with the same law in every stratum and treated count $N_{\rm tr}\sim\mathrm{Bin}(n,1/2)$, set $r_{n,\lambda}=\E_{N_{\rm tr}}\{(1-2\lambda N_{\rm tr}/n)^2/[6(N_{\rm tr}+2)]\}$,
then
\begin{equation}\label{eq:label-lower}
 \mathcal R_{n,\lambda}\ge\max\!\left\{\frac{2\lambda^2}{3n(n+2)},r_{n,\lambda}\right\},\quad
 r_{n,\lambda}\ge\frac{(1-\lambda)^2+\lambda^2/n}{6(n+2)},\quad
 nr_{n,\lambda}\to\frac{(1-\lambda)^2}{3}.
\end{equation}
Thus $\mathcal R_{n,\lambda}\asymp n^{-2}+(1-\lambda)^2/n$; the second-order window is $1-\lambda_n=O(n^{-1/2})$. The limit for $nr_{n,\lambda}$ holds at fixed $\lambda$.
\end{theorem}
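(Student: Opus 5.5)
The proof has three parts: the comparator bound \eqref{eq:comparator}, the upper bound \eqref{eq:lambda-upper} along the $\lambda$-path, and the lower bound \eqref{eq:label-lower} on the binary submodel.

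\textbf{Comparator bound.} Decompose $T_P=\Pn\phi_P$ stratum by stratum. Write $\bar Y_{as}=Z_{as}/N_{as}$ and $\bar Y_{as}'=Z_{as}/\max(N_{as},nq_*/2)$. On the good event $G=\{N_{as}\ge nq_*/2\ \forall a,s\}$ the truncation is inactive, so $\bar Y'_{as}=\bar Y_{as}$. A direct computation then gives
\[
S_n-T_P=\sum_{s}\sum_{a}(\pm)\Bigl(\frac{N_s}{n}-\frac{N_{as}}{n e_{as}}\Bigr)(\bar Y_{as}-m_{as}),
\]
where $e_{1s}=e_s$ and $e_{0s}=1-e_s$. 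Indeed, the IPW term contributes $N_{as}(\bar Y_{as}-m_{as})/(ne_{as})$, the plug-in contributes $(N_s/n)\bar Y_{as}$, and the terms $(N_s/n)m_{as}$ cancel.

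Each summand is a product of two factors that are each of order $n^{-1/2}$. Conditionally on the design $(X_i,A_i)$, the factors $\bar Y_{as}-m_{as}$ are centered, independent across cells, and have variance at most $1/(4N_{as})$. The design factor $N_s/n-N_{as}/(ne_{as})=(N_se_{as}-N_{as})/(ne_{as})$ has conditional (given $N_s$) second moment $N_se_s(1-e_s)/(n^2e_{as}^2)$.

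Taking the conditional expectation first kills the cross terms and gives
\[
\sum_{a,s}\E\Bigl[\frac{(N_se_{as}-N_{as})^2}{n^2e_{as}^2}\cdot\frac{1}{4N_{as}}\Bigr].
\]
Here I would be careful: I would use the bound $N_{as}\ge nq_*/2$ on $G$, or better, a Binomial inverse-moment argument. Summing over cells should give the constant $(n^2p_*\epsilon^2)^{-1}$; that constant matches roughly $\sum_s p_s/(n\cdot np_s\epsilon^2)$ with $p_s\ge p_*$, so I expect it to come from $N_s(1-e_{as})/(n^2e_{as}\,N_{as})$ after pairing the factor $1/N_{as}$ against $N_s e_{as}$.

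On $G^c$, both $|S_n|$ and $|T_P|$ are bounded by $H$-type constants, so $(S_n-T_P)^2\le (H+1)^2$. The Chernoff bound $P(N_{as}<nq_*/2)\le e^{-nq_*/8}$, together with a union bound over the $2K$ cells, gives the second term of $A_n$.

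\textbf{Upper bound \eqref{eq:lambda-upper}.} Use $g=\lambda S_n+(1-\lambda)\bar\theta$, where $\bar\theta$ is a suitable observable estimator, for example $S_n$ itself. Then
\[
g-T_{\lambda,P}=\lambda(S_n-T_P)+(1-\lambda)(S_n-\theta).
\]
Write $S_n-\theta=(S_n-T_P)+\Pn\psi_P$ and apply $(x+y)^2\le 2x^2+2y^2$ twice. This gives $2\lambda^2A_n+4(1-\lambda)^2A_n+4(1-\lambda)^2V/n$. The bound $V\le H^2$ (or $(2H)^2$ from Lemma~\ref{lem:score}) then produces $16H^2(1-\lambda)^2/n$. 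Checking which cruder split yields exactly these coefficients is routine.

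\textbf{Lower bound \eqref{eq:label-lower}.} This is the main obstacle. I plan to use a Bayes-risk argument on the submodel with one stratum (or identical strata), $e=1/2$, and binary outcomes with independent uniform priors $m_1,m_0\sim U[0,1]$. The minimax risk is at least the Bayes risk, which equals $\E\Var(T_{\lambda}\mid D_n)$.

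With $e=1/2$, we have $\phi=m_1-m_0+2A(Y-m_1)-2(1-A)(Y-m_0)$. Therefore
\[
T_\lambda=c_1m_1-c_0m_0+\text{observed},\qquad c_1=1-2\lambda N_{\rm tr}/n,\quad c_0=1-2\lambda(n-N_{\rm tr})/n.
\]
Under the Beta posteriors, $\Var(m_1\mid D)=\frac{(a+1)(b+1)}{(N+2)^2(N+3)}$. Its expectation over the Beta-Binomial marginal (where the success count is uniform on $\{0,\ldots,N\}$) is exactly $1/(6(N+2))$. Keeping only the $m_1$ term yields $r_{n,\lambda}$.

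The term $2\lambda^2/(3n(n+2))$ should come from the other treatment arm, or from the design symmetry in which $c_1$ and $c_0$ cannot both vanish. I would identify this term by evaluating the sum of the two arm contributions near $N_{\rm tr}\approx n/2\lambda$.

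Jensen's inequality applied to $(1-2\lambda N/n)^2/(N+2)$ is not convex-friendly. Instead, I would expand $\E[(1-2\lambda N/n)^2]=(1-\lambda)^2+\lambda^2/n$ and use $N+2\le n+2$ to obtain the displayed inequality. Finally, the law of large numbers with $N_{\rm tr}/n\to1/2$ and bounded convergence gives $nr\to(1-\lambda)^2/3$.

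Combining the upper and lower bounds yields $\mathcal R_{n,\lambda}\asymp n^{-2}+(1-\lambda)^2/n$, and hence the stated window $1-\lambda_n=O(n^{-1/2})$.
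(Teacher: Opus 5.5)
Your comparator bound and your $\lambda$-path upper bound follow the paper's argument. Your two-arm uniform-prior model does give $\mathcal R_{n,\lambda}\ge r_{n,\lambda}$: the posteriors of $m_1$ and $m_0$ are independent, so dropping the nonnegative control-arm term is legitimate (the paper instead fixes $m_0=1/2$). The genuine gap is the first term of the maximum in \eqref{eq:label-lower}, $2\lambda^2/\{3n(n+2)\}$. You give no argument for it, and the route you suggest cannot produce it. In your model the Bayes risk is exactly $\E[c_1^2/\{6(N+2)\}]+\E[c_0^2/\{6(n-N+2)\}]=2r_{n,\lambda}$, by the symmetry $N\mapsto n-N$. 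At $n=1$, $\lambda=1$ this equals $1/12+1/18=5/36$, strictly below $2/9$. So no closer look at the two arm contributions \emph{near $N\approx n/(2\lambda)$} will deliver the stated constant for every $n$.

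The missing idea is a separate zero-effect submodel in which the label depends on the unknown only through an imbalance-times-nuisance product. Take $A\sim\operatorname{Bernoulli}(1/2)$ and $Y\sim\operatorname{Bernoulli}(\mu)$ independent of $A$, with $\mu$ uniform on $[0,1]$. Then $\theta=0$ and, with $S_i=2A_i-1$, $T_{\lambda,\mu}=\lambda\{\frac2n\sum_iS_iY_i-2\bar S\mu\}$. The posterior of $\mu$ pools all $n$ outcomes, so $\E\Var(\mu\mid Y_1,\ldots,Y_n)=1/\{6(n+2)\}$. Because the signs are independent of $(\mu,Y)$, the Bayes risk is $4\lambda^2\E\bar S^2/\{6(n+2)\}=2\lambda^2/\{3n(n+2)\}$. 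The order statement $\mathcal R_{n,\lambda}\asymp n^{-2}+(1-\lambda)^2/n$ already follows from your $r_{n,\lambda}$ bound, since $(1-\lambda)^2+\lambda^2/n\ge1/(n+1)$; what remains unproved is the displayed inequality itself.

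Two smaller repairs are needed. First, the integrand $n(1-2\lambda N/n)^2/\{6(N+2)\}$ equals $n/12$ at $N=0$, so \emph{bounded convergence} needs a truncation. On $\{N\ge n/4\}$ the integrand is at most $2/3$, and $P(N<n/4)\le e^{-n/16}$ controls the rest. Second, in the comparator bound the constant does not come from pairing $1/N_{as}$ with $N_se_{as}$. That heuristic is not rigorous, because $1/N_{as}$ and the squared design factor are dependent. The constant instead comes from $1/N_{as}\le2/(nq_*)$ on $G_n$, $\sigma^2_{as}\le1/4$, $\E(N_s-N_{as}/e_{as})^2=np_s(1-e_{as})/e_{as}\le np_s/\epsilon$, $\sum_{s,a}p_s=2$, and $q_*=p_*\epsilon$.
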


Allocation imbalance multiplies centered cell-mean error, while rare-cell failures have exponentially small cost (Appendix~\ref{app:learnability}; Figure~\ref{fig:theory-map}). Deployment ATE risk remains $n^{-1}$.

\begin{figure}[t]
\centering
\includegraphics[width=.94\linewidth]{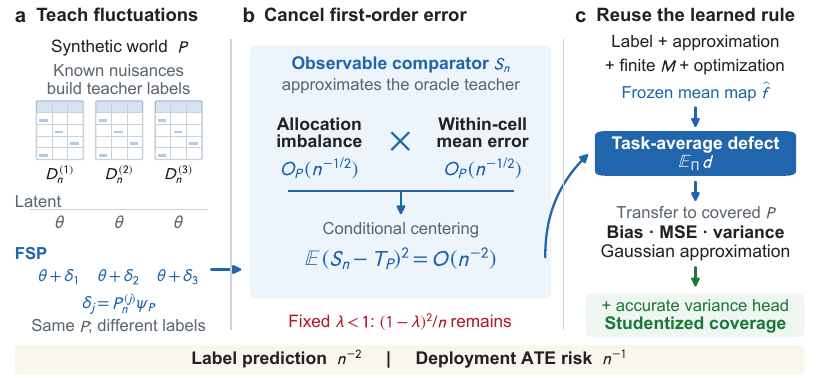}
\caption{\textbf{Second-order label learnability supports first-order inference.} (a) FSP labels follow each realized synthetic table. (b) Allocation imbalance multiplies centered cell-mean error; Theorem~\ref{thm:learnability} also controls rare cells. (c) Training controls the task average of $d(P)=\defect(P;\widehat f)$; mechanism transfer and variance-head accuracy yield fixed-population inference. This schematic fixes $K,p_*,\epsilon$ and distinguishes label-prediction risk from deployment ATE risk.}\label{fig:theory-map}
\end{figure}

\subsection{Finite pretraining implies native inference (Figure~\ref{fig:theory-map}-c)}
To turn this comparator into inference, finite pretraining must make the defect vanish (Figure~\ref{fig:theory-map}-c). Write $N_\xi=\mathcal N_\infty(\xi,\F)$ for the uniform $\xi$-cover of $\F$ over observed tables (Appendix~\ref{app:training}).
\begin{theorem}[Finite synthetic pretraining to causal inference]\label{thm:main}\label{thm:inference}
Suppose $\Pi(\PP_{K,p_*,\epsilon})=1$, $\F$ maps tables to $[-H,H]$, $N_\xi<\infty$, some $f_0\in\F$ has $\|f_0-S_n\|_\infty\le a_n$, and $\widehat f$ minimizes~\eqref{eq:loss} at $\lambda=1$ within $\eta_M$. Fix $\delta\in(0,1)$.

\emph{(i) Training.} With probability at least $1-\delta$,
\begin{equation}\label{eq:mainbound}
 \begin{aligned}
 \E_\Pi\defect(P;\widehat f)\le B_{M,n}(\xi):={}
 \underbrace{6nA_n}_{\rm label}
 +\underbrace{6na_n^2+24nH\xi}_{\rm approx./covering}
  +\underbrace{32nH^2\log(2N_\xi/\delta)/M}_{\rm pretraining}
 +\underbrace{2n\eta_M}_{\rm optimization}.
 \end{aligned}
\end{equation}
\emph{(ii) Transfer.} On this event, for each $\gamma\in(0,1)$ and task law $\Pi'\ll\Pi$ with $d\Pi'/d\Pi\le C_{\rm sh}$,
\[
 \Pi\{P:\defect(P;\widehat f)\le B_{M,n}(\xi)/\gamma\}\ge1-\gamma,
 \quad
 \Pi'\{P:\defect(P;\widehat f)\le C_{\rm sh}B_{M,n}(\xi)/\gamma\}\ge1-\gamma
\]
and an atom $P^\circ$ of mass $\pi^\circ>0$ satisfies $\defect(P^\circ;\widehat f)\le B_{M,n}(\xi)/\pi^\circ$.

\emph{(iii) Native sampling.} Fix $P$ and the trained maps; put $d=\defect(P;\widehat f)$, $\theta=\theta(P)$, $V=V(P)$, $\widehat\theta=\widehat f(D_n)$, and $\widehat V=\widehat v(D_n)$. Then
\begin{align}
 |\E_{P^n}\widehat\theta-\theta|\le\sqrt{d/n},~
 \left|\sqrt{n\E_{P^n}(\widehat\theta-\theta)^2}-\sqrt V\right|&\le\sqrt d,~
 \left|\sqrt{n\Var_{P^n}(\widehat\theta)}-\sqrt V\right|\le\sqrt d.
\end{align}
\emph{(iv) Distribution and coverage.} Let $d_{\rm K}$ be Kolmogorov distance to $N(0,1)$, $\rho(P)=\E_P|\psi_P|^3/V^{3/2}$, and $C_{\rm BE}$ a universal Berry--Esseen constant. If $V>0$, for $t>0$,
\begin{equation}\label{eq:be}
 d_{\rm K}\!\left(\frac{\sqrt n(\widehat\theta-\theta)}{\sqrt V},N(0,1)\right)
 \le C_{\rm BE}\rho(P)/\sqrt n+t/\sqrt{2\pi}+d/(Vt^2).
\end{equation}
If $\widehat V>0$ almost surely, set $e_V=\E_{P^n}(\widehat V/V-1)^2$. For $r\in(0,1/2)$,
\begin{equation}\label{eq:studentized-decomp}
 d_{\rm K}\!\left(\frac{\sqrt n(\widehat\theta-\theta)}{\sqrt{\widehat V}},N(0,1)\right)
 \le C_{\rm BE}\rho(P)/\sqrt n+t/\sqrt{2\pi}+d/(Vt^2)+r+e_V/r^2.
\end{equation}
Twice this bound controls the coverage error of~\eqref{eq:deployed}, without assuming independence of the heads.
\end{theorem}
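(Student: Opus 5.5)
Part (i) is an oracle inequality for empirical risk minimization with labels $T_P(D_n)$. Since labels satisfy $|T_P|\le H$ by Lemma~\ref{lem:score}, each loss is bounded by $4H^2$. For every $f\in\F$, write the excess risk $\risk_\Pi(f)$ and its empirical counterpart $\widehat\risk_{1,\Pi}(f)$. First pass to a finite $\xi$-cover $\{f_1,\dots,f_{N_\xi}\}$. Replacing $f$ by a cover element changes each squared loss by at most $|f-f_k|\cdot|f+f_k-2T|\le 4H\xi$, which produces the $24nH\xi$ term after the constants below are collected. On the finite cover, apply Hoeffding (or Bernstein) with a union bound over $N_\xi$ elements plus the single fixed $f_0$. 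This gives
\[
 |\widehat\risk_{1,\Pi}(f_k)-\risk_\Pi(f_k)|\le 4H^2\sqrt{\log(2N_\xi/\delta)/(2M)}
\]
simultaneously. To obtain the stated fast rate $\log(2N_\xi/\delta)/M$ rather than its square root, I would use a Bernstein-type argument on the centered excess loss $\ell_f-\ell_{f_0}$. Its variance is bounded by $4H^2\,\E(f-f_0)^2\le 8H^2\{\risk_\Pi(f)+\risk_\Pi(f_0)\}$. Combining this with $xy\le x^2/2+y^2/2$ yields an inequality of the form $\risk_\Pi(\widehat f)\le 3\risk_\Pi(f_0)+O(H\xi)+32H^2\log(2N_\xi/\delta)/M+2\eta_M$, and the constants $6,6,24,32,2$ should fall out of this absorption. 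Finally, $\risk_\Pi(f_0)\le 2\E(f_0-S_n)^2+2\E(S_n-T_P)^2\le 2a_n^2+2A_n$ by Theorem~\ref{thm:learnability}. Multiplying by $n$ and using $\E_\Pi\defect=n\risk_\Pi$ gives $B_{M,n}(\xi)$. I expect this fast-rate absorption to be the main obstacle. The plan is to first try the standard localized Bernstein bound for squared loss with bounded, possibly misspecified classes, accepting the multiplicative constant on $\risk_\Pi(f_0)$.

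Part (ii) is Markov's inequality applied to the nonnegative function $P\mapsto\defect(P;\widehat f)$ under $\Pi$. For $\Pi'$, use $\E_{\Pi'}\defect\le C_{\rm sh}\E_\Pi\defect$ and apply Markov again. For an atom $P^\circ$, use $\pi^\circ\defect(P^\circ)\le\E_\Pi\defect$.

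Part (iii) uses the decomposition $\widehat\theta-\theta=\Pn\psi_P+R$ with $R=\widehat f-T_P$ and $\E R^2=d/n$, together with the moment identities for $T_P$. The bias bound follows from $\E\Pn\psi_P=0$ and Jensen's inequality. The RMSE bound follows from Minkowski's inequality in $L_2(P^n)$, since $\|\Pn\psi_P\|_2=\sqrt{V/n}$. The variance bound applies Minkowski to the centered versions, using $\|R-\E R\|_2\le\|R\|_2$.

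Part (iv) writes $\sqrt n(\widehat\theta-\theta)/\sqrt V=W+U$ with $W=\sqrt n\Pn\psi_P/\sqrt V$ and $\E U^2=d/V$. Berry–Esseen bounds $d_K(W,N(0,1))$ by $C_{\rm BE}\rho/\sqrt n$. For the perturbation, use $\{W+U\le x\}\subset\{W\le x+t\}\cup\{|U|>t\}$ and the reverse inclusion. This gives Chebyshev mass $d/(Vt^2)$ plus the normal anti-concentration term $t/\sqrt{2\pi}$. For studentization, write $G\sqrt{V/\widehat V}$, where $G$ denotes the non-studentized statistic just treated. On the event $|\widehat V/V-1|\le r$, the ratio $\sqrt{V/\widehat V}$ lies in an interval whose effect on normal quantiles is bounded by $r$ through the scale Lipschitz property of the Gaussian CDF (this uses $r<1/2$). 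The complementary event has probability at most $e_V/r^2$ by Chebyshev. This argument needs no independence between $\widehat V$ and $\widehat\theta$. Coverage error is at most twice the Kolmogorov distance because the interval event is a difference of two CDF evaluations.
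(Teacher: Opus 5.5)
Parts (ii)--(iv) of your proposal follow the paper's argument essentially verbatim: Markov, the density-ratio bound and the atom bound for transfer; $L_2$ triangle inequalities on $\widehat\theta-\theta=\Pn\psi_P+R$ and its centered version; Berry--Esseen plus the two-sided perturbation with mass $d/(Vt^2)$ and anti-concentration $t/\sqrt{2\pi}$; and bracketing of the studentized distribution function on $\{|\widehat V/V-1|\le r\}$ via the scale-Lipschitz bound for $\Phi$.

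Part (i) takes a different route, and one step in it is wrong as written. The paper's Lemma~\ref{lem:erm} applies Bernstein to the raw nonnegative loss. It uses $\Var\ell_g\le\E\ell_g^2\le LR_g$ to get $|\widehat R_g-R_g|\le R_g/2+5Lt/(3m)$ uniformly over the net, then compares the selected net point with the best net point. This requires moving $f_0$ into the net, which is where $24H\xi=(3+2+1)\cdot4H\xi$ comes from. The pretraining constant is $4\cdot5L/3=80H^2/3\le32H^2$ with $L=4H^2$.

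You instead control the excess loss $\ell_f-\ell_{f_0}$ against the fixed comparator $f_0$. Your variance step $\Var(\ell_f-\ell_{f_0})\le4H^2\E(f-f_0)^2$ is not valid: $|f+f_0-2T|$ can be as large as $4H$, so the product bound only gives $16H^2\E(f-f_0)^2\le32H^2\{\risk_\Pi(f)+\risk_\Pi(f_0)\}$. To keep the factor $3$ on $\risk_\Pi(f_0)$, which the $6A_n$ and $6a_n^2$ terms require, the absorption must be at weight $1/2$. With the $32H^2$ variance bound it then yields a pretraining constant $224/3>32$, so your chain carried out honestly does not reach the stated constant.

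The endpoint $8H^2\{\risk_\Pi(f)+\risk_\Pi(f_0)\}$ you wrote is nevertheless true, for a different reason. For $a,b\in[0,4H^2]$, $(a-b)^2\le\max(a,b)^2\le4H^2(a+b)$, so in fact $\Var(\ell_f-\ell_{f_0})\le4H^2\{\risk_\Pi(f)+\risk_\Pi(f_0)\}$. Now apply Bernstein with centered range $8H^2$ and a union over the $N_\xi$ net points. For the net point $\widetilde f$ nearest $\widehat f$ you have $\widehat\risk_{1,\Pi}(\widetilde f)\le\widehat\risk_{1,\Pi}(f_0)+\eta_M+4H\xi$, and this gives $\risk_\Pi(\widehat f)\le3\risk_\Pi(f_0)+2\eta_M+12H\xi+(56/3)H^2\log(2N_\xi/\delta)/M$, which implies \eqref{eq:mainbound}. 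So the indispensable ingredient is the same self-bounding property of bounded nonnegative losses that drives Lemma~\ref{lem:erm}.

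What each route buys: yours halves the covering term, since $f_0$ never enters the net. The paper's bounds each raw risk separately, which is what lets the same lemma handle the max-over-mechanisms objective of Theorem~\ref{thm:panels}. There, near-minimality of the maximum does not give $\widehat R_{j\widehat f}\le\widehat R_{jf_0}+\eta$ for each $j$, so a per-mechanism excess-loss comparison with $f_0$ is not directly available.
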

The balanced-panel alternative in Appendix~\ref{app:uniform} gives uniform binary-outcome control with $M=Jm$ episodes, distinct from pooled training. For a Transformer with $W$ parameters in $[-B,B]^W$ and $\sup_D|f_w(D)-f_{w'}(D)|\le L_{\rm tr}\|w-w'\|_\infty$, we achieve $\log N_\xi\le W\log(1+2BL_{\rm tr}/\xi)$.
The sufficient schedule $a_n=o(n^{-1/2})$, $n\xi_n\to0$, $n\eta_{M_n}\to0$ in probability, and $M_n\gg n\{\log N_{\xi_n}+\log(1/\delta_n)\}$ makes~\eqref{eq:mainbound} vanish in probability. With $V$ bounded away from zero, bounded $\rho(P)$, and $e_V\to0$, it yields native coverage on transferred mechanisms. Architecture, variance, and unconditional-training conditions are in Appendices~\ref{app:architecture}--\ref{app:variance}. For binary outcomes, Corollary~\ref{cor:schedule} gives an explicit quantized-attention construction with $M=\lceil n^{3/2}\rceil$; Corollary~\ref{cor:deviation} separates sampling noise from learned error at finite $n$.

\subsection{Two information budgets}

Two pretraining lower bounds and one deployment lower bound separate what synthetic episodes can teach from what only a fresh deployment sample can reveal. The first two isolate finite-$M$ limits from supervision geometry and generic episode-learning complexity, while the third shows that even perfect pretraining cannot remove the $n^{-1}$ information limit of estimating a new population's effect.

\begin{theorem}[Pretraining lower bound under partial fluctuation]\label{thm:meta-lower}
For every fixed $\lambda<1$, the two Gaussian episode laws $Q_{\lambda,\sigma}$ of $(Z,T_\lambda)$ constructed in Appendix~\ref{app:meta-lower}, $\sigma\in\{-1,1\}$, share $P_Z=N(0,1/n)$. A learner $\mathcal A$ maps $M$ independent episodes to $\widehat f_{\mathcal A}$. Then
\begin{equation}\label{eq:meta-lower}
 \inf_{\mathcal A}\sup_{\sigma\in\{-1,1\}}\E_{Q_{\lambda,\sigma}^M}
 \{\risk_{\lambda,\sigma}(\widehat f_{\mathcal A})-\inf_{f\in L_2(P_Z)}\risk_{\lambda,\sigma}(f)\}
 \ge (1-\lambda)^2/(100nM).
\end{equation}
Here $\risk_{\lambda,\sigma}(f)=\E_{Q_{\lambda,\sigma}}\{f(Z)-T_\lambda\}^2$; the outer expectation integrates training randomness. At $\lambda=1$, both optimal maps equal $Z$.
\end{theorem}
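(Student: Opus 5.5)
This is a two-point (Le Cam) lower bound. The construction is not given in the statement, so I would build it as a Gaussian regression experiment in which $Z$ has the same marginal under both hypotheses and only the conditional mean of $T_\lambda$ given $Z$ depends on $\sigma$.

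\textbf{Construction.} Take $Z\sim N(0,1/n)$ and $T_\lambda=\lambda Z+(1-\lambda)\theta$. Put
\[
\theta=\beta_\sigma Z+\xi,\qquad \beta_\sigma=\sigma h,\qquad \xi\sim N(0,\tau^2/n),\qquad \xi\perp Z .
\]
Then $T_\lambda\mid Z\sim N\bigl(c_\sigma Z,\,(1-\lambda)^2\tau^2/n\bigr)$ with $c_\sigma=\lambda+(1-\lambda)\sigma h$. The two optimal maps are $f^*_\sigma(Z)=c_\sigma Z$.

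At $\lambda=1$ both maps reduce to $Z$, which is consistent with the last sentence of the statement. The exact choice in the appendix may differ, but I expect this scaling to reproduce $(1-\lambda)^2/(100nM)$.

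\textbf{Excess risk and separation.} For any $f\in L_2(P_Z)$,
\[
\risk_{\lambda,\sigma}(f)-\inf\risk_{\lambda,\sigma}=\|f-f^*_\sigma\|^2_{L_2(P_Z)} .
\]
The two optimal maps are far apart:
\[
\|f^*_1-f^*_{-1}\|^2=(2(1-\lambda)h)^2/n .
\]
By the triangle inequality and $(a+b)^2\le 2a^2+2b^2$, every $f$ satisfies
\[
\sum_\sigma\|f-f^*_\sigma\|^2\ge 2(1-\lambda)^2h^2/n .
\]
The standard two-point reduction (Le Cam) then gives
\[
\inf_{\mathcal A}\max_\sigma\E\{\text{excess}\}\ge \frac{(1-\lambda)^2h^2}{n}\cdot\frac{1-\TV(Q^M_{\lambda,1},Q^M_{\lambda,-1})}{2}.
\]

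\textbf{Bounding the total variation.} This is the step that fixes the constants. I would use Pinsker, or the Bretagnolle--Huber bound $1-\TV\ge \tfrac12 e^{-\KL}$, together with additivity of KL over the $M$ independent episodes. Per episode, the marginals of $Z$ agree, so only the conditional Gaussians contribute:
\[
\KL=\E_Z\frac{\bigl((c_1-c_{-1})Z\bigr)^2}{2(1-\lambda)^2\tau^2/n}=\frac{2h^2}{\tau^2 n}.
\]
The factor $(1-\lambda)^2$ cancels, which is the key feature of the construction. Choose $h^2=\tau^2/(4M)$, so that $M\KL=1/(2n)\le 1/2$. With $\tau=1$, Bretagnolle--Huber gives $1-\TV\ge \tfrac12 e^{-1/2}\approx 0.30$. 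The bound becomes
\[
\frac{(1-\lambda)^2}{4nM}\cdot 0.15\approx \frac{(1-\lambda)^2}{27nM}\ge \frac{(1-\lambda)^2}{100nM}.
\]

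\textbf{Remaining checks and main obstacle.} I need to confirm that the infimum over $L_2(P_Z)$ equals $\risk(f^*_\sigma)$, which follows from the conditional-expectation decomposition. I also need to confirm that randomized learners are covered by the same reduction. The main obstacle is the bookkeeping that keeps the $(1-\lambda)^2$ factor intact in the separation while it cancels in the KL divergence. Getting this right requires the noise variance of $T_\lambda$ given $Z$ to scale as $(1-\lambda)^2$, and that scaling is exactly what the latent $\xi$ provides.
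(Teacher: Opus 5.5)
Your argument is correct for the pair of laws you build, and the constant survives. Your construction and your information bound both differ from the paper's.

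\textbf{A harmless slip.} The per-episode divergence is $\E_Z\{(c_1-c_{-1})^2Z^2\}/\{2(1-\lambda)^2\tau^2/n\}=2h^2/\tau^2$, not $2h^2/(\tau^2n)$. The factor $\E Z^2=1/n$ cancels the $1/n$ in the conditional variance. With $h^2=\tau^2/(4M)$ you get $M\,\KL=1/2$ exactly. Your Bretagnolle--Huber step (or Pinsker, which gives $\TV\le1/2$) and the final $\approx(1-\lambda)^2/(26nM)$ are therefore unaffected. Your two-point step is also conservative by a factor $2$, which does no harm.

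\textbf{The paper's route.} The paper uses a genuine location experiment:
\begin{itemize}
\item $\theta\sim N(0,s_\sigma^2)$ and $Z=\theta+\varepsilon$, with $\varepsilon\sim N(0,r_\sigma^2)$ independent of $\theta$;
\item $s_\sigma^2=\tau^2(1+\sigma\delta)/2$, $r_\sigma^2=\tau^2(1-\sigma\delta)/2$, $\tau^2=1/n$, and $\delta=(4\sqrt M)^{-1}$.
\end{itemize}
So $\sigma$ only shifts a little variance between prior and noise, which moves the shrinkage slope to $(1\pm\delta)/2$. The paper then lets the learner see the more informative latent pairs $(\theta,\varepsilon)$, whose law does not involve $\lambda$ at all. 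It computes the Hellinger affinity $(1-\delta^2)^M\ge15/16$, converts it to an overlap $\alpha_M\ge3/4$ by Cauchy--Schwarz, and ends with constant $3/256$.

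\textbf{Comparison.} You compute the divergence of the observed $(Z,T_\lambda)$ laws directly. You rely on the conditional variance scaling as $(1-\lambda)^2$ to cancel the $\lambda$-dependence. This is more direct and gives a better constant. The paper's latent-pair device instead makes $\lambda$-independence automatic by data processing.

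\textbf{What your construction gives up.} The paper's construction buys an interpretation: each $Q_{\lambda,\sigma}$ is an instance of the Gaussian experiment of Proposition~\ref{prop:gauss}. There $Z-\theta$ really is the fluctuation being partially injected, and the only unknown is the prior-to-noise ratio. Yours is not such an instance. With base slope $0$ and $\tau=1$, you have $\Cov(\theta,Z)=\sigma h/n\neq(h^2+1)/n=\Var(\theta)$. So in neither world is $Z-\theta$ independent of $\theta$. In the $\sigma=-1$ world, $\theta$ is even negatively correlated with $Z$, so $Z$ is not a noisy measurement of $\theta$ there.

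\textbf{The fix.} If you want the bound to speak about that experiment, recentre the slope: take $\beta_\sigma=1/2+\sigma h$ and $\tau^2=\beta_\sigma(1-\beta_\sigma)=1/4-h^2$. This gives $\Cov(\theta,Z)=\Var(\theta)$ and equal conditional variances in both worlds. With $h=\delta/2$ it reproduces exactly the paper's pair, and your divergence computation then goes through verbatim.
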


Theorem~\ref{thm:meta-lower} isolates the geometry-specific cost of partial fluctuation; we next separate this effect from the generic complexity cost of learning among many candidate episode predictors.

\begin{theorem}[Finite-dictionary pretraining lower bound]\label{thm:dictionary-lower}
For $N\ge2$, $M\ge1$, and $d_N=\lfloor\log_2N\rfloor$, there are a dictionary $\F_N$ of at most $N$ bounded episode predictors and laws $\{Q_\sigma:\sigma\in\{-1,1\}^{d_N}\}$ for $(J,L)$ such that every possibly randomized learner trained on $M$ episodes obeys
\begin{equation}\label{eq:dictionary-lower}
 \inf_{\mathcal A}\sup_{\sigma\in\{-1,1\}^{d_N}}
 \E_{Q_\sigma^M}\!\left[R_\sigma(\widehat f_{\mathcal A})-\inf_fR_\sigma(f)\right]
 \ge\frac1{32}\min\!\left\{1,\frac{d_N}{M}\right\}.
\end{equation}
Here $R_\sigma(f)=\E_{Q_\sigma}\{f(J)-L\}^2$ and the inner infimum ranges over measurable predictors. This generic $\min\{1,\log N/M\}$ episode rate matches the dictionary-size dependence in~\eqref{eq:mainbound}; causal sampling defect additionally depends on teacher geometry.
\end{theorem}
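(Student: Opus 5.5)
The plan is a standard Assouad-type construction on a hypercube of episode laws, with the dictionary built from indicators of coordinate agreement. Set $d=d_N\ge1$ and $\beta=\min\{1,\sqrt{d/M}\}/2$ (the constant will be tuned at the end).

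\textbf{Construction.} Let $J$ be uniform on $\{1,\ldots,d\}$. Given $J=k$, let the label be $L=\sigma_k\cdot\xi$, where $\xi\in\{-1,1\}$ has $P(\xi=1)=(1+\beta)/2$. Then
\[
\E_{Q_\sigma}(L\mid J=k)=\sigma_k\beta,
\]
so the Bayes predictor is $f_\sigma(k)=\sigma_k\beta$. For any predictor $f$,
\[
R_\sigma(f)-\inf_g R_\sigma(g)=\frac1d\sum_k\{f(k)-\sigma_k\beta\}^2.
\]
Take the dictionary $\F_N=\{f_\tau:\tau\in\{-1,1\}^d\}$. It has $2^d\le N$ bounded elements, so it contains every Bayes predictor. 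The theorem allows the inner infimum over all measurable predictors, which is consistent with this choice.

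\textbf{Reduction to coordinate testing.} For any learner output $\widehat f$, define $\widehat\tau_k=\operatorname{sign}\widehat f(k)$. On the event $\widehat\tau_k\ne\sigma_k$ we have $\{\widehat f(k)-\sigma_k\beta\}^2\ge\beta^2$. Hence the excess risk is at least
\[
\frac{\beta^2}{d}\sum_k\ind\{\widehat\tau_k\ne\sigma_k\}.
\]
Randomized learners need no separate treatment: we average over the learner's internal randomness, and the bound is linear in indicators. Assouad's lemma then gives
\[
\sup_\sigma\E\ge\frac{\beta^2}{d}\cdot\frac d2\Bigl(1-\max_{\sigma\sim\sigma'}\TV(Q_\sigma^M,Q_{\sigma'}^M)\Bigr),
\]
where $\sigma\sim\sigma'$ means the two vectors differ in a single coordinate.

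\textbf{Information bound (the main obstacle).} For neighbours $\sigma\sim\sigma'$ differing in coordinate $k$, the two episode laws agree except when $J=k$, which has probability $1/d$. On that event the label distribution moves between the Rademacher laws with means $\pm\beta$, with $\KL\le c\beta^2$ for $\beta\le1/2$ ($c\approx 3$). By the chain rule,
\[
\KL(Q_\sigma^M\,\|\,Q_{\sigma'}^M)\le Mc\beta^2/d,
\]
and Pinsker gives
\[
\TV\le\sqrt{Mc\beta^2/(2d)}.
\]

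\textbf{Choosing $\beta$.} Take $\beta^2=\min\{1/4,\;d/(2cM)\}$. Then $\TV\le1/2$ and the risk is at least $\beta^2/4$. In the regime $M\ge d$ this is of order $d/M$. In the regime $M<d$, $\beta^2=1/4$ and the risk is order $1$. Matching the constant $1/32$ needs care with $c$. One route is to use the $\chi^2$ bound or the Bretagnolle--Huber inequality instead of Pinsker, to tighten constants. If they still fall short, replace $\beta$ by an explicit value such as $\beta^2=\min\{1/4,d/(8M)\}$ and verify $1/32$ numerically in both regimes.

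Finally, $d_N=\lfloor\log_2N\rfloor\ge\log_2N-1\gtrsim\log N$ for $N\ge2$. This yields the stated $\min\{1,\log N/M\}$ interpretation.
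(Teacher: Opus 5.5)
Your proposal is correct and follows essentially the same route as the paper. Both use the same Bernoulli hypercube $Q_\sigma(L=1\mid J=k)=(1+\beta\sigma_k)/2$ with $J$ uniform on $\{1,\ldots,d\}$, the same dictionary of $2^{d}\le N$ Bayes predictors, sign-thresholding of $\widehat f$ followed by Assouad-type pairwise testing, and a KL--Pinsker bound between neighbouring cube vertices.

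Your hesitation about the constant is unnecessary. Your fallback $\beta^2=\min\{1/4,d/(8M)\}$ is exactly the paper's choice. Using $\log\{(1+x)/(1-x)\}\le 4x$ on $[0,1/2]$, Pinsker already gives $\TV\le 1/2$, so the bound $\beta^2/4=\min\{1/16,d/(32M)\}\ge\frac1{32}\min\{1,d/M\}$ follows directly, without Bretagnolle--Huber or numerical checks. Your $c\approx3$ version also works.
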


The first two results delimit what finite synthetic pretraining can learn; we now ask what uncertainty remains even if the reusable rule were learned perfectly.

\begin{theorem}[Deployment-sample lower bound]\label{thm:minimax}
For every $n\ge1$, even with auxiliary pretraining randomness $U\perp D_n$ whose law is identical under every deployment mechanism,
\begin{equation}\label{eq:ate-lower}
 \inf_{\widetilde\theta(U,D_n)}\sup_{P\in\PP_{K,p_*,\epsilon}}
 \E_{P^n,U}(\widetilde\theta-\theta(P))^2\ge3/(512n).
\end{equation}
The infimum includes randomized estimators. On the least-favorable path $(P_t)$ through any interior binary-outcome $P$ with $V(P)>0$ (Appendix~\ref{app:minimax}),
\[
 \lim_{c\to\infty}\liminf_{n\to\infty}\inf_{\widetilde\theta_n(U_n,D_n)}
 \sup_{|u|\le c}n\E_{P_{u/\sqrt n}^n,U_n}(\widetilde\theta_n-\theta(P_{u/\sqrt n}))^2\ge V(P).
\]
Here $U_n\perp D_n$ has a law independent of $u$. Under Theorem~\ref{thm:panels}'s coverage schedule, a panel estimator chosen independently of $c$ satisfies, for every fixed $c<\infty$,
\[
 \sup_{|u|\le c}\left|n\E_{P_{u/\sqrt n}^n,\mathrm{train}}
 (\widehat f_n(D_n)-\theta(P_{u/\sqrt n}))^2-V(P)\right|\longrightarrow0,
\]
attaining the local bound.
\end{theorem}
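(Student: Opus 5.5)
The theorem has three parts: a global minimax bound of order $1/n$, a local asymptotic minimax (LAM) bound equal to $V(P)$, and attainment of that local bound by the panel estimator.

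\textbf{Step 1: global bound $3/(512n)$.} We reduce to a two-point Le Cam argument, first conditioning on $U$. Because $U\perp D_n$ and the law of $U$ does not depend on the mechanism, a randomized estimator $\widetilde\theta(U,D_n)$ is a mixture over $u$ of deterministic estimators $\widetilde\theta(u,\cdot)$. The two-point risk bound holds for each fixed $u$ and is preserved under averaging over $u$.

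We then pick two binary-outcome laws $P_0,P_1\in\PP^{\rm bin}_{K,p_*,\epsilon}$ that share $p$ and $e$ (take $p_s=1/K$, $e_s=1/2$) and differ only in $m_{11}$:
\[
m_{11}=\tfrac12 \text{ versus } m_{11}=\tfrac12+h, \qquad h=c/\sqrt n .
\]
Then
\[
|\theta(P_1)-\theta(P_0)|=p_1h \quad\text{and}\quad \KL(P_1^n\|P_0^n)=n\,p_1e_1\,\KL(\mathrm{Ber}(1/2+h)\|\mathrm{Ber}(1/2))\le C nh^2 p_1 .
\]
Le Cam's inequality gives the lower bound
\[
\tfrac14(\Delta\theta)^2\,\{1-\TV\}, \qquad \TV\le\sqrt{\KL/2}\quad\text{(Pinsker)} .
\]
To reach the stated constant we can instead perturb all strata simultaneously: shift $m_{1s}$ by $h$ in every stratum, so $\Delta\theta=h$ and the KL is about $2nh^2$. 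Tuning $h^2\asymp1/n$ and tracking constants should give $3/(512n)$. The constraint $Y\in[0,1]$ only requires $h\le1/2$, which holds for $n\ge1$ with a small enough $c$; for small $n$ we cap $h$.

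\textbf{Step 2: local bound $V(P)$.} We build the least-favorable path through an interior binary-outcome $P$ as $dP_t=(1+t\,\psi_P/\!\sup|\psi_P|)\,dP$ near zero. A cleaner choice is to tilt the conditional laws within the model so that the score is $\psi_P$: tilt $p_s$, $e_s$ and $m_{as}$ by score components matching the three orthogonal pieces of $\psi_P$. The path stays in the class for small $t$ because $P$ is interior, and $\frac{d}{dt}\theta(P_t)|_0=\E_P\psi_P^2=V$.

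The experiments $P^n_{u/\sqrt n}$ are then LAN with Fisher information $V$, and the local parameter is $\theta(P)+uV/\sqrt n+o(n^{-1/2})$. The randomization $U_n$ is handled by the same mixture argument as in Step 1, because its law does not depend on $u$. The Hájek–Le Cam local asymptotic minimax theorem, in its squared-loss form with the $\sup_{|u|\le c}$ then $c\to\infty$ limit, gives the bound
\[
(\text{derivative})^2/\text{information}=V^2/V=V .
\]
If one wants to avoid LAN machinery, a van Trees inequality with a smooth prior on $[-c,c]$ gives
\[
n\,\E(\widetilde\theta-\theta)^2\ \ge\ \frac{\left(\int\theta'\pi\right)^2 n}{nV+J(\pi)/c^2}\ \longrightarrow\ V
\]
after sending $n\to\infty$ and then $c\to\infty$. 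I would use van Trees, since it handles the randomization directly.

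\textbf{Step 3: attainment.} Under the coverage schedule of Theorem~\ref{thm:panels}, the panel estimator satisfies a uniform defect bound $\sup_P\defect(P;\widehat f_n)\to0$ in probability over the binary subclass, which includes every $P_{u/\sqrt n}$ for large $n$. The training-averaged version follows from boundedness by $H$ together with the small failure probability.

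By Theorem~\ref{thm:main}(iii), $\bigl|\sqrt{n\,\E(\widehat f_n-\theta)^2}-\sqrt{V(P_t)}\bigr|\le\sqrt d$. Since $V(P_t)\to V(P)$ uniformly over $|u|\le c$ by continuity of $V$ in $(p,e,m)$, the supremum difference vanishes.

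\textbf{Main obstacle.} The hardest part is twofold. First, the path construction must be a genuine submodel of the binary class with score exactly $\psi_P$, so that the bound equals $V$ sharply. Second, Step 3 must average the defect bound over the training randomness, not just control it on a high-probability event; I would handle the complement event by $(2H)^2\delta_n n\to0$, which requires $\delta_n=o(1/n)$ in the schedule.
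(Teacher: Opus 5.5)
Your proposal is correct and follows essentially the paper's route. It uses a two-point Le Cam/Pinsker argument that shifts the treated-arm outcome mean in every stratum; the paper uses the symmetric $1/2\pm h$ with $h=1/(8\sqrt n)$, and your one-sided version with, e.g., $h=1/(2\sqrt n)$ already gives at least $1/(64n)\ge 3/(512n)$. It then applies van Trees along the linear tilt $q(o)\{1+t\psi_P(o)/V\}$, which is automatically a genuine submodel in the interior finite-categorical model, so your ``main obstacle'' dissolves. Attainment goes through Theorem~\ref{thm:panels}, Theorem~\ref{thm:main}(iii), continuity of $V$, and $\delta_n=o(n^{-1})$.

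There is one slip to fix in your van Trees display. The prior rescaled to $[-c/\sqrt n,c/\sqrt n]$ has information $nJ(\pi)/c^2$, not $J(\pi)/c^2$. After multiplying by $n$, the bound is therefore $\{\int\theta'\pi\}^2/\{\int I\pi+J(\pi)/c^2\}\to V^2/\{V+J(\pi)/c^2\}$, which reaches $V$ only through the outer $c\to\infty$ limit that you do take.
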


Together, these lower bounds separate the cost of learning the reusable rule from the irreducible cost of learning the deployment population itself. Pretraining learns the rule; the fresh table supplies population information. Continuous confounders, uniform panels, and approximate simulator nuisances are treated in Theorem~\ref{thm:continuous}, Theorem~\ref{thm:panels}, and Proposition~\ref{prop:teachererror}.

\section{Experiments: from supervision to sampling behavior}\label{sec:experiments}
We evaluate FSP on common deployment tables against matched latent-effect-supervision ablations, the released CausalPFN-S checkpoint~\citep{causalpfn}, S/T/X/DR learners~\citep{kunzel2019,kennedy2023dr}, cross-fitted AIPW/DML~\citep{dml}, and an out-of-domain Do-PFN check~\citep{dopfn}. \emph{Raw latent-effect} denotes the Raw-FSP network trained instead on $T_{0,P}(D_n)=\theta(P)$, with the same architecture, $M=32768$ training tables, optimizer and five seeds. Each route otherwise enters in its native form: CausalPFN-S contributes its official checkpoint and classical estimators refit per table. Thus cross-method figures measure end-to-end quality and deployment cost, while the matched raw pair isolates the supervision target. Appendix~\ref{app:experiments} gives protocols and the claim-to-evidence map.

\smallhead{The label is the intervention.}
Holding generator, network, optimizer and budget fixed, the seven-point $\lambda$ sweep at the typical mechanism and $n=256$ increased the sampling-response coefficient from $0.105$ to $0.895$ and, across five $M=8192$ checkpoints, reduced mean full-label defect \textbf{$8.72$-fold}; the $\lambda=1$ checkpoints attained mean native coverage $.952$ (Appendix~\ref{app:lambda-experiment}; Figure~\ref{fig:lambda-path}). The Gaussian panels locate the mechanism: shrinkage improves point risk near the prior center but attenuates shifted effects. As predicted, \textbf{effect shift most strongly amplifies the FSP--latent-effect contrast}: on the matched raw backbone, it reverses the prior-center RMSE ordering, with FSP lowering shifted RMSE by $54.2\%$ and teacher defect by $99.0\%$, while its response slope remains $.972$ rather than $-.058$ (Figure~\ref{fig:paired-baseline-profiles}). Appendix Figure~\ref{fig:lambda-scaling} reports risk, bias and variance across the full path.

\begin{figure}[t]\centering
\includegraphics[width=.94\linewidth]{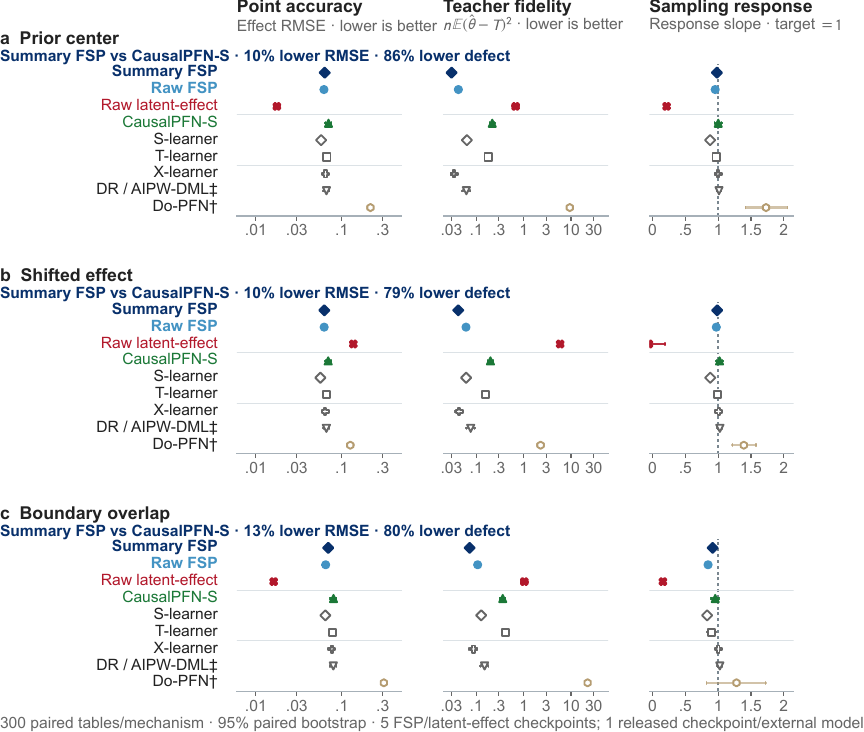}
\caption{\textbf{Point accuracy and sampling-law fidelity separate.} Within each panel, all methods use the same 300 tables at $n=256$. Blue headlines compare Summary FSP with CausalPFN-S throughout. Raw FSP and Raw latent-effect share architecture, $M=32768$, table streams, optimizer and five seeds, differing only in supervision. The first two axes are logarithmic and lower is better; the dashed response target is one. Classical estimators refit per table; Do-PFN is an out-of-domain check. Appendix~\ref{app:experiments} gives protocols and paired intervals.}\label{fig:paired-baseline-profiles}
\end{figure}

\smallhead{The amortized frontier.}
The four-stratum problem is deliberately well specified for per-table fitting: S-learner has lower point RMSE than FSP, while Summary FSP lowers its teacher defect by $32$--$53\%$ and moves the response closer to one. Against CausalPFN-S, Summary FSP lowers both RMSE ($10$--$13\%$) and defect ($79$--$86\%$) in every mechanism (Figure~\ref{fig:paired-baseline-profiles}). Its deployment gain comes from moving estimator construction offline: four lossless stratum tokens feed one fixed network, whereas classical routes refit table-specific models and the CausalPFN-S pipeline refits a gradient-boosting weak learner per table. Thus $Q$ deployments cost $C_{\mathrm{load}}+Q C_{\mathrm{fwd}}$ for FSP rather than $Q C_{\mathrm{fit}}$. Summary FSP required $.138$ ms per table on one CPU thread---\textbf{$11.6\times$ faster} than S-learner and \textbf{$1{,}491\times$ faster} than CausalPFN-S (Appendix Figure~\ref{fig:main-mechanism}). On the harder nonlinear continuous grid, FSP reduced macro RMSE by \textbf{$7.0\%$} versus S-learner across 24 trained-length cells, winning \textbf{16 cells} and all 12 weak-overlap cells (\textbf{$14.6\%$}; Appendix Figure~\ref{fig:amortization-evidence}).

\smallhead{Coverage needs both centering and scale.}
At the typical mechanism and $n=256$, five-seed $M=8192$ summary/raw FSP had native coverage $.952/.953$ but Kolmogorov distances $.081/.352$: \textbf{similar coverage concealed different distributional shapes}. Under weak overlap, raw FSP's mean $\widehat V/V$ was $.328$ and native/oracle-$V$ coverage was $.706/.998$, isolating scale error. Appendix Figure~\ref{fig:variance-full} and Table~\ref{tab:inference-diagnostics} report the joint bias, scale and shape diagnostics for Theorem~\ref{thm:main}.

\begin{figure}[t]\centering
\includegraphics[width=.94\linewidth]{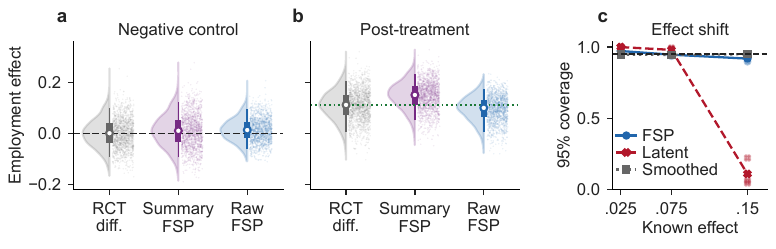}
\caption{\textbf{Frozen transfer trades variance against residual attenuation.} (a--b) National Supported Work at $n=256$: 1,500 re-randomized-null and post-treatment estimates; white marks are medians and thick/thin bars are interquartile/5th--95th percentiles. (c) Known-effect semisynthesis at $n=256$. Appendix~\ref{app:experiments} gives checkpoint budgets and protocols.}\label{fig:main-validation}
\end{figure}

\smallhead{Real populations and known-effect transfer.}
For National Supported Work~\citep{lalonde1986,dehejia1999}, the raw-FSP five-checkpoint ($M=8192$) ensemble reduced post-treatment RMSE from $.0596$ to $.0522$ relative to the arm difference, and pre-treatment null RMSE from $.0570$ to $.0479$ (Figure~\ref{fig:main-validation}). Its post-treatment mean was $.0957$ against the full-sample benchmark $.1106$. At $n=256$ on the independent social-pressure turnout trial~\citep{gerber2008}, five-checkpoint $M=8192$ summary/raw ensembles achieved RMSE $.0571/.0628$ versus $.0775$ for the arm difference, with biases $-.0077/-.0241$. Against these empirical trial contrasts, \textbf{lower ensemble RMSE coexisted with residual attenuation}. Known-effect semisynthesis supplies a complementary coverage test: as the effect increased from $.025$ to $.15$, three-seed $M=32768$ summary-FSP coverage changed from $.972$ to $.919$, while coverage under latent-effect supervision fell from $1.000$ to $.112$.

\smallhead{Ablations, sensitivity and the two data budgets.}
Target and shifted-teacher ablations isolate supervision; native/oracle variance replacement isolates scale learning. Sweeps over $M$, $n$, effect shift, overlap, dimension and smoothness map operating regimes; the $M\times n$ grid and 105,000 Bernoulli-family repetitions probe the two-budget scaling. These checks keep \textbf{training and deployment budgets empirically distinct}. Appendix~\ref{app:audit} gives the proof-to-code map.

\section{Discussion}\label{sec:discussion}
Full fluctuation removes first-order label ambiguity; finite-pretraining bounds quantify when frozen reuse preserves it despite the $n^{-1}$ deployment burden. Effect shift exposes prior-centered shrinkage (Figure~\ref{fig:paired-baseline-profiles}); continuous FSP improves trained weak-overlap risk; frozen reuse advances the accuracy--cost frontier. Together, the synthetic and randomized evidence separates point RMSE from sampling fidelity, makes centering, variation, shape, cost and accuracy joint design criteria, and opens a route from amortized ATE inference to other regular causal targets.

\clearpage
\section*{Reproducibility statement}
The supplement contains complete proofs; simulator, architecture, optimizer, and seed specifications; a compact numerical replay package with inventoried replicate outputs and checkpoints; data provenance; environment locks; executable analysis scripts; and a theorem--evidence map. Released CausalPFN inference is pinned to a repository commit and checkpoint digest, and matched-table identity is audited by reconstructed raw-array hashes. The paired-baseline, randomized-validation, and deployment-cost figures have compact source manifests plus aligned prediction, timing, and render audits. The README distinguishes stored-result replay from regeneration of the full training suite. Lean~4.32 source, a statement-level map for all 21 numbered results, and a kernel-verification record are described in Appendix~\ref{app:audit}.

\section*{AI use statement}
Generative AI systems assisted with conceptual brainstorming, literature discovery, mathematical claim formulation, proof drafting and algebraic checking, synthetic-data and experimental-design iteration, code implementation and debugging, experiment orchestration, result interpretation, figure generation, and manuscript drafting and editing. The authors independently checked every citation against its primary source, reviewed every theorem statement and proof, reran the reported computations from versioned scripts and stored data, inspected the raw outputs behind every numerical claim, and take full responsibility for the paper, code, and artifacts.
\bibliography{references}
\bibliographystyle{plainnat}
\clearpage
\appendix
\section{Related work through three structural distinctions}\label{sec:related}
\smallhead{Predictive targets versus inferential targets.}
PFNs amortize prediction over synthetic data sets \citep{muller2022,hollmann2025}. Their theory links variance to sample sensitivity and bias to localization \citep{nagler2023}, and analyzes optimal in-context adaptation under specified task-distribution shift \citep{ma2025}. CausalPFN amortizes treatment-effect estimation with Bayesian uncertainty, Do-PFN predicts interventional outcomes, and CausalFM constructs SCM-based priors across identification regimes \citep{causalpfn,dopfn,causalfm}. FSP builds on this interface but studies how the synthetic supervision target shapes the first-order sampling behavior of a frozen estimator at a fixed mechanism. Its focus is causal-label learnability and finite-pretraining inference, rather than another causal prior, identification regime, or uncertainty output. The simulation study of \citet{mourao2026} further motivates assessing point-estimation error and interval coverage separately.

\smallhead{Training-side versus deployment-side correction.}
Neyman-orthogonal scores and cross-fitting support debiased estimation with flexible nuisance learners \citep{dml}, while Dragonnet's targeted regularization incorporates treatment-effect structure into neural training on the data set being analyzed \citep{dragonnet}. FSP therefore differs in the unit and purpose of supervision, not simply in acting during training: it labels whole synthetic tables to learn a reusable estimator. A close PFN-specific comparison is \citet{ospc}, who identify prior-induced confounding bias and establish a semiparametric Bernstein--von Mises result for the OSPC ATE posterior under stated concentration conditions. Their MP-OSPC implementation recovers outcome and propensity nuisance posteriors from a PFN through martingale posteriors and applies an EIF-based one-step correction. FSP instead changes supervision before deployment and analyzes the native frozen output, explicitly tracking finite-pretraining, approximation, and mechanism-transfer errors. The distinction is between constructing a corrected posterior from nuisance posteriors and learning the sampling response of a reusable table-to-effect map.

\smallhead{Simulator privilege versus deployment information.}
Simulator-accessible joint scores and joint likelihood ratios already provide privileged labels for likelihood-free inference \citep{brehmer2020}; simulator privilege itself is not our contribution. FSP uses simulator-known causal nuisances for a different target: each synthetic table's efficient fluctuation. WALDO wraps a learned predictor or posterior estimator in simulation-calibrated Neyman inversion to construct confidence regions \citep{waldo}. FSP instead learns the estimator's first-order sampling response and studentizes the native frozen output with a separate variance head. The new results are the finite-stratum label-learnability transition and its connection to finite-pretraining inference. The variance head targets $V(P)=\E_P\psi_P^2$, the coefficient in the sampling variance $V(P)/n$, rather than conditional label uncertainty. Table~\ref{tab:routes} in Appendix~\ref{app:scope} summarizes what is learned once and what a new data set still requires.

\section{Assumptions, probability levels, and implementation contract}\label{app:scope}
The probability structure is
\[
 (P_j,D_j,T_j,V_j)_{j=1}^M
 \longrightarrow(\widehat f,\widehat v)\ \text{(freeze)};
 \qquad P^\star\longrightarrow D_n^{(1)},D_n^{(2)},\ldots
 \longrightarrow
 \bigl(\widehat f(D_n^{(b)}),\widehat v(D_n^{(b)})\bigr).
\]
Here $D_j=D_n^{(j)}$, $T_j=T_{P_j}(D_j)$, and $V_j=V(P_j)$. The first arrow is training-task learning; the second is repeated sampling at one mechanism.

\begin{table}[h]
\centering\scriptsize
\caption{Assumptions are paired with their role, an observable diagnostic, and the mathematical consequence of failure.}\label{tab:assumptions}
\begin{tabularx}{\linewidth}{@{}p{.18\linewidth}p{.25\linewidth}p{.25\linewidth}Y@{}}\toprule
Assumption & Why it is needed & Observable diagnostic & Failure consequence\\\midrule
Consistency and exchangeability & Identify the ATE in~\eqref{eq:target} & Scientific design and sensitivity analysis; not testable from one table & The same algebra targets an observed standardized contrast, not necessarily a causal effect\\
$p_s\ge p_*$ and $e_s\in[\epsilon,1-\epsilon]$ & Control empty cells and efficient-score moments & Stratum and arm counts; estimated propensity tails & Constants or variance diverge; uniform root-$n$ claim is unavailable\\
Bounded outcomes and outputs & Concentrate finite-episode squared losses & Outcome support; enforced output clipping & Replace by robust losses and explicit tail conditions\\
Independent synthetic episodes & Make $M$ the effective task sample size & Generator seeds and episode provenance & Dependent queries cannot be counted as independent tasks\\
Accurate simulator labels & Match the intended influence geometry & Teacher-corruption and nuisance diagnostics & Proposition~\ref{prop:teachererror} adds bias and variance to the native defect\\
Architecture and optimization control & Separate representability, estimation, and achieved fit & Norm/width audit; held-out target loss; optimization gap & The corresponding terms in~\eqref{eq:mainbound} persist\\
Mechanism coverage & Transfer training risk to deployment & Prior-density ratio, held-out shift, or panel resolution & No guarantee for unsupported mechanisms\\
$V(P)>0$ and consistent $\widehat V$ & Studentization and Wald coverage & $\widehat V/V$, oracle/native coverage, QQ diagnostics & Scale error appears explicitly in~\eqref{eq:studentized-decomp}\\\bottomrule
\end{tabularx}
\end{table}

\begin{algorithm}[h]
\caption{FSP changes supervision and keeps deployment frozen}\label{alg:fsp}
\begin{algorithmic}[1]
\REQUIRE Causal mechanism generator; table network $f_\omega$; variance head $v_\zeta$.
\FOR{each independent synthetic episode}
\STATE Sample $P$, generate $D_n\sim P^n$, and compute $T_{\lambda,P}(D_n)$ and $V(P)$ from the simulator law.
\STATE Update $\omega$ with squared label loss; update $\zeta$ with variance-coefficient loss. Set $\lambda=1$ for FSP.
\ENDFOR
\STATE Freeze all weights. On a new observed table, return $f_\omega(D_n)$ and $v_\zeta(D_n)$; use~\eqref{eq:deployed} under the stated inference conditions.
\end{algorithmic}
\end{algorithm}

\begin{table}[t]
\centering\scriptsize
\caption{\textbf{Where inferential structure enters.} Representative, non-exclusive workflows differ in what is learned once and what a new data set still requires. ``Native'' denotes the uncorrected output of FSP's fixed mean and variance heads.}\label{tab:routes}
\setlength{\tabcolsep}{3pt}
\renewcommand{\arraystretch}{1.08}
\begin{tabularx}{\linewidth}{@{}p{.19\linewidth}p{.18\linewidth}p{.31\linewidth}Y@{}}\toprule
Pattern & Representative examples & Where inferential structure enters & Deployment object or operation\\\midrule
Amortized causal or interventional prediction
& CausalPFN; Do-PFN; CausalFM \citep{causalpfn,dopfn,causalfm}
& Synthetic causal or interventional target
& Frozen forward prediction of an effect, uncertainty object, or interventional outcome\\
Per-data-set orthogonal or targeted estimation
& DML; targeted regularization \citep{dml,dragonnet}
& Nuisance fits plus an orthogonal score and cross-fitting, or a targeted neural objective
& Construct an estimate anew on the analyzed data set\\
Downstream inferential construction
& MP-OSPC; WALDO \citep{ospc,waldo}
& After nuisance posteriors, a predictor, or a posterior estimator has been learned
& EIF-based one-step ATE posterior, or simulation-calibrated critical values and Neyman inversion\\
FSP
& This work
& Efficient table fluctuation and $V(P)$ in synthetic supervision
& Reuse native fixed heads; guarantees require stated mechanism-transfer and variance conditions, with a finite-pretraining upper bound and separate hard-family lower bounds\\\bottomrule
\end{tabularx}
\end{table}

\section{Proof conventions and the efficient-label representation}\label{app:conventions}
All expectations are Lebesgue integrals. Conditional expectations are understood up to almost-sure equality. All table functions are measurable; in the finite-class results a fixed deterministic ordering resolves ties, so the empirical selector is measurable. Independence across synthetic episodes is distinct from independence of the rows within a table. A trained network is conditioned on throughout a deployment calculation. The results remain valid for independent external training randomness by enlarging the training sigma-field. A statement uniform over mechanisms is explicitly labeled as such.

The following standard probability tools are used in their usual precise forms \citep{boucheron2013,vdv1998}: (i) for independent centered $Z_i$ with $|Z_i|\le L$ and $\sum_i\E Z_i^2\le v$, Bernstein's inequality gives
\[
 P\left(\left|\sum_iZ_i\right|\ge\sqrt{2vt}+\tfrac23Lt\right)\le2e^{-t};
\]
(ii) for iid centered $Z_i$ with variance $\sigma^2>0$ and finite third absolute moment, the Kolmogorov distance of $\sum_i Z_i/(\sigma\sqrt n)$ from $N(0,1)$ is at most $C_{\rm BE}\E|Z_i|^3/(\sigma^3\sqrt n)$; (iii) a measurable statistic cannot increase total variation; (iv) $\TV(P,Q)\le\sqrt{\KL(P,Q)/2}$. These are invoked as classical theorems, not claimed as new results. All problem-specific assertions below are proved.

\begin{lemma}[Canonical gradient and exact label moments]\label{lem:score}
Under the conditions of Section~\ref{sec:setup}, $|\phi_P|\le H$, $\E_P\phi_P=\theta(P)$, and
\begin{equation}\label{eq:Vformula}
 V(P)=\sum_s p_s\left[(m_{1s}-m_{0s}-\theta)^2
       +\frac{\sigma_{1s}^2}{e_s}+\frac{\sigma_{0s}^2}{1-e_s}\right].
\end{equation}
Consequently $\E_{P^n}T_P=\theta$ and $\Var_{P^n}(T_P)=V(P)/n$. In the interior binary-outcome observed-law model, $\psi_P$ is the canonical gradient of the ATE functional and $V(P)$ is its semiparametric efficiency bound per observation.
\end{lemma}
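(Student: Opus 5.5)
The plan is to verify the four claims of Lemma~\ref{lem:score} in order: the sup bound, the mean, the variance formula, and the efficiency statement. The moments of $T_P$ then follow because $T_P=\Pn\phi_P$ is an average of $n$ iid copies.

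\textbf{Bound.} Fix $O$ with $X=s$. If $A=1$, then $\phi_P=m_{1s}-m_{0s}+(Y-m_{1s})/e_s$. Since $Y,m_{as}\in[0,1]$, we have $|Y-m_{1s}|\le1$ and $|m_{1s}-m_{0s}|\le1$. Because $e_s\ge\epsilon$, this gives $|\phi_P|\le1+\epsilon^{-1}=H$. The case $A=0$ is symmetric, using $1-e_s\ge\epsilon$.

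\textbf{Mean and variance.} Conditioning on $(X,A)$ gives $\E_P\{A(Y-m_{1X})/e_X\mid X,A\}=0$ for $A=1$, and likewise for the control term. Hence $\E_P(\phi_P\mid X)=m_{1X}-m_{0X}$, and averaging over $p$ yields $\theta(P)$.

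For $V(P)$, write $\psi_P=(m_{1X}-m_{0X}-\theta)+R$, where $R$ collects the two residual terms. The cross term vanishes because $\E(R\mid X)=0$. Since $A(1-A)=0$, we have
\[
R^2=A(Y-m_{1X})^2/e_X^2+(1-A)(Y-m_{0X})^2/(1-e_X)^2.
\]
Taking the conditional expectation given $X=s$ yields $e_s\sigma_{1s}^2/e_s^2+(1-e_s)\sigma_{0s}^2/(1-e_s)^2$, which is the bracket in~\eqref{eq:Vformula}. Independence of the rows then gives $\E T_P=\theta$ and $\Var T_P=V/n$.

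\textbf{Efficiency.} This is the only nonroutine part and the expected main obstacle. In the interior binary-outcome model the observed-data law is a finite multinomial on $\{1,\dots,K\}\times\{0,1\}^2$ with all cell probabilities positive, so the model is locally saturated and its tangent space is all of $L_2^0(P)$. In a saturated model every pathwise-differentiable functional has a unique gradient, which is therefore canonical. So it suffices to show that $\psi_P$ is \emph{a} gradient.

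To do this, take a smooth submodel $P_t$ with score $g\in L_2^0(P)$. Factor $g=g_X+g_{A|X}+g_{Y|A,X}$ into its components in the three conditional-mean-zero subspaces. Then differentiate $\theta(P_t)=\sum_s p_{s,t}(m_{1s,t}-m_{0s,t})$ term by term:
\begin{itemize}
\item $\partial_tp_s$ contributes $\E[(m_{1X}-m_{0X}-\theta)g_X]$;
\item $\partial_tm_{as}=\E[(Y-m_{as})g_{Y|A,X}\mid A=a,X=s]$, and reweighting by $\ind\{A=a\}/P(A=a\mid X)$ produces the inverse-propensity residual terms;
\item $g_{A|X}$ contributes nothing, because $\theta$ does not depend on $e$ and $\psi_P$ is orthogonal to this subspace.
\end{itemize}
Summing the three contributions gives $\partial_t\theta(P_t)|_{t=0}=\E_P[\psi_Pg]$. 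Since $\psi_P$ lies in the tangent space and is a gradient, it is the canonical gradient. Consequently $V(P)=\E\psi_P^2$ is the efficiency bound, by the convolution and local asymptotic minimax theorems \citep{vdv1998}.

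The one point needing care is checking that interiority, meaning $m_{as}\in(0,1)$ and $e_s\in(0,1)$, makes the parametrization a regular open exponential family. This guarantees that the tangent space is indeed all of $L_2^0(P)$.
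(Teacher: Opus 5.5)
Your proposal is correct and follows essentially the same route as the paper: the same pointwise bound, conditional-mean and orthogonal-decomposition arguments for the mean and for~\eqref{eq:Vformula}, and a saturated-tangent-space pathwise-derivative computation showing $\dot\theta=\E_P[\psi_P h]$ for the gradient claim. The only difference is cosmetic. The paper differentiates along the explicit paths $q_t=q(1+th)$, which directly exhibit every mean-zero $h$ as a score, so your exponential-family remark is unnecessary. It then computes $\dot p_s$ and $\dot m_{as}$ directly, without splitting the score into $g_X+g_{A|X}+g_{Y|A,X}$.
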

\begin{proof}
For fixed $X=s$, exchangeability and consistency identify the observed conditional means with the potential-outcome means. The two residual terms in~\eqref{eq:phi} each have conditional expectation zero. Therefore $\E(\phi_P\mid X=s)=m_{1s}-m_{0s}$, proving the mean identity after averaging over $X$. Since $|m_{1s}-m_{0s}|\le1$ and exactly one of the two residual terms is nonzero, $|\phi_P|\le1+1/\epsilon=H$.

Let $U=A(Y-m_{1X})/e_X-(1-A)(Y-m_{0X})/(1-e_X)$. Conditional on $X$, $U$ has mean zero. Its two summands have zero product, hence
\[
 \E(U^2\mid X=s)=\sigma_{1s}^2/e_s+\sigma_{0s}^2/(1-e_s).
\]
It is uncorrelated with the $X$-measurable contrast $m_{1X}-m_{0X}-\theta$. Expanding $\psi_P^2$ proves~\eqref{eq:Vformula}. Independence of the rows gives the variance of their mean.

For the gradient claim, put $\mathcal O=\{1,\ldots,K\}\times\{0,1\}\times\{0,1\}$ and write $q(o)>0$ for the observed mass function. Every mean-zero function $h$ on $\mathcal O$ is the score of the local path $q_t(o)=q(o)(1+th(o))$ for sufficiently small $|t|$. Differentiating the finite sums and ratios gives
\[
 \dot p_s=\E[\ind\{X=s\}h(O)],\qquad
 \dot m_{as}=\frac{\E[\ind\{X=s,A=a\}(Y-m_{as})h(O)]}{P(X=s,A=a)}.
\]
The product rule applied to $\theta=\sum_sp_s(m_{1s}-m_{0s})$ then yields $\dot\theta=\E[\phi_Ph]=\E[\psi_Ph]$. The observed-law tangent space is the entire finite-dimensional space of mean-zero functions; an observed law in this space can be realized by a causal world with independent assignment given $X$ and the stated conditional outcome marginals. Thus the unique gradient in that tangent space is $\psi_P$, completing the claim.
\end{proof}

\section{Exact Gaussian objective mismatch}\label{app:gaussian}
In this illustration, the minimization ranges over square-integrable functions of $Z$, separately from the bounded causal class $\F$. At fixed $\theta$, $\E_\theta$ and $\Var_\theta$ average repeated draws of $Z\mid\theta$, and $\operatorname{Bias}_\theta(f)=\E_\theta f(Z)-\theta$ and $\operatorname{MSE}_\theta(f)=\E_\theta\{f(Z)-\theta\}^2$. For fixed $\theta,s^2,v,\lambda$, Proposition~\ref{prop:gauss} gives $1-a_{\lambda,n}=O(n^{-1})$. Thus every fixed partial-fluctuation predictor approaches first-order efficiency as $n$ grows; the endpoint distinction is the order of synthetic-label prediction error, not a universal necessity for asymptotic efficiency.
\begin{proof}[Proof of Proposition~\ref{prop:gauss}]
Multiplication of the likelihood $\exp\{-n(Z-\theta)^2/(2v)\}$ by the prior density $\exp\{-\theta^2/(2s^2)\}$ and completion of the square gives
\[
 \theta\mid Z\sim N\left(\frac{ns^2}{ns^2+v}Z,
                 \frac{s^2v}{ns^2+v}\right).
\]
For any square-integrable random target $U$, conditioning on the input $Z$ gives
\[
 \E[(a-U)^2\mid Z]=(a-\E[U\mid Z])^2+\Var(U\mid Z),
\]
so the conditional mean uniquely minimizes squared loss up to null sets. In the Gaussian location experiment, the efficient influence average is $Z-\theta$, so the full-fluctuation label is $T_1=Z$.
Because $T_\lambda=(1-\lambda)\theta+\lambda Z$, posterior linearity gives
\[
 \E(T_\lambda\mid Z)=\{(1-\lambda)\kappa_n+\lambda\}Z
 =a_{\lambda,n}Z,
 \qquad
 \Var(T_\lambda\mid Z)=(1-\lambda)^2\frac{s^2v}{ns^2+v}.
\]
At fixed $\theta$, $a_{\lambda,n}Z$ has mean $a_{\lambda,n}\theta$ and variance $a_{\lambda,n}^2v/n$, which proves Proposition~\ref{prop:gauss}. If $\lambda<1$ is fixed, the conditional variance multiplied by $n$ converges to $(1-\lambda)^2v$; at $\lambda=1$ it vanishes identically.
\end{proof}

\begin{lemma}[Population $\lambda$-supervision is an $L_2$ projection]\label{lem:projection}
Let the joint training law be $P\sim\Pi$, $D_n\sim P^n$, with $T_\lambda=T_{\lambda,P}(D_n)\in L_2$. Set $g_\lambda(D_n)=\E[T_\lambda\mid D_n]$. For every measurable $f(D_n)\in L_2$,
\begin{equation}\label{eq:projection}
 \E(f-T_\lambda)^2=\E\Var(T_\lambda\mid D_n)+\E(f-g_\lambda)^2.
\end{equation}
At $\lambda=1$, the minimax finite-stratum table-prediction risk is $\Theta(n^{-2})$. This conditional label ambiguity is distinct from the deployment sampling variance $V(P)/n$.
\end{lemma}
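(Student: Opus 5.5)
The decomposition~\eqref{eq:projection} is the standard Pythagorean identity for conditional expectation in $L_2$. I would first note that $T_\lambda\in L_2$ implies $g_\lambda=\E[T_\lambda\mid D_n]$ exists, is square integrable, and is measurable with respect to $D_n$. For measurable $f(D_n)\in L_2$, write
\[
 f-T_\lambda=(f-g_\lambda)+(g_\lambda-T_\lambda).
\]
Expanding the square and conditioning the cross term on $D_n$ gives
\[
 \E[(f-g_\lambda)(g_\lambda-T_\lambda)]
 =\E\bigl[(f-g_\lambda)\,\E(g_\lambda-T_\lambda\mid D_n)\bigr]=0 .
\]
This step is justified because $f-g_\lambda$ is $D_n$-measurable and the product is integrable by Cauchy--Schwarz. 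The remaining term satisfies $\E(g_\lambda-T_\lambda)^2=\E\Var(T_\lambda\mid D_n)$ by the tower property. This is the same conditional computation used in the proof of Proposition~\ref{prop:gauss}, now applied with the whole table as the input.

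For the $\Theta(n^{-2})$ claim at $\lambda=1$, I would not redo any work and would read it off Theorem~\ref{thm:learnability}. For the upper bound, setting $\lambda=1$ in~\eqref{eq:lambda-upper} gives $\mathcal R_{n,1}\le 2A_n$. Equivalently, the comparator $S_n$ itself attains risk at most $A_n=O(n^{-2})$ by~\eqref{eq:comparator}, uniformly over $\PP_{K,p_*,\epsilon}$. For the lower bound, the first branch of~\eqref{eq:label-lower} at $\lambda=1$ gives $\mathcal R_{n,1}\ge 2/\{3n(n+2)\}$, which is of order $n^{-2}$. The one point to check is that the binary submodel used there lies in $\PP_{K,p_*,\epsilon}$, with balanced treatment $e_s=1/2\ge\epsilon$ and equal strata $p_s=1/K\ge p_*$. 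That holds under the standing constraints $p_*\le 1/K$ and $\epsilon\le1/2$, so the submodel lower bound transfers to the supremum over the class.

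The final sentence, distinguishing conditional label ambiguity from deployment sampling variance, follows from the two computations above. At $\lambda=1$ the irreducible term $\E\Var(T_1\mid D_n)$ is at most of order $n^{-2}$, since the minimax risk dominates Bayes risk under any prior $\Pi$ and so $\Pi$-Bayes risk is controlled by the uniform bound. By contrast, Lemma~\ref{lem:score} gives $\Var_{P^n}(T_P)=V(P)/n$ at fixed $P$. This quantity is of order $n^{-1}$ whenever $V(P)>0$, and the two differ because one conditions on the table while the other conditions on the mechanism.

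I expect no step to be a genuine obstacle. The only delicate points are integrability in the cross term and being careful that the $\Theta(n^{-2})$ statement concerns the minimax quantity $\mathcal R_{n,1}$ rather than a Bayes risk under a particular $\Pi$.
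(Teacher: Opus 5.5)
Your proposal is correct and follows the paper's proof. Like the paper, you use the orthogonal decomposition with the cross term removed by conditioning on $D_n$, and you read the $\Theta(n^{-2})$ order at $\lambda=1$ directly off Theorem~\ref{thm:learnability}. The one small difference is the final separation: the paper points to the Gaussian example of Proposition~\ref{prop:gauss}, whereas you stay inside the finite-stratum model, bounding the $\Pi$-Bayes risk by $\mathcal R_{n,1}$ and contrasting it with $\Var_{P^n}(T_P)=V(P)/n$ from Lemma~\ref{lem:score}. That argument is equally valid and arguably closer to the lemma's setting.
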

\begin{proof}
Expand $f-T_\lambda=(f-g_\lambda)+(g_\lambda-T_\lambda)$. The cross term integrates to zero because $f-g_\lambda$ is measurable with respect to $D_n$ and $\E[g_\lambda-T_\lambda\mid D_n]=0$. Conditional expectation of the second squared term is $\Var(T_\lambda\mid D_n)$. The $\lambda=1$ upper and lower orders follow from Theorem~\ref{thm:learnability}; Proposition~\ref{prop:gauss} gives the stated separation from sampling variance.
\end{proof}

\section{The finite-stratum lambda-supervision phase transition}\label{app:learnability}
Define
\[
 N_s=\sum_i\ind\{X_i=s\},\quad N_{as}=\sum_i\ind\{X_i=s,A_i=a\},\quad
 Z_{as}=\sum_i\ind\{X_i=s,A_i=a\}Y_i.
\]
The denominator-truncated observable comparator is
\begin{equation}\label{eq:Sn}
 S_n(D_n)=\sum_s\frac{N_s}{n}
  \left\{\frac{Z_{1s}/n}{\max(N_{1s}/n,q_*/2)}-
          \frac{Z_{0s}/n}{\max(N_{0s}/n,q_*/2)}\right\}.
\end{equation}
Each ratio lies in $[0,1]$, so $|S_n|\le1$. Let
\[
 G_n=\bigcap_{s,a}\{N_{as}\ge nq_*/2\}.
\]
On $G_n$, $S_n$ is the ordinary empirical-stratum-weighted difference of outcome means.

\begin{lemma}[Rare-cell probability]\label{lem:counts}
Uniformly over $\PP_{K,p_*,\epsilon}$,
$P(G_n^c)\le2K\exp(-nq_*/8)$.
\end{lemma}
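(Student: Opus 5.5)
The bound follows from a union bound over the $2K$ treatment--stratum cells plus a multiplicative Chernoff bound for each binomial cell count. By definition, $G_n^c=\bigcup_{s,a}\{N_{as}<nq_*/2\}$, so
\[
 P(G_n^c)\le\sum_{s=1}^K\sum_{a\in\{0,1\}}P\bigl(N_{as}<nq_*/2\bigr),
\]
and it suffices to show that each summand is at most $\exp(-nq_*/8)$, uniformly over $\PP_{K,p_*,\epsilon}$.

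Fix $(s,a)$. Because the rows are independent copies of $O$, $N_{as}\sim\mathrm{Bin}(n,\pi_{as})$ with $\pi_{as}=P(X=s,A=a)$. The class restrictions give $\pi_{as}=p_s e_s\ge p_*\epsilon=q_*$ when $a=1$, and $\pi_{as}=p_s(1-e_s)\ge q_*$ when $a=0$, as noted in Section~\ref{sec:setup}. The binomial lower tail is monotone in the success probability: coupling $N_{as}$ with a $\mathrm{Bin}(n,q_*)$ variable $N'\le N_{as}$ via uniforms gives $P(N_{as}<nq_*/2)\le P(N'<nq_*/2)$. This monotonicity is what makes the bound uniform over the class, since the worst case is the least-likely cell with $\pi_{as}=q_*$.

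For $N'$ with mean $\mu=nq_*$, the multiplicative Chernoff lower-tail bound $P(N'\le(1-\delta)\mu)\le\exp(-\delta^2\mu/2)$ at $\delta=1/2$ yields $\exp(-nq_*/8)$. One could alternatively apply Chernoff directly at mean $\mu_{as}=n\pi_{as}\ge nq_*$, since the threshold $nq_*/2$ then corresponds to $\delta\ge1/2$. The main thing to check is that the Chernoff step is stated in its standard form, derived from $\E e^{-tN'}\le\exp(\mu(e^{-t}-1))$ and optimized over $t$, rather than via the Bernstein form recalled in Appendix~\ref{app:conventions}. Bernstein would only give a worse constant, roughly $\exp(-3nq_*/28)$, which does not reach $1/8$; so the multiplicative Chernoff bound is the right tool here. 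Summing over the $2K$ cells gives $P(G_n^c)\le2K\exp(-nq_*/8)$.
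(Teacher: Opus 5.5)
Your proof is correct and follows the paper's route: a union bound over the $2K$ cells plus an exponential-moment lower-tail bound for each binomial count, where the paper's explicit choice $t=\log2$ in the MGF bound is exactly the optimizing $t$ behind your multiplicative Chernoff bound at $\delta=1/2$. The only cosmetic difference is that the paper applies the bound at the true cell probability $q\ge q_*$ and uses $\{N_{as}<nq_*/2\}\subseteq\{N_{as}<nq/2\}$, which is your stated alternative, rather than a stochastic-dominance coupling down to $\mathrm{Bin}(n,q_*)$.
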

\begin{proof}
If $B\sim\operatorname{Binomial}(n,q)$, then Markov's inequality and the binomial moment generating function give
\[
 P(B\le nq/2)\le e^{tnq/2}(1-q+qe^{-t})^n
 \le \exp\{nq(t/2+e^{-t}-1)\}.
\]
Taking $t=\log2$ makes the exponent at most $-nq/8$ because $(\log2)/2-1/2<-1/8$. For each cell, $q=P(X=s,A=a)\ge q_*$. The event $N_{as}<nq_*/2$ is contained in $N_{as}<nq/2$, so its probability is at most $e^{-nq_*/8}$. A union bound over $2K$ cells completes the proof.
\end{proof}

\begin{proof}[Upper bound in Theorem~\ref{thm:learnability}]
Let $e_{1s}=e_s$ and $e_{0s}=1-e_s$. On $G_n$, define $\bar Y_{as}=Z_{as}/N_{as}$. Subtracting~\eqref{eq:teacher} from the empirical contrast gives the exact identity
\begin{equation}\label{eq:count-product}
 S_n-T_P=\sum_{s,a}(2a-1)
       \frac{N_s-N_{as}/e_{as}}n(\bar Y_{as}-m_{as}).
\end{equation}
Condition on all $(X_i,A_i)$. Conditional outcome residuals in distinct cells are independent and have mean zero. The event $G_n$ is measurable under this conditioning. Therefore the conditional squared error of the right side is the sum, not the square of the sum, of its conditional variances:
\[
 \E_{P^n}[(S_n-T_P)^2\ind_{G_n}]
 =\E_{P^n}\left[\ind_{G_n}\sum_{s,a}
    \frac{(N_s-N_{as}/e_{as})^2}{n^2}\frac{\sigma_{as}^2}{N_{as}}\right].
\]
A bounded $[0,1]$ variable has variance at most $1/4$. On $G_n$, $1/N_{as}\le2/(nq_*)$. For fixed $(s,a)$, the centered count difference is a sum of iid mean-zero variables $\ind\{X_i=s\}(1-\ind\{A_i=a\}/e_{as})$. Its second moment equals
\[
 \E_{P^n}(N_s-N_{as}/e_{as})^2
 =np_s(1-e_{as})/e_{as}\le np_s/\epsilon.
\]
Combining these identities and summing $\sum_{s,a}p_s=2$ yields
\[
 \E_{P^n}[(S_n-T_P)^2\ind_{G_n}]\le\frac{1}{n^2q_*\epsilon}
 =\frac1{n^2p_*\epsilon^2}.
\]
On $G_n^c$, $|S_n-T_P|\le H+1$. Lemma~\ref{lem:counts} bounds this contribution by $2K(H+1)^2e^{-nq_*/8}$. Adding the two parts gives~\eqref{eq:comparator}.

For the complete $\lambda$ path, use the same observable rule $S_n$ and write
\[
 S_n-T_{\lambda,P}
 =\lambda(S_n-T_P)+(1-\lambda)(S_n-\theta).
\]
The squared triangle inequality and~\eqref{eq:comparator} give
\[
 \E_{P^n}(S_n-T_{\lambda,P})^2
 \le2\lambda^2A_n+2(1-\lambda)^2\E_{P^n}(S_n-\theta)^2.
\]
Moreover, $S_n-\theta=(S_n-T_P)+(T_P-\theta)$, so Lemma~\ref{lem:score}, another squared triangle inequality, and $V(P)\le(H+1)^2\le4H^2$ imply
\[
 \E_{P^n}(S_n-\theta)^2\le2A_n+2V(P)/n
 \le2A_n+8H^2/n.
\]
Substitution proves~\eqref{eq:lambda-upper}.
\end{proof}

\begin{proof}[Lower bound in Theorem~\ref{thm:learnability}]
Fix any admissible stratum vector $(p_s)_{s=1}^K$ with $p_s\ge p_*$, draw $X\sim p$ independently of $(A,Y)$, and use the same treatment and outcome law in every stratum. The law of $X$ is parameter-free and ancillary, so conditioning on the complete $X$ sequence leaves every Bayes calculation below unchanged. For the first term in~\eqref{eq:label-lower}, take $A\sim\operatorname{Bernoulli}(1/2)$ and $Y\sim\operatorname{Bernoulli}(\mu)$ independently, and put $S_i=2A_i-1$, $\bar S=n^{-1}\sum_iS_i$, and $N_Y=\sum_iY_i$. The ATE is zero, while its full-model efficient label is
\[
 T_\mu(D_n)=\frac2n\sum_i S_i(Y_i-\mu)
           =\frac2n\sum_iS_iY_i-2\bar S\mu.
\]
Hence $T_{\lambda,\mu}=\lambda T_\mu$.  Place the uniform prior on $\mu\in[0,1]$. The treatment signs are independent of $\mu$ and the outcome table. Given $Y_1,\ldots,Y_n$, the posterior is $\operatorname{Beta}(N_Y+1,n-N_Y+1)$, with variance
\[
 \frac{(N_Y+1)(n-N_Y+1)}{(n+2)^2(n+3)}.
\]
The prior-predictive distribution of $N_Y$ is uniform on $\{0,\ldots,n\}$ because
$\binom nk\int_0^1\mu^k(1-\mu)^{n-k}\,d\mu=1/(n+1)$.
Summing the quadratic numerator over $k$ gives
\[
 \sum_{k=0}^n(k+1)(n-k+1)
 =\frac{(n+1)(n+2)(n+3)}6,
 \qquad
 \E\Var(\mu\mid Y_1,\ldots,Y_n)=\frac1{6(n+2)}.
\]
Since $\E\bar S^2=1/n$, the Bayes prediction risk is
\[
 \E\Var(T_{\lambda,\mu}\mid D_n)
 =4\lambda^2\E\bar S^2\,\E\Var(\mu\mid Y)
 =\frac{2\lambda^2}{3n(n+2)}.
\]

For the first-order term, keep $A\sim\operatorname{Bernoulli}(1/2)$, set
$Y\mid A=0\sim\operatorname{Bernoulli}(1/2)$ and
$Y\mid A=1\sim\operatorname{Bernoulli}(\mu)$, and again place the uniform prior on $\mu$.  Let $N=\sum_iA_i$.  The ATE is $\theta_\mu=\mu-1/2$, and direct substitution in the efficient score gives
\[
 T_{\lambda,\mu}(D_n)
 =\left(1-\frac{2\lambda N}{n}\right)\mu+C_\lambda(D_n),
\]
where $C_\lambda(D_n)$ is observable and contains no $\mu$. Conditional on $N=k$ and the treated outcomes, the posterior is
$\operatorname{Beta}(Z_1+1,k-Z_1+1)$, where $Z_1=\sum_iA_iY_i$.  Its prior-predictive average variance is $1/\{6(k+2)\}$ by the same beta-integral calculation.  Control outcomes contain no information about $\mu$. Therefore
\[
 \E\Var(T_{\lambda,\mu}\mid D_n)
 =\E_{N\sim\operatorname{Binomial}(n,1/2)}
   \left[\frac{(1-2\lambda N/n)^2}{6(N+2)}\right]
 =r_{n,\lambda}.
\]
Every supremum risk dominates the Bayes risk under either prior, which proves the two lower bounds.

Because $N+2\le n+2$ and $\E(2N/n)=1$ with
$\Var(2N/n)=1/n$,
\[
 r_{n,\lambda}
 \ge\frac{\E(1-2\lambda N/n)^2}{6(n+2)}
 =\frac{(1-\lambda)^2+\lambda^2/n}{6(n+2)}.
\]

It remains to establish the limit. On $\{N\ge n/4\}$,
\[
 \frac{n(1-2\lambda N/n)^2}{6(N+2)}
 \longrightarrow \frac{(1-\lambda)^2}{3}
\]
in probability, since $N/n\to1/2$. For $\lambda\in[0,1]$ the integrand on this event is bounded by $2/3$, so convergence in probability also gives convergence of its expectation. On $\{N<n/4\}$ it is at most $n/12$, while the binomial Chernoff bound $P(N<n/4)\le e^{-n/16}$ makes that event's expected contribution at most $ne^{-n/16}/12$. Together these prove
$nr_{n,\lambda}\to(1-\lambda)^2/3$.
\end{proof}

\section{Finite pretraining: complete concentration argument}\label{app:training}
For the covering argument, the observed-table domain is $\mathcal O^n$ with $\mathcal O=\{1,\ldots,K\}\times\{0,1\}\times[0,1]$, and $\|f-g\|_\infty=\sup_{D\in\mathcal O^n}|f(D)-g(D)|$. The class $\F$ and its $\xi$-cover are fixed before pretraining; $a_n$ is the uniform approximation error of a member $f_0\in\F$ to the observable comparator $S_n$, whereas $\xi$ is a resolution used in the concentration argument. The confidence parameter $\delta$ selects a training event. Conditional on that event, $\gamma$ determines the excluded task mass in the mechanism-transfer bound, and $C_{\rm sh}$ bounds a task-law density ratio. A balanced panel uses a separate maximum-over-mechanisms loss and $M=Jm$ independent episodes, not the pooled empirical loss~\eqref{eq:loss}.
\begin{lemma}[A bounded nonnegative-loss oracle inequality]\label{lem:erm}
Let $\mathcal G$ be a nonempty finite class fixed before $m$ independent, identically distributed training observations. Suppose $0\le\ell_g\le L$. Write $R_g=\E\ell_g$ and $\widehat R_g=m^{-1}\sum_i\ell_g(Z_i)$. If $\widehat R_{\widehat g}\le\min_g\widehat R_g+\eta$, then with probability at least $1-\delta$,
\[
 R_{\widehat g}\le3\min_gR_g+\frac{20L\log(2|\mathcal G|/\delta)}{3m}+2\eta.
\]
The same statement holds for minimizing the maximum of $J$ empirical risks, each based on $m$ independent observations, with $\min_gR_g$ replaced by $\min_g\max_jR_{jg}$ and the logarithm replaced by $\log(2J|\mathcal G|/\delta)$.
\end{lemma}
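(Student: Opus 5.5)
The plan is a standard fast-rate (Bernstein-type) ERM argument for nonnegative bounded losses, combined with a union bound over the finite class. Fix $g_\star\in\arg\min_g R_g$. Two facts drive everything. Since $0\le\ell_g\le L$, each loss satisfies the self-bounding variance bound $\Var\,\ell_g\le\E\ell_g^2\le L R_g$. Second, by definition of $\widehat g$ we have $\widehat R_{\widehat g}\le\widehat R_{g_\star}+\eta$. We then need only two one-sided deviation statements: a lower-tail bound for every $g\in\mathcal G$ simultaneously, and an upper-tail bound for the single fixed $g_\star$.

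\textbf{Uniform lower tail.} For each fixed $g$, I apply Bernstein's inequality (tool (i) of Appendix~\ref{app:conventions}) to the centered variables $R_g-\ell_g(Z_i)$. These are bounded by $L$, and their total variance is at most $mLR_g$. Put $t=\log(2|\mathcal G|/\delta)$ and take a union bound over $\mathcal G$ together with the one extra event for $g_\star$; the two-sided form with $2e^{-t}$ per function already accounts for both tails. On the resulting event of probability at least $1-\delta$,
\[
R_g-\widehat R_g\le\sqrt{2LR_gt/m}+\tfrac{2Lt}{3m}\quad\forall g,\qquad
\widehat R_{g_\star}-R_{g_\star}\le\sqrt{2LR_{g_\star}t/m}+\tfrac{2Lt}{3m}.
\]

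\textbf{Chaining and solving the quadratic.} Combining the three displays gives
\[
R_{\widehat g}-\sqrt{2LR_{\widehat g}t/m}\le R_{g_\star}+\sqrt{2LR_{g_\star}t/m}+\tfrac{4Lt}{3m}+\eta.
\]
I handle the square roots with AM--GM, $\sqrt{xy}\le x/(2c)+cy/2$, choosing constants to match the target. On the left, $\sqrt{2LR_{\widehat g}t/m}\le R_{\widehat g}/2+Lt/m$, which leaves $R_{\widehat g}/2$. On the right, $\sqrt{2LR_{g_\star}t/m}\le R_{g_\star}/2+Lt/m$. Multiplying through by $2$ yields
\[
R_{\widehat g}\le 3R_{g_\star}+2\eta+\bigl(\tfrac{8}{3}+4\bigr)\tfrac{Lt}{m}=3R_{g_\star}+2\eta+\tfrac{20Lt}{3m},
\]
which is exactly the claimed constant. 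The main obstacle is this constant bookkeeping: the AM--GM weights must produce exactly the factor $3$ and $20/3$. If my first choice overshoots, I would rebalance the weights, since any looser choice only inflates constants.

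\textbf{Max-of-$J$ extension.} Here the objects are $\widehat R^{\max}_g=\max_j\widehat R_{jg}$ and $R^{\max}_g=\max_jR_{jg}$, with $g_\star$ now minimizing $R^{\max}_g$. I apply the same Bernstein bounds to every pair $(j,g)$, union-bounding over $J|\mathcal G|$ pairs, which gives the logarithm $\log(2J|\mathcal G|/\delta)$. Let $j^\dagger$ attain $R^{\max}_{\widehat g}$. Then the chain becomes
\[
R_{j^\dagger\widehat g}-\text{dev}\le\widehat R_{j^\dagger\widehat g}\le\widehat R^{\max}_{\widehat g}\le\widehat R^{\max}_{g_\star}+\eta=\widehat R_{j'g_\star}+\eta\le R_{j'g_\star}+\text{dev}+\eta,
\]
with $R_{j'g_\star}\le R^{\max}_{g_\star}$. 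The deviation terms are monotone in the risk, so they can be bounded using $R^{\max}$, and the same quadratic solution applies verbatim.
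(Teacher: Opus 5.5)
Your proof is correct and follows essentially the paper's route: Bernstein with the self-bounding variance bound $\Var\,\ell_g\le LR_g$, a union bound at $t=\log(2|\mathcal G|/\delta)$ (or $\log(2J|\mathcal G|/\delta)$ over pairs $(j,g)$), and the AM--GM step $\sqrt{2xy}\le x/2+y$, which yields exactly $3R_{g_\star}+2\eta+20Lt/(3m)$. The only cosmetic difference is ordering. The paper first linearizes each deviation as $|\widehat R_g-R_g|\le R_g/2+5Lt/(3m)$ and then chains, whereas you chain first and then apply AM--GM to both sides. Also, the two-sided event for $g_\star$ already contains the upper tail you need, so no extra event enters the union bound.
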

\begin{proof}
Since $0\le\ell_g\le L$, $\Var(\ell_g)\le\E\ell_g^2\le LR_g$. Bernstein's inequality and a union bound imply, for $t=\log(2|\mathcal G|/\delta)$, simultaneously for all $g$,
\[
 |\widehat R_g-R_g|\le\sqrt{2LR_gt/m}+2Lt/(3m)
 \le R_g/2+5Lt/(3m).
\]
The second inequality follows from $\sqrt{2xy}\le x/2+y$ with $x=R_g$, $y=Lt/m$. Set $c=5Lt/(3m)$. Then $R_{\widehat g}\le2\widehat R_{\widehat g}+2c$. For a population minimizer $g_*$, which exists because the class is finite,
\[
 R_{\widehat g}\le2\widehat R_{g_*}+2\eta+2c
 \le3R_{g_*}+2\eta+4c.
\]
This proves the scalar statement. For the panel version, apply the same event to every pair $(j,g)$, then use
\[
 \max_jR_{j\widehat g}\le2\max_j\widehat R_{j\widehat g}+2c
 \le2\min_g\max_j\widehat R_{jg}+2\eta+2c
 \le3\min_g\max_jR_{jg}+2\eta+4c.
\]
Independence between panels is not needed for the union bound, although the stated construction supplies it. Independence and identical distribution within each panel are needed for its concentration bound.
\end{proof}

\subsection{A lower bound in the number of pretraining episodes}\label{app:meta-lower}
\begin{proof}[Proof of Theorem~\ref{thm:meta-lower}]
Put $\tau^2=1/n$ and $\delta=(4\sqrt M)^{-1}$. In meta-world $\sigma\in\{-1,1\}$, generate
\[
 \theta\sim N(0,s_\sigma^2),\qquad
 \varepsilon\sim N(0,r_\sigma^2),\qquad Z=\theta+\varepsilon,
\]
independently, where
\[
 s_\sigma^2=\frac{\tau^2}{2}(1+\sigma\delta),\qquad
 r_\sigma^2=\frac{\tau^2}{2}(1-\sigma\delta).
\]
Every labeled episode reveals $(Z,T_\lambda)$ with
$T_\lambda=(1-\lambda)\theta+\lambda Z$. Both worlds have the same deployment marginal $Z\sim N(0,\tau^2)$, while Gaussian conditioning gives
\[
 g_\sigma(z)=\E_\sigma(T_\lambda\mid Z=z)
 =\left\{\lambda+(1-\lambda)\frac{1+\sigma\delta}{2}\right\}z.
\]
Thus, under their common $Z$ law $\nu$,
\begin{equation}\label{eq:meta-separation}
 \norm{g_+-g_-}_{L_2(\nu)}^2
 =\tau^2(1-\lambda)^2\delta^2
 =\frac{(1-\lambda)^2}{16nM}.
\end{equation}

For the lower bound, allow the learner to observe the more informative latent pairs $(\theta,\varepsilon)$; every labeled-sample learner is obtained by applying the measurable map $(\theta,\varepsilon)\mapsto(Z,T_\lambda)$ first. Let $p_\sigma$ denote the density of one latent pair and put $A_M=\int\sqrt{p_+^{\otimes M}p_-^{\otimes M}}$. Direct Gaussian integration and product factorization give
\[
 A_M^2=\left\{\frac{4s_+^2r_+^2}{(s_+^2+r_+^2)^2}\right\}^{M}
 =(1-\delta^2)^M\ge1-M\delta^2=\frac{15}{16}.
\]
Writing $\alpha_M=\int\min(p_+^{\otimes M},p_-^{\otimes M})$, Cauchy--Schwarz yields
\[
 A_M^2\le
 \left\{\int\min(p_+^{\otimes M},p_-^{\otimes M})\right\}
 \left\{\int\max(p_+^{\otimes M},p_-^{\otimes M})\right\}
 =\alpha_M(2-\alpha_M),
\]
so $\alpha_M\ge3/4$.

Let an arbitrary algorithm map the training sample $x$ to $\widehat f_x$. Conditional-expectation orthogonality identifies its excess episode risk in world $\sigma$ with $\norm{\widehat f_x-g_\sigma}_{L_2(\nu)}^2$, also in the extended-real sense for predictors of infinite risk. The parallelogram inequality gives
\begin{align*}
 &\E_+\norm{\widehat f-g_+}_{L_2(\nu)}^2
  +\E_-\norm{\widehat f-g_-}_{L_2(\nu)}^2\\
 &\quad\ge \int \min(p_+^{\otimes M},p_-^{\otimes M})
   \left\{\norm{\widehat f_x-g_+}_2^2+\norm{\widehat f_x-g_-}_2^2\right\}dx\\
 &\quad\ge\frac{\alpha_M}{2}\norm{g_+-g_-}_{L_2(\nu)}^2.
\end{align*}
The larger excess risk is at least half their sum. Combining this with~\eqref{eq:meta-separation}, and restricting back to labeled-sample learners, proves
\[
 \sup_\sigma\E_\sigma[\text{excess risk}]
 \ge\frac{3}{256}\frac{(1-\lambda)^2}{nM}
 \ge\frac{(1-\lambda)^2}{100nM}.
\]
At $\lambda=1$, $T_1=Z$ and $g_+=g_-=Z$.
Independent algorithmic randomization is included by conditioning on its seed and then integrating the same inequality.
\end{proof}

\begin{proof}[Proof of Theorem~\ref{thm:dictionary-lower}]
Let $d=d_N=\lfloor\log_2N\rfloor$ and let $J$ be uniform on $\{1,\ldots,d\}$. For $\sigma\in\{-1,1\}^d$ and $\varepsilon\in(0,1/2]$, let the episode label $L\in\{-1,1\}$ satisfy
\[
 Q_\sigma(L=1\mid J=j)=\frac{1+\varepsilon\sigma_j}{2}.
\]
The regression function is $g_\sigma(j)=\E_\sigma(L\mid J=j)=\varepsilon\sigma_j$; take $\F_N=\{g_\sigma:\sigma\in\{-1,1\}^d\}$, whose cardinality is $2^d\le N$. For every predictor $f$, conditional-mean orthogonality gives
\begin{equation}\label{eq:dictionary-orthogonality}
 R_\sigma(f)-\inf_gR_\sigma(g)=\frac1d\sum_{j=1}^d\{f(j)-\varepsilon\sigma_j\}^2.
\end{equation}

Choose $\varepsilon^2=\min\{1/4,d/(8M)\}$. If $\sigma^{(j)}$ is obtained by flipping coordinate $j$, then
\[
 \KL(Q_\sigma,Q_{\sigma^{(j)}})
 =\frac{\varepsilon}{d}\log\frac{1+\varepsilon}{1-\varepsilon}
 \le\frac{4\varepsilon^2}{d},
\]
because $\log\{(1+x)/(1-x)\}\le4x$ on $[0,1/2]$. Product additivity and Pinsker's inequality imply
\[
 \TV(Q_\sigma^M,Q_{\sigma^{(j)}}^M)
 \le\sqrt{2M\varepsilon^2/d}\le\frac12.
\]
For the learner's output set $\widehat\sigma_j=1$ if $\widehat f(j)\ge0$ and $-1$ otherwise. Pairing each $\sigma$ with $\sigma^{(j)}$ and applying the binary testing inequality yields
\[
 2^{-d}\sum_\sigma P_\sigma(\widehat\sigma_j\ne\sigma_j)
 \ge\frac12\{1-\sup_\sigma\TV(Q_\sigma^M,Q_{\sigma^{(j)}}^M)\}\ge\frac14.
\]
The uniform-prior average expected Hamming loss is therefore at least $d/4$. A sign error in coordinate $j$ makes the corresponding squared error in~\eqref{eq:dictionary-orthogonality} at least $\varepsilon^2$. Lower-bounding a supremum by the hypercube average proves
\[
 \sup_\sigma\E_\sigma\{R_\sigma(\widehat f)-R_\sigma(g_\sigma)\}
 \ge\varepsilon^2/4
 =\min\!\left\{\frac1{16},\frac d{32M}\right\}.
\]
The argument covers randomized learners by conditioning on an independent seed. The resulting excess-risk lower bound is specific to this episode-regression experiment; it does not by itself lower-bound the causal sampling defect of~\eqref{eq:defect}.
\end{proof}

\subsection{Covering-number upper bound and transfer}
\begin{proof}[Training and transfer part of Theorem~\ref{thm:main}]
For each $f\in\F$, the episode loss $(f(D_n)-T_P(D_n))^2$ is nonnegative and bounded by $4H^2$. The approximation assumption and $(u+v)^2\le2u^2+2v^2$ give
\[
 \risk_\Pi(f_0)\le2a_n^2+2\E_\Pi\E_{P^n}(S_n-T_P)^2\le2a_n^2+2A_n.
\]
Let $\mathcal G_\xi$ be a uniform $\xi$-net of $\F$ of minimum cardinality $N_\xi$. Squared loss is $4H$-Lipschitz in its prediction on $[-H,H]$: if $\norm{f-g}_\infty\le\xi$, then both empirical and population risks differ by at most $4H\xi$. Choose $\widetilde f\in\mathcal G_\xi$ within $\xi$ of $\widehat f$. Then
\[
 \widehat\risk_{1,\Pi}(\widetilde f)
 \le\inf_{g\in\mathcal G_\xi}\widehat\risk_{1,\Pi}(g)+\eta_M+4H\xi.
\]
A net point within $\xi$ of $f_0$ has population risk at most
$\risk_\Pi(f_0)+4H\xi$. Apply Lemma~\ref{lem:erm} to $\mathcal G_\xi$ with $L=4H^2$ and tolerance $\eta_M+4H\xi$, and transfer back from $\widetilde f$ to $\widehat f$. Since $80/3\le32$,
\[
 \risk_\Pi(\widehat f)\le6a_n^2+6A_n+24H\xi
 +\frac{32H^2\log(2N_\xi/\delta)}M+2\eta_M.
\]
Multiply by $n$. On the resulting training event, nonnegativity and $d\Pi'/d\Pi\le C_{\rm sh}$ imply
$\E_{\Pi'}\defect\le C_{\rm sh}\E_\Pi\defect$. If $P_j$ has mass $\pi_j$, then $\pi_j\defect(P_j)\le\E_\Pi\defect$. These are deterministic consequences on the same training event. They do not require resampling the trained model.
\end{proof}

\smallhead{Variable context lengths.}
For training lengths in a finite set $\mathcal N$ with probability $\nu(n)>0$, one may apply the theorem conditionally to each length using its independent episode count, with a union budget. Alternatively, the nonnegative joint risk bounds the risk at length $n$ after division by $\nu(n)$. Neither argument yields a theorem for a length assigned zero training probability. Reusing many masked queries from a single mechanism-table pair does not multiply its number of independent episodes.

\section{A constructive table-attention approximation}\label{app:architecture}
The next two lemmas instantiate both complexity terms in Theorem~\ref{thm:main}: norm constraints yield a covering number for continuous-weight Transformers, and an explicit attention circuit approximates the finite-stratum comparator.

\begin{lemma}[Metric entropy of a norm-constrained Transformer]\label{lem:transformer-cover}
Fix a table-Transformer computation graph with $W$ scalar parameters and bounded row inputs. Constrain every parameter to $[-B,B]$, every matrix operator norm and hidden representation to a bounded set, and every layer-normalization denominator by a fixed positive stabilizer. Clip the scalar output to $[-H,H]$. Then there is a finite, recursively computable constant $L_{\rm tr}$ such that
\[
 \sup_{D_n}|f_w(D_n)-f_{w'}(D_n)|
 \le L_{\rm tr}\norm{w-w'}_\infty.
\]
Consequently, for every $\xi>0$,
\[
 \log\mathcal N_\infty(\xi,\F_{\rm tr})
 \le W\log\!\left(1+\frac{2BL_{\rm tr}}{\xi}\right).
\]
This bound controls continuous weights directly in Theorem~\ref{thm:main}.
\end{lemma}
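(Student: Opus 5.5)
The proof has two parts: a layerwise perturbation argument that yields the Lipschitz constant $L_{\rm tr}$, followed by a volumetric grid cover of the parameter box. I would fix two weight vectors $w,w'\in[-B,B]^W$ and a table $D_n$, and track the difference of hidden representations $h_\ell(w)-h_\ell(w')$ layer by layer. Writing $\Delta_\ell=\sup_{D_n}\|h_\ell(w)-h_\ell(w')\|$ in a fixed row-wise norm, I would aim for a recursion of the form $\Delta_{\ell+1}\le \alpha_\ell\Delta_\ell+\beta_\ell\|w-w'\|_\infty$. Here $\alpha_\ell$ is the Lipschitz constant of layer $\ell$ in its input and $\beta_\ell$ its Lipschitz constant in its own parameters. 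Both are uniform over the bounded representation set, and both are obtained from the operator-norm, parameter, and hidden-state bounds in the hypothesis. Unrolling the recursion gives an explicit product--sum formula. This formula is the recursively computable $L_{\rm tr}$. Output clipping to $[-H,H]$ is $1$-Lipschitz, so it does not increase the constant.

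The per-block constants come from the following estimates.
\begin{itemize}
\item \emph{Affine maps.} $\|Wh-W'h'\|\le\|W\|\|h-h'\|+\|W-W'\|\|h'\|$. For a $d\times d'$ matrix, $\|W-W'\|_{\rm op}\le\sqrt{dd'}\,\|w-w'\|_\infty$.
\item \emph{Layer normalization.} With the stabilizer $c>0$ in the denominator $\sqrt{\mathrm{var}+c}$, the map is smooth with derivative bounded in terms of $c^{-1/2}$ and the representation bound.
\item \emph{Pointwise activations.} These are assumed Lipschitz.
\item \emph{Residual connections.} These add $1$ to $\alpha_\ell$.
\end{itemize}

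The main obstacle is attention over $n$ rows, because the bound must hold uniformly in the table. I would treat it in two pieces. Query--key logits are bilinear in bounded quantities, so they are Lipschitz with constant controlled by $B$, the operator norms, and the hidden bound. Softmax is $1$-Lipschitz from $\ell_\infty$ logits to $\ell_1$ weights, up to a factor $2$. The attention output $\sum_j a_j v_j$ is a convex combination, so perturbing it gives
\[
\Bigl\|\sum_j a_jv_j-\sum_j a'_jv'_j\Bigr\|\le\max_j\|v_j-v'_j\|+\|a-a'\|_1\max_j\|v'_j\|.
\]
Neither term grows with $n$. Working row-wise in $\ell_\infty$ over rows, rather than in Frobenius norm, is what keeps $L_{\rm tr}$ free of $n$. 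Any $n$-dependence that did appear would only enter logarithmically in the final bound.

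For the cover, I would take the grid $G=\{-B,-B+\xi',\ldots\}^W\cap[-B,B]^W$ with spacing $\xi'=\xi/L_{\rm tr}$. Every $w$ lies within $\xi'/2\le\xi'$ in $\|\cdot\|_\infty$ of a grid point, so its function lies within $\xi$ in sup-norm of that grid point's function. The grid has at most $(1+2B/\xi')^W=(1+2BL_{\rm tr}/\xi)^W$ points. Taking logarithms gives $\log\mathcal N_\infty(\xi,\F_{\rm tr})\le W\log(1+2BL_{\rm tr}/\xi)$, which is the form used in Theorem~\ref{thm:main}.
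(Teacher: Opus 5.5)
Your Lipschitz argument is sound. It is more explicit than the paper's, which only asserts that induction through the computation graph gives a finite $L_{\rm tr}$ and uses the same softmax bound $\|\mathrm{softmax}(u)-\mathrm{softmax}(v)\|_1\le2\|u-v\|_\infty$. Your convex-combination estimate for attention, which keeps the constant free of $n$, is a useful addition.

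The gap is in the covering step. The class $\F_{\rm tr}$ is indexed by the admissible set $\mathcal W\subseteq[-B,B]^W$. The hypothesis cuts this set down beyond the box through the operator-norm and hidden-representation constraints. Your $L_{\rm tr}$ was built from exactly those constraints, evaluated at the second weight vector: the affine estimate uses a bound on $\|h'\|$, the attention estimate uses a bound on $\max_j\|v'_j\|$, and the layer-norm and bilinear-logit constants are taken over the bounded representation set. A point of your grid $\{-B,-B+\xi',\ldots\}^W$ need not lie in $\mathcal W$. So, as written, you cannot apply the Lipschitz bound between an admissible $w$ and its nearest grid point. Even where the bound holds, the centers $f_{w_g}$ need not belong to $\F_{\rm tr}$, so you get at best an external net. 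That matters downstream. The proof of Theorem~\ref{thm:main} uses the net as a subset of $\F$ in the step $\widehat{\risk}_{1,\Pi}(\widetilde f)\le\inf_{g\in\mathcal G_\xi}\widehat{\risk}_{1,\Pi}(g)+\eta_M+4H\xi$, which needs $\inf_{\F}\widehat{\risk}_{1,\Pi}\le\inf_{\mathcal G_\xi}\widehat{\risk}_{1,\Pi}$.

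The repair is the paper's construction. Partition each coordinate interval $[-B,B]$ into $\lceil2BL_{\rm tr}/\xi\rceil\le1+2BL_{\rm tr}/\xi$ intervals of length at most $\xi/L_{\rm tr}$. In every product cell that meets $\mathcal W$, select one admissible representative. Any two admissible points in a common cell are within $\xi/L_{\rm tr}$ in $\ell_\infty$, so your Lipschitz bound legitimately applies to them. This gives an internal $\xi$-net of cardinality at most $(1+2BL_{\rm tr}/\xi)^W$, the same count you obtained.

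Two smaller points:
\begin{itemize}
\item Your spacing $\xi'=\xi/L_{\rm tr}$ is undefined when $L_{\rm tr}=0$. The paper treats that case, and $B=0$, separately with a single representative.
\item Points near $+B$ are within $\xi'$, but not necessarily $\xi'/2$, of the last grid point $-B+\lfloor2B/\xi'\rfloor\xi'$. This is harmless, since distance $\xi'$ is all you use.
\end{itemize}
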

\begin{proof}
On the declared compact domains, an affine map is jointly Lipschitz in its input and parameters. ReLU and output clipping are 1-Lipschitz, and GELU has bounded derivative. If $p=\operatorname{softmax}(u)$, its Jacobian is $\operatorname{diag}(p)-pp^\top$, so the mean value theorem gives
$\norm{\operatorname{softmax}(u)-\operatorname{softmax}(v)}_1\le2\norm{u-v}_\infty$. The positive layer-normalization stabilizer bounds every derivative of that operation on the compact hidden-state set. Dot-product attention is a finite composition of these maps and bounded bilinear products. Induction through the fixed computation graph therefore gives a finite parameter-to-output Lipschitz constant $L_{\rm tr}$, recursively in the layer-norm bounds, activation bounds, graph depth, and input bound.

If $L_{\rm tr}=0$ or $B=0$, all admissible parameters give the same function, so a single representative suffices. Otherwise partition each coordinate interval $[-B,B]$ into $\lceil2BL_{\rm tr}/\xi\rceil$ intervals of length at most $\xi/L_{\rm tr}$. In each product cell that intersects the admissible parameter set, select one admissible representative. Points in the same cell are within $\xi/L_{\rm tr}$ in $\ell_\infty$, so parameter Lipschitzness maps these representatives to an internal uniform $\xi$-net of $\F_{\rm tr}$. This construction retains all norm and hidden-state constraints and uses at most $(1+2BL_{\rm tr}/\xi)^W$ representatives. Taking logarithms proves the claim.
\end{proof}

\begin{lemma}[Explicit comparator approximation by a table network]\label{lem:architecture}
For $Y\in[0,1]$ and fixed $K,q_*>0$, there exists a permutation-invariant table-attention network with a ReLU row embedding, a uniform-attention aggregation, and a fixed-depth ReLU readout, whose bounded output $f_J$ satisfies
\[
 \sup_{D_n}|f_J(D_n)-S_n(D_n)|\le C_{K,q_*}J^{-2}.
\]
The readout has $O(KJ)$ hidden units and finite bounded weights for each $J$. Bounded quantization with sufficiently fine mesh preserves the approximation up to any prescribed additional error. A network family with $W_0$ trainable weights on a mesh of spacing $2^{-b}$ in $[-B_0,B_0]$ has cardinality at most $(1+2B_0\,2^b)^{W_0}$.
\end{lemma}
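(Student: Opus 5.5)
Since the $X$ sequence and counts enter $S_n$ only through stratum–arm sufficient statistics, I will build $f_J$ as a function of the normalized cell statistics $u_s=N_s/n$, $w_{as}=N_{as}/n$, $z_{as}=Z_{as}/n$, all in $[0,1]$. \emph{Step 1 (embedding and aggregation).} Each row $(X_i,A_i,Y_i)$ is mapped by an exact ReLU embedding to the $5K$-vector of indicators $\ind\{X_i=s\}$, $\ind\{X_i=s,A_i=a\}$, $\ind\{X_i=s,A_i=a\}Y_i$. The indicators of a discrete coordinate are piecewise-linear ReLU functions of the integer input, and the product with $Y_i\in[0,1]$ equals $\min(\ind,Y_i)=\mathrm{ReLU}(Y_i-\mathrm{ReLU}(Y_i-\ind))$. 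Uniform attention (constant query–key scores) then averages these vectors exactly. This yields $(u_s,w_{as},z_{as})$ with no error and is permutation invariant.

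\emph{Step 2 (readout target).} It remains to approximate the fixed function
\[
F(u,w,z)=\sum_s u_s\Bigl\{\frac{z_{1s}}{\max(w_{1s},q_*/2)}-\frac{z_{0s}}{\max(w_{0s},q_*/2)}\Bigr\}
\]
uniformly on the compact feasible set, where $0\le z_{as}\le w_{as}$. The map $w\mapsto\max(w,q_*/2)$ is an exact ReLU. The map $t\mapsto1/t$ on $[q_*/2,1]$ is smooth and convex. Products $xy$ on $[0,1]^2$ are handled via $xy=\{(x+y)^2-x^2-y^2\}/2$ together with a piecewise-linear interpolant of $t\mapsto t^2$ on a uniform mesh of $J$ knots, whose sup error is $1/(4J^2)$ in $O(J)$ ReLU units. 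The reciprocal likewise gets a $J$-knot piecewise-linear interpolant with error at most $\sup|(1/t)''|/(8J^2)=O(q_*^{-3}J^{-2})$. Each of the $2K$ ratio terms uses one reciprocal and two multiplications, giving $O(KJ)$ units at fixed depth.

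\emph{Step 3 (error propagation).} This is the main obstacle: keeping the $J^{-2}$ rate through compositions without inflating constants. All intermediate quantities lie in bounded intervals: reciprocals in $[1,2/q_*]$, and products bounded by $2/q_*$ after rescaling by the known constant. So each interpolation error enters linearly with a Lipschitz factor depending only on $q_*$. Summing over $2K$ terms and using $u_s\le1$ yields $C_{K,q_*}J^{-2}$. I would clip intermediates to their known ranges, using exact ReLU clipping, so that interpolant inputs never leave the meshed domain; this also makes the output bounded.

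\emph{Step 4 (quantization and counting).} The network is a finite composition of Lipschitz maps on a compact domain, as in Lemma~\ref{lem:transformer-cover}. Rounding each weight to a mesh of spacing $2^{-b}$ therefore perturbs the output by at most $L\,W_0 2^{-b}$ in sup norm, and this is below any prescribed tolerance for $b$ large. The cardinality bound is direct counting: each of the $W_0$ weights takes at most $1+2B_0 2^b$ mesh values in $[-B_0,B_0]$.
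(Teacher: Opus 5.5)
Your proposal is correct and follows essentially the same route as the paper's proof: an exact ReLU row embedding of the cell indicators and outcome products, zero-logit (uniform) attention to recover the normalized counts exactly, an exact ReLU floor $\max(w,q_*/2)$ for the denominators, ReLU-represented piecewise-linear interpolants of $1/t$ on $[q_*/2,1]$ and of $t^2$ (with a polarization identity for products), linear propagation of $O(J^{-2})$ errors on compact ranges, and a weight-Lipschitz argument plus direct counting for quantization. The differences are cosmetic. You derive the one-hot indicators from integer $X$, use $\min(\ind,Y)$ where the paper uses $\operatorname{ReLU}(r_s+A+Y-2)$, and use a different polarization identity. You also do not separate the trivial case $q_*\ge2$, where every denominator is the constant $q_*/2$ and the interval $[q_*/2,1]$ degenerates; in the paper's setting $q_*=p_*\epsilon\le1/2$, so this case never arises.
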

\begin{proof}
Encode each stratum by its one-hot vector $r_s=\ind\{X=s\}$. Because $r_s,A$ are binary and $0\le Y\le1$, the following identities remain exact:
\[
 \operatorname{ReLU}(r_s+A-1)=\ind\{X=s,A=1\},
\]
\[
 \operatorname{ReLU}(r_s+A+Y-2)=\ind\{X=s,A=1\}Y,
\]
\[
 \operatorname{ReLU}(r_s+1-A+Y-2)=\ind\{X=s,A=0\}Y.
\]
Use these $4K$ coordinates as a row embedding. A query whose attention logits are all zero assigns weight $1/n$ to each row, so its value is exactly the vector of empirical frequencies
\[
 u_s=N_s/n,\quad h_{1s}=N_{1s}/n,\quad z_{1s}=Z_{1s}/n,\quad z_{0s}=Z_{0s}/n.
\]
Set $h_{0s}=u_s-h_{1s}$. The operation $d_{as}=\max(h_{as},q_*/2)$ is represented exactly by a ReLU and an affine map. If $q_*\ge2$, every denominator equals $q_*/2$, so $S_n=(2/q_*)\sum_su_s(z_{1s}-z_{0s})$; only the product approximators below are needed. For $0<q_*<2$, let $r_J$ be piecewise-linear interpolation of $x\mapsto1/x$ on $[q_*/2,1]$ at $J+1$ equally spaced knots. On each interval, the elementary interpolation error bound follows by two applications of Rolle's theorem to the error minus a multiple of $(x-a)(x-b)$: it is at most $\sup|f''|(b-a)^2/8$. Since $\sup|f''|\le16/q_*^3$, we have
\[
 \sup_{x\in[q_*/2,1]}|r_J(x)-1/x|\le2q_*^{-3}J^{-2}.
\]
Every continuous piecewise-linear function with knots $t_j$ is exactly an affine term plus a sum $\sum_jc_j\operatorname{ReLU}(x-t_j)$, by matching successive slope increments. Thus $r_J$ has a one-hidden-layer ReLU representation with $O(J)$ units.

For products, use $uv=((u+v)^2-(u-v)^2)/4$. Piecewise-linear interpolation of $x^2$ on $[-B,B]$ at $J+1$ equally spaced knots has error at most $B^2/J^2$, by the same bound. It therefore approximates $uv$, for $|u|+|v|\le B$, with error at most $B^2/(2J^2)$. Here all intermediate exact reciprocals are at most $2/q_*$. First approximate $z_{as}r_J(d_{as})$, then multiply by $u_s$, and sum the two arms over $s$. The number of layers is a fixed constant, since these operations have fixed composition depth. All inputs and intermediate values lie in a compact interval depending only on $q_*$. The triangle inequality at each multiplication consequently bounds the total error by $C_{K,q_*}J^{-2}$, for example a sufficiently large constant $100K(1+q_*^{-3})$. Clip the scalar output to $[-1,1]$ using ReLUs; projection onto an interval cannot increase its error relative to $S_n\in[-1,1]$.

All constructed weights and biases are finite. On a compact input domain, the output of a fixed finite ReLU computation is uniformly continuous in its weights; equivalently, induction through its affine maps and 1-Lipschitz ReLUs gives a finite uniform weight-to-output Lipschitz bound on every bounded weight box. Thus sufficiently fine coordinatewise quantization adds at most any chosen tolerance. There are at most $1+2B_0 2^b$ possible values per weight, giving the displayed cardinality bound by multiplication. A zero-logit attention layer uses fixed weights and therefore introduces no approximation or additional quantized parameters.
\end{proof}

\smallhead{Architecture scope.}
The row aggregation is an exact sufficient-statistic reduction for the declared categorical experiment. Continuous tables use the histogram-attention construction of Theorem~\ref{thm:continuous}.  The implemented GELU checkpoint and the constructive ReLU circuit share the table-to-estimate interface; their approximation and optimization quantities are kept separate as $a_n$ and $\eta_M$.

\begin{corollary}[An explicit attention/pretraining schedule]\label{cor:schedule}
For fixed $K,p_*,\epsilon$ and binary outcomes, there are bounded table-attention classes with $O(n^{1/3})$ readout units and $O(\log n)$-bit quantization such that, with $M=\lceil n^{3/2}\rceil$, $\eta_M\le n^{-2}$, and $\delta_n=n^{-2}$,
\[
 \E_\Pi\defect(P;\widehat f)=O(n^{-1/6}\log n)
\]
on the training event and after integrating over training. Dominated task laws and fixed prior atoms inherit the corresponding density- and mass-weighted rates.
\end{corollary}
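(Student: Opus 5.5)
The plan is to plug an explicit network class into part (i) of Theorem~\ref{thm:main} and balance the four error terms of~\eqref{eq:mainbound}. I will build the class from Lemma~\ref{lem:architecture} with $J=J_n\asymp n^{1/3}$ knots, so the readout has $O(KJ_n)=O(n^{1/3})$ units. I will quantize its weights on a mesh $2^{-b_n}$ with $b_n=\lceil c\log_2 n\rceil$. The mesh must be fine enough that quantization adds at most $n^{-1}$ to the sup-norm error. This is possible because the weight-to-output Lipschitz constant on the bounded weight box depends only on $K,q_*$ and $J_n$, hence is polynomial in $n$, so $O(\log n)$ bits suffice.

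The quantized family is then finite. Its cardinality is at most $(1+2B_0 2^{b_n})^{W_0}$ with $W_0=O(KJ_n)$, so I can take $\xi=0$ and $N_0=|\F|$. This gives $\log N_0=O(n^{1/3}\log n)$. The boundedness constant $B_0$ must not grow faster than polynomially in $n$; the interpolation slopes are $O(J^2)$ at most, so this holds. Clipping keeps outputs in $[-1,1]\subset[-H,H]$.

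Next I bound each term of~\eqref{eq:mainbound}.

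\textbf{Label term.} Theorem~\ref{thm:learnability} gives $nA_n=O(n^{-1})$ for fixed $K,p_*,\epsilon$.

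\textbf{Approximation term.} Here $a_n\le C J_n^{-2}+n^{-1}=O(n^{-2/3})$, so $6na_n^2=O(n^{-1/3})$.

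\textbf{Covering term.} This term vanishes because $\xi=0$.

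\textbf{Optimization term.} With $\eta_M\le n^{-2}$, $2n\eta_M\le 2n^{-1}$.

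\textbf{Pretraining term.} This is $32nH^2\{\log 2N_0+2\log n\}/M$. With $M=\lceil n^{3/2}\rceil$ it equals $O(n\cdot n^{1/3}\log n/n^{3/2})=O(n^{-1/6}\log n)$.

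The pretraining term dominates, which gives the claimed rate on the training event of probability at least $1-n^{-2}$. The choice $J\asymp n^{1/3}$ is not the sole balance point; any $J$ with $nJ^{-4}\lesssim n^{-1/6}$ works. I will simply verify that this choice suffices rather than argue optimality.

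To integrate over training, I note that $\defect(P;\widehat f)\le n(2H)^2$ always, because both $f$ and $T_P$ lie in $[-H,H]$. The bad event therefore contributes at most $4nH^2\delta_n=O(n^{-1})$. So the unconditional expectation is also $O(n^{-1/6}\log n)$.

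For the inheritance claims, I apply part (ii) of Theorem~\ref{thm:main} on the training event. This gives $\E_{\Pi'}\defect\le C_{\rm sh}\E_\Pi\defect$, and for an atom, $\defect(P^\circ)\le \E_\Pi\defect/\pi^\circ$. The same bounded-defect argument handles integration over training.

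The main obstacle is the quantization step. I must check explicitly that the uniform weight-to-output Lipschitz constant of the Lemma~\ref{lem:architecture} circuit grows only polynomially in $J$, so that $O(\log n)$ bits suffice and $\log N_0$ stays $O(n^{1/3}\log n)$. I would do this by bounding layer-by-layer, tracking the interpolation weight magnitudes $O(J^2/q_*^{3})$ and the fixed depth. The binary-outcome restriction is used only to keep the covering statement and the constants aligned with the paper's uniform class. It is otherwise inessential here.
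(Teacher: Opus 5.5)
Your proposal is correct and follows the paper's proof essentially step for step. Both arguments take $J\asymp n^{1/3}$ in Lemma~\ref{lem:architecture}. Both use weight-to-output Lipschitz control that is polynomial in $J$, so $O(\log n)$-bit rounding keeps $a_n=O(n^{-2/3})$ and $\log|\F_n|=O(n^{1/3}\log n)$. Both substitute into~\eqref{eq:mainbound}, where the pretraining term $O(n^{-1/6}\log n)$ dominates, and both bound the training-failure event by $4nH^2\delta_n$. Both use part~(ii) for the inheritance claims.

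The remaining differences are cosmetic. You take the finite quantized class as its own exact cover with $\xi=0$, and you tolerate a polynomially growing weight box. The paper instead fixes a single box and asserts $L_J\le C_{K,q_*}(J+1)^3$.

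One correction to your side remark on admissible $J$: you also need $J\lesssim n^{1/3}$ up to logarithms, because $J$ enters $\log|\F_n|$ in the pretraining term.
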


\begin{proof}[Proof of Corollary~\ref{cor:schedule}]
Take $J=\lceil n^{1/3}\rceil$ in Lemma~\ref{lem:architecture}. A sparse realization has readout depth at most seven and at most $20KJ+10K+2$ ReLU units, with shared wires retaining inputs across successive product modules. It uses $W_J=O(KJ)$ scalar weights; identical interpolation coefficients may be tied across strata. A single bounded box suffices for these trainable coefficients for all $J$: the knots lie in fixed intervals, reciprocal slopes are bounded by $4/q_*^2$, and square-interpolant slopes and their differences are uniformly bounded on the fixed interpolation domain.

Propagating magnitude and parameter-perturbation bounds through this specific circuit gives $L_J\le C_{K,q_*}(J+1)^3$ on that box. Choose an integer $b_0$ with $2^{b_0}\ge C_{K,q_*}$ and set $b_J=b_0+5\lceil\log_2(J+1)\rceil$. Coordinatewise dyadic rounding within the box then adds at most $L_J2^{-b_J}\le(J+1)^{-2}$ uniformly over tables. The required number of bits is $O(\log J)=O(\log n)$, the quantized comparator error is $a_n=O(J^{-2})=O(n^{-2/3})$, and
\[
 \log|\F_n|=O(W_J\log J)=O(n^{1/3}\log n).
\]
All outputs are clipped, so the class remains uniformly bounded. Substitution into Theorem~\ref{thm:main} gives $nA_n=O(n^{-1})$, $na_n^2=O(n^{-1/3})$, $n\log(2|\F_n|/\delta_n)/M=O(n^{-1/6}\log n)$, and $n\eta_M=O(n^{-1})$. These prove the high-probability rate. On the complementary training event, $\defect\le4nH^2$; multiplying by $\delta_n=n^{-2}$ adds only $O(n^{-1})$ to the unconditional task-average defect. The transfer conclusions follow directly from nonnegativity and the last part of Theorem~\ref{thm:main}; achieved optimization accuracy is represented explicitly by $\eta_M$.
\end{proof}

\section{Bias, finite-sample deviations, and Gaussian approximation}\label{app:inference}
\begin{proof}[Inference and studentization part of Theorem~\ref{thm:main}]
Fix the training realization. Set $R=f(D_n)-T_P(D_n)$. By definition $\E R^2=d/n$. Lemma~\ref{lem:score} gives $\E(T_P-\theta)=0$ and $\norm{T_P-\theta}_2=\sqrt{V/n}$. Hence Cauchy--Schwarz implies
$|\E f-\theta|=|\E R|\le\sqrt{d/n}$.
The ordinary and reverse triangle inequalities in $L_2$ give
\[
 \left|\norm{f-\theta}_2-\norm{T_P-\theta}_2\right|\le\norm R_2.
\]
Multiply by $\sqrt n$ to prove Theorem~\ref{thm:inference}. For variance, center both summands:
\[
 f-\E f=(T_P-\theta)+(R-\E R),\qquad
 \norm{R-\E R}_2\le\norm R_2.
\]
Applying the same two triangle inequalities proves the result in Theorem~\ref{thm:inference}; independence between $R$ and $T_P$ is neither assumed nor needed.

For the distribution bound, put
\[
 Z_n=\frac{1}{\sqrt{nV}}\sum_i\psi_P(O_i),\qquad U_n=\frac{\sqrt nR}{\sqrt V}.
\]
The classical Berry--Esseen inequality bounds $\sup_x|P(Z_n\le x)-\Phi(x)|$ by $C_{\rm BE}\rho(P)/\sqrt n$. Markov's inequality gives $P(|U_n|>t)\le d/(Vt^2)$. Without any independence of $U_n,Z_n$,
\[
 P(Z_n\le x-t)-P(|U_n|>t)\le P(Z_n+U_n\le x)
 \le P(Z_n\le x+t)+P(|U_n|>t).
\]
The normal density is bounded by $(2\pi)^{-1/2}$, so $|\Phi(x\pm t)-\Phi(x)|\le t/\sqrt{2\pi}$. These inequalities prove~\eqref{eq:be}. If $d>0$, choosing $t=(d/V)^{1/3}$ gives the stated order; if $d=0$, let $t\downarrow0$.

For the quantitative studentization bound, set
$X=\sqrt n(f-\theta)/\sqrt V$ and $Q=\widehat V/V$. Interpret the nonnegative moment $e_V=\E(Q-1)^2$ as an extended expectation. If $e_V=\infty$, the claimed upper bound is immediate; otherwise $Q-1\in L_2$, and the following argument applies. On the event
$E_r=\{|Q-1|\le r\}$, the threshold $x\sqrt Q$ lies between
$x\sqrt{1-r}$ and $x\sqrt{1+r}$, with the order reversed when $x<0$. Hence the distribution function of $X/\sqrt Q$ is bracketed by the corresponding two distribution functions of $X$, up to $P(E_r^c)$. For every $a>0$,
\[
 \sup_x|\Phi(ax)-\Phi(x)|
 \le\frac{|\log a|}{\sqrt{2\pi e}},
\]
because the derivative of $\Phi(e^u x)$ in $u$ is $e^ux\phi(e^ux)$ and $\sup_y|y|\phi(y)=1/\sqrt{2\pi e}$. For $r<1/2$, both
$|\log\sqrt{1-r}|$ and $|\log\sqrt{1+r}|$ are at most $r$.  Markov's inequality gives $P(E_r^c)\le e_V/r^2$. Combining these facts with~\eqref{eq:be} proves~\eqref{eq:studentized-decomp} without any independence assumption between the two heads.

For a sequence of mechanisms, bounded $\rho$, $V\ge v_*>0$, $d\to0$, and $e_V\to0$ make the right side converge to zero. An explicit choice that also covers identically zero errors is $t_n=(d_n/V_n)^{1/3}+n^{-1/6}$ and $r_n=\min\{1/4,e_{V,n}^{1/3}+n^{-1/6}\}$. Both are positive and tend to zero; eventually $d_n/(V_nt_n^2)\le(d_n/V_n)^{1/3}$ and $e_{V,n}/r_n^2\le e_{V,n}^{1/3}$. The normal distribution has no mass at $\pm z_{1-\alpha/2}$, so the interval coverage converges to $1-\alpha$.
\end{proof}

\begin{corollary}[A population finite-sample deviation bound]\label{cor:deviation}
Under the conditions of Theorem~\ref{thm:inference}, for $t>0$ and $\gamma\in(0,1)$,
\[
 P\left\{|f-\theta|>\sqrt{2Vt/n}+\frac{4Ht}{3n}
                    +\sqrt{\frac d{n\gamma}}\right\}
 \le2e^{-t}+\gamma.
\]
\end{corollary}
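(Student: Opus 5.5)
The plan is to split the error into a sampling part and a learned remainder, exactly as in the inference part of Theorem~\ref{thm:main}, and then bound each part by a separate tail inequality. Fix the trained map and write $f-\theta=(T_P-\theta)+R$ with $R=f(D_n)-T_P(D_n)$. Here $T_P-\theta=\Pn\psi_P$ and $\E_{P^n}R^2=d/n$ by the definition~\eqref{eq:defect}. On the event
\[
 \{|T_P-\theta|\le\sqrt{2Vt/n}+4Ht/(3n)\}\cap\{|R|\le\sqrt{d/(n\gamma)}\},
\]
the triangle inequality gives the stated deviation bound. A union bound then reduces the claim to two tail estimates, with probabilities $2e^{-t}$ and $\gamma$ respectively.

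\textbf{Sampling term.} Apply Bernstein's inequality (tool (i) of Appendix~\ref{app:conventions}) to the iid centered summands $Z_i=\psi_P(O_i)$. By Lemma~\ref{lem:score} we have $|\phi_P|\le H$ and $\theta(P)=\E_P\phi_P\in[-H,H]$, so $|\psi_P|\le 2H$ and we may take $L=2H$. The summed variance is $\sum_i\E Z_i^2=nV$. Bernstein then gives
\[
 P\Bigl(\bigl|\textstyle\sum_i\psi_P(O_i)\bigr|\ge\sqrt{2nVt}+\tfrac{4H}{3}t\Bigr)\le2e^{-t}.
\]
Dividing by $n$ yields exactly $\sqrt{2Vt/n}+4Ht/(3n)$. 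The only point to check is the constant: the stated $4Ht/3$ corresponds to the $\tfrac23Lt$ form with $L=2H$, so the crude bound $|\psi_P|\le 2H$ is enough and no sharper bound is needed.

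\textbf{Remainder term.} Markov's (Chebyshev's) inequality gives
\[
 P\bigl(|R|>\sqrt{d/(n\gamma)}\bigr)\le\frac{\E R^2}{d/(n\gamma)}=\gamma.
\]
The case $d=0$ needs a separate word: then $R=0$ almost surely and the bound holds trivially. No independence between $R$ and $T_P$ is used, matching the paper's convention. I do not expect a real obstacle. The only care needed is verifying the Bernstein constant and handling $d=0$.
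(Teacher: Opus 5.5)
Your proposal is correct and matches the paper's proof. Both split $f-\theta=(T_P-\theta)+R$, apply Bernstein with $L=2H$ to the sampling term, apply Markov to the remainder, and combine by a union bound that needs no independence. The only cosmetic difference is how $|\psi_P|\le 2H$ is obtained: the paper writes $|\psi_P|\le H+1\le 2H$ using $|\theta|\le 1$, while you use $|\theta|\le H$, which gives the same constant.
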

\begin{proof}
Since $|\psi_P|\le H+1\le2H$, Bernstein's inequality controls $|T_P-\theta|$ by the first two terms with error at most $2e^{-t}$. Markov's inequality controls $|f-T_P|$ by the third term with error at most $\gamma$. On the intersection of the two events, the triangle inequality gives the conclusion. The union bound requires no independence between the events.
\end{proof}

\smallhead{Integrating over training.}
The main theorem holds on a training event of probability $1-\delta_n$. To deduce unconditional distributional convergence it suffices that $\delta_n\to0$ together with the displayed defect bound. For unconditional \emph{mean squared} efficiency, choose $n\delta_n\to0$: the contribution of the complement is at most $n(H+1)^2\delta_n$. A fixed confidence level for a training-risk theorem must not silently be used to claim unconditional MSE convergence. An atomic training catalogue establishes pointwise claims at those atoms, not local regularity over the full causal model; Theorem~\ref{thm:panels} is the stated route to the latter.

\section{Variance supervision has its own learnability condition}\label{app:variance}
Assume here binary outcomes and $V(P)\ge v_*>0$. Let $\widehat p_s=N_s/n$, let $\widehat m_{as}=Z_{as}/N_{as}$ with value $1/2$ if $N_{as}=0$, and let $\widehat e_s$ be $N_{1s}/N_s$ clipped to $[\epsilon/2,1-\epsilon/2]$ (value $1/2$ if $N_s=0$). Set $\widehat\theta_S=\sum_s\widehat p_s(\widehat m_{1s}-\widehat m_{0s})$ and
\[
 \widehat V_S=\sum_s\widehat p_s\left[
 (\widehat m_{1s}-\widehat m_{0s}-\widehat\theta_S)^2
 +\frac{\widehat m_{1s}(1-\widehat m_{1s})}{\widehat e_s}
 +\frac{\widehat m_{0s}(1-\widehat m_{0s})}{1-\widehat e_s}\right].
\]
Clip this value to $[v_*/2,H^2]$.

\begin{proposition}[A learnable sampling-variance head]\label{prop:variancehead}
There is a finite constant $C=C(K,p_*,\epsilon,v_*)$ such that
\[
 \sup_P\E_{P^n}(\widehat V_S-V(P))^2\le C/n+C e^{-nq_*/8}.
\]
Let $\mathcal G_V$ be a fixed class of $[v_*/2,H^2]$-valued heads, let $N^V_\zeta$ be its uniform $\zeta$-covering number, and suppose $g_0\in\mathcal G_V$ satisfies $\sup_D|g_0(D)-\widehat V_S(D)|\le a_V$. If $\widehat V$ minimizes empirical squared variance-label loss to tolerance $\eta_V$, then with training probability at least $1-\delta$,
\begin{equation}\label{eq:variance-finiteM}
 \E_\Pi(\widehat V-V)^2\le B^V_{M,n}:=
 6\{C/n+Ce^{-nq_*/8}\}+6a_V^2+12H^2\zeta
 +\frac{8H^4\log(2N^V_\zeta/\delta)}M+2\eta_V.
\end{equation}
The same task-quantile, dominated-shift, atomic, and panel routes used in Theorem~\ref{thm:main} turn this average bound into $\E_{P^n}(\widehat V-V)^2\le b_V(P)$. In the panel route the variance head likewise minimizes the maximum mechanismwise empirical variance loss. A mean-head good set of task mass $1-\gamma_\mu$ and a variance-head good set of mass $1-\gamma_V$ intersect in mass at least $1-\gamma_\mu-\gamma_V$; the analogous density-shift and panel statements follow by the same union bound. Then the last term in~\eqref{eq:studentized-decomp} obeys $e_V\le b_V(P)/v_*^2$. The same conclusion follows from squared log-variance risk when target and output are clipped to this positive compact interval.
\end{proposition}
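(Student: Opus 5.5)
The proof has three parts: an MSE bound for the plug-in $\widehat V_S$, a finite-pretraining bound that reuses the machinery behind Theorem~\ref{thm:main}(i), and a short transfer step into the studentized bound~\eqref{eq:studentized-decomp}.

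\textbf{Plug-in bound.} First I would show $\E_{P^n}(\widehat V_S-V)^2\le C/n+Ce^{-nq_*/8}$ uniformly.

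On the good event $G_n$ of Lemma~\ref{lem:counts}, every cell has $N_{as}\ge nq_*/2$. On this event:
\begin{itemize}
\item $\widehat m_{as}$ is an ordinary cell mean, with conditional variance at most $1/(2nq_*)$;
\item $\widehat p_s$ has variance at most $1/(4n)$;
\item $N_{1s}/N_s$ has conditional variance at most $1/(2np_*)$. Clipping $\widehat e_s$ to $[\epsilon/2,1-\epsilon/2]$ only moves it toward the truth, since $e_s\in[\epsilon,1-\epsilon]$, so it cannot increase the error.
\end{itemize}

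Now view $V$ in~\eqref{eq:Vformula}, with $\sigma_{as}^2=m_{as}(1-m_{as})$, as a function $\Phi(p,e,m)$. On the compact box $p_s\in[0,1]$, $e_s\in[\epsilon/2,1-\epsilon/2]$, $m\in[0,1]$, the map $\Phi$ is Lipschitz, with constant depending on $K,\epsilon$ through the reciprocals $1/e$ and $1/(1-e)$. The term $\widehat\theta_S$ is itself Lipschitz in $(p,m)$. Hence
\[
|\widehat V_S-V|\le L_\Phi\bigl(\|\widehat p-p\|+\|\widehat e-e\|+\|\widehat m-m\|\bigr).
\]
Squaring and summing the coordinate variances gives $C/n$. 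The final clip to $[v_*/2,H^2]$ contains $V$, because $V\ge v_*$ and $V\le(H+1)^2$. I would adjust the upper clip level to $(H+1)^2$ if needed; up to that adjustment, projection again cannot increase the error. On $G_n^c$ the squared error is bounded by a constant, and Lemma~\ref{lem:counts} gives the exponential term.

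\textbf{Finite-$M$ bound.} Next I would repeat the training argument of Theorem~\ref{thm:main} with the variance label $V(P_j)$ in place of $T_{P_j}(D_j)$.

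The loss $(g(D)-V(P))^2$ is nonnegative and bounded by $L=H^4$, because both arguments lie in $[0,H^2]$. It is $2H^2$-Lipschitz in the prediction, so passing to a $\zeta$-net costs $2H^2\zeta$ per risk; doing this in three places gives the $12H^2\zeta$ term once the factor $2$ is absorbed. The approximation step is
\[
\risk^V_\Pi(g_0)\le 2a_V^2+2\E_\Pi\E_{P^n}(\widehat V_S-V)^2 .
\]
Lemma~\ref{lem:erm} applied to the net, with tolerance $\eta_V$ plus the covering slack, then gives $3\risk^V(g_0)+\tfrac{20}{3}H^4\log(2N^V_\zeta/\delta)/M+2\eta_V$. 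Since $20/3\le8$, this yields $B^V_{M,n}$.

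\textbf{Transfer and studentization.} The task-quantile, density-ratio, atomic and panel statements follow exactly as in part (ii) of Theorem~\ref{thm:main}, using nonnegativity and the max-loss version of Lemma~\ref{lem:erm}. The intersection of the two good sets follows from a union bound on the excluded masses. Finally,
\[
e_V=\E(\widehat V/V-1)^2\le\E(\widehat V-V)^2/v_*^2\le b_V(P)/v_*^2 .
\]
For the log-variance loss I would use that $\log$ is bi-Lipschitz on $[v_*/2,H^2]$.

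\textbf{Main obstacle.} I expect the hard step to be the uniform Lipschitz/delta argument for $\widehat V_S$. It must handle the dependence through $\widehat\theta_S$ and the clipping conventions for empty cells while keeping the constant free of $P$. I would handle it entirely on $G_n$ and charge everything else to $G_n^c$.
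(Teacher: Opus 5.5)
Your proposal is correct. Its finite-$M$, transfer, studentization and log-variance steps coincide with the paper's: Lemma~\ref{lem:erm} on a $\zeta$-net with $L=H^4$, Lipschitz constant $2H^2$, and comparator risk $2a_V^2+2\E_\Pi\E_{P^n}(\widehat V_S-V)^2$. For the record, the $12H^2\zeta$ comes from three net transfers of size $2H^2\zeta$ carrying the oracle-inequality multipliers $3$, $2$ and $1$, not from a uniform factor $2$.

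The genuine difference is the plug-in bound. Both arguments reduce $|\widehat V_S-V|$ by a mean-value/Lipschitz step to coordinate errors of $(\widehat p,\widehat e,\widehat m)$. You then control those errors on the good event $G_n$, using conditional unbiasedness of cell means given the design, and charge $G_n^c$ to Lemma~\ref{lem:counts}. This reproduces exactly $C/n+Ce^{-nq_*/8}$ and gives better constants for the arm means, of order $1/(nq_*)$.

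The paper instead proves a deterministic ratio inequality: $|r(u,v)-a/b|\le(4/q)\{|u-a|+|v-b|\}$ whenever $0\le u\le v$, $0\le a\le b$, $b\ge q>0$. The convention $r(0,0)=1/2$ matches the empty-cell rules for $\widehat m_{as}$ and $\widehat e_s$. Every numerator and denominator is an empirical average of a $[0,1]$-valued row feature, so each ratio has mean squared error at most $64/(nq_*^2)$ over the whole sample space, with no conditioning and no good event. This yields the stronger uniform bound $C/n$; the exponential term in the statement is superfluous. Your route reuses the count-conditioning machinery of Theorem~\ref{thm:learnability}. The paper's route is shorter, handles empty cells automatically, and does not rely on the conditional binomial structure.

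Two small repairs are needed.
\begin{itemize}
\item No adjustment of the upper clip level is needed. By Lemma~\ref{lem:score}, $V=\Var_P(\phi_P)\le\E_P\phi_P^2\le H^2$. Your finite-$M$ step already uses $V\le H^2$ for the loss bound $H^4$, so keep the interval $[v_*/2,H^2]$ as stated.
\item On $G_n$ you only have $N_s\ge nq_*$, not $N_s\ge np_*/2$. Also, $G_n$ depends on the assignments, so you should not compute a binomial variance conditionally on it. Instead use the $X$-measurable superset $\{N_s\ge nq_*\}\supseteq G_n$ to get $\E[(N_{1s}/N_s-e_s)^2\ind_{G_n}]\le\E[\ind\{N_s\ge nq_*\}e_s(1-e_s)/N_s]\le 1/(4nq_*)$. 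Clipping then only helps, and only constants change.
\end{itemize}
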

\begin{proof}
The map
\[
\begin{aligned}
 (p,e,m)\mapsto{}&\sum_sp_s(m_{1s}-m_{0s})^2
 -\left\{\sum_sp_s(m_{1s}-m_{0s})\right\}^2\\
 &+\sum_sp_s\left[\frac{m_{1s}(1-m_{1s})}{e_s}
                    +\frac{m_{0s}(1-m_{0s})}{1-e_s}\right]
\end{aligned}
\]
is continuously differentiable on the compact box with $p$ in the simplex, $e_s\in[\epsilon/2,1-\epsilon/2]$, and $m_{as}\in[0,1]$. Its first derivatives are bounded by a finite constant depending only on $K,\epsilon$. The mean value theorem therefore bounds its error by this constant times the $\ell_1$ error of $(\widehat p,\widehat e,\widehat m)$.

The empirical ratios admit a global bound that also covers empty cells. Define $r(u,v)=u/v$ for $v>0$ and $r(0,0)=1/2$. If $0\le u\le v$, $0\le a\le b$, and $b\ge q>0$, then
\[
 |r(u,v)-a/b|\le\frac4q\{|u-a|+|v-b|\}.
\]
For $v\ge q/2$, this follows by subtracting the two ratios. For $v<q/2$, both ratios lie in $[0,1]$ and $|v-b|\ge q/2$, which proves the same bound. Every numerator and denominator here is an empirical average of a $[0,1]$-valued row feature, with mean squared error at most $1/n$. Squaring the display therefore bounds each ratio error by $64/(nq^2)$. Use $q=q_*$ for the arm means and treatment fractions; the latter have population denominator $p_s\ge p_*\ge q_*$. Clipping cannot increase their distance from the true propensity. Also $\E(\widehat p_s-p_s)^2=p_s(1-p_s)/n$. Combining these bounds with $(\sum_{j=1}^{4K}|u_j|)^2\le4K\sum_j u_j^2$ and the preceding Lipschitz bound gives the stronger uniform risk bound $C/n$. Final projection onto an interval containing $V(P)$ cannot enlarge squared error, establishing the stated bound as well.

For~\eqref{eq:variance-finiteM}, choose a $\zeta$-net of $\mathcal G_V$. The squared loss is bounded by $H^4$ and is $2H^2$-Lipschitz in its prediction on the clipped interval. The two net transfers contribute at most $12H^2\zeta$. Moreover, $(u+v)^2\le2u^2+2v^2$ and the first part of the proposition bound the comparator risk by $2a_V^2+2\{C/n+Ce^{-nq_*/8}\}$. Lemma~\ref{lem:erm}, with $20/3\le8$, gives the display after transferring back from the net. Markov's inequality, density shift, nonnegativity at an atom, or the panel union bound gives the stated mechanismwise routes. Finally $V\ge v_*$ implies relative squared error at most $b_V/v_*^2$. On $[v_*/2,H^2]$, both $\log$ and $\exp$ have bounded derivatives, so squared log and level errors control one another up to constants.
\end{proof}

\section{Balanced panels and continuous deployment coverage}\label{app:uniform}
\begin{theorem}[Balanced mechanism panels]\label{thm:panels}
Under the bounded-class, covering, and common-comparator assumptions of Theorem~\ref{thm:main}, let $P_1,\ldots,P_J$ be a catalogue in $\PP_{K,p_*,\epsilon}$, with $m$ independent training tables per mechanism, and minimize the maximum empirical mechanismwise FSP risk to tolerance $\eta$. With probability at least $1-\delta$,
\begin{equation}\label{eq:panel}
 \max_{j\le J}\defect(P_j;\widehat f)\le B^{\max}_{m,n,J}(\xi):=
 6nA_n+6na_n^2+24nH\xi+\frac{32nH^2\log(2JN_\xi/\delta)}m+2n\eta.
\end{equation}
If the catalogue is an $r$-net of $\PP^{\rm bin}_{K,p_*,\epsilon}$ in total variation, with $L_\phi=8/(p_*\epsilon^2)$, then
\begin{equation}\label{eq:cover}
 \sup_{P\in\PP^{\rm bin}_{K,p_*,\epsilon}}\defect(P;\widehat f)
 \le2B^{\max}_{m,n,J}(\xi)+8H^2n^2r+2nL_\phi^2r^2.
\end{equation}
A net exists with $J\le(1+8K/r)^{4K-1}$. For example, $r_n=n^{-3}$, $a_n=o(n^{-1/2})$, $n\xi_n\to0$, $n\eta_n\to0$, and
\[
 m_n\gg n\{\log J_n+\log N_{\xi_n}+\log(1/\delta_n)\}
\]
make the uniform defect vanish.
\end{theorem}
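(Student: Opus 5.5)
The plan has three parts: the panel bound~\eqref{eq:panel}, the transfer from catalogue mechanisms to all of $\PP^{\rm bin}_{K,p_*,\epsilon}$ in~\eqref{eq:cover}, and the net cardinality with the schedule.

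\textbf{Panel bound.} I would repeat the covering argument from the training part of Theorem~\ref{thm:main}, replacing the pooled risk by the maximum over mechanisms. For each $j$ and $f$, the mechanismwise population risk is $R_j(f)=\E_{P_j^n}(f-T_{P_j})^2=\defect(P_j;f)/n$. Since $\|f_0-S_n\|_\infty\le a_n$ and $S_n$ satisfies~\eqref{eq:comparator} uniformly in $P$, we get $\max_jR_j(f_0)\le2a_n^2+2A_n$. Take a $\xi$-net of $\F$; by $4H$-Lipschitzness of the loss, each net transfer costs at most $4H\xi$. Then apply the panel version of Lemma~\ref{lem:erm}, with $L=4H^2$ and logarithm $\log(2JN_\xi/\delta)$. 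Multiplying by $n$ gives~\eqref{eq:panel}, with the same constants as in~\eqref{eq:mainbound}.

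\textbf{Transfer from the catalogue.} Fix $P\in\PP^{\rm bin}$ and pick $P_j$ with $\TV(P,P_j)\le r$. Write
\[
f-T_P=(f-T_{P_j})+(T_{P_j}-T_P),
\]
so that
\[
\defect(P;f)\le2n\E_{P^n}(f-T_{P_j})^2+2n\E_{P^n}(T_{P_j}-T_P)^2 .
\]
\begin{itemize}
\item \emph{First term.} The integrand is bounded by $(2H)^2$. Changing the table law from $P^n$ to $P_j^n$ costs at most $4H^2\,\TV(P^n,P_j^n)\le4H^2nr$. This gives $2n\{R_j(f)+4H^2nr\}\le2\defect(P_j;f)+8H^2n^2r$.
\item \emph{Second term.} Since $T_P=\Pn\phi_P$, the difference is bounded by $\|\phi_{P_j}-\phi_P\|_\infty$. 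I would bound this by $L_\phi\|P-P_j\|$, using that $p,e,m$ are ratios of cell probabilities with denominators at least $q_*$ or $\epsilon$. This Lipschitz estimate for the score in the parameters is where the constant $8/(p_*\epsilon^2)$ must be checked, and I expect it to be the main obstacle. Differentiating $m_{as}$ and $1/e_s$ gives factors of order $1/(p_*\epsilon)\cdot1/\epsilon$. The residual terms $A(Y-m)/e$ contribute $|\Delta m|/e$ and $|\Delta(1/e)|$. Each perturbation of a cell mean of a $[0,1]$ variable is at most $2\,\TV/q$ times a constant, and collecting these should give the stated $L_\phi$.
\end{itemize}
The binary-outcome restriction matters because it makes the observed law a finite probability vector, so TV controls every cell parameter. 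Squaring and multiplying by $2n$ gives $2nL_\phi^2r^2$, which yields~\eqref{eq:cover}.

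\textbf{Net size and schedule.} The law lives on the simplex of dimension $4K-1$, since $|\mathcal O|=4K$. A standard volumetric $\ell_1$-net argument gives $(1+8K/r)^{4K-1}$ points at TV radius $r$, after projecting the points back into the class at a doubled radius. With $r_n=n^{-3}$, the terms $n^2r_n$ and $nr_n^2$ vanish, and $\log J_n=O(K\log n)$. Under the stated schedule, each term of $B^{\max}$ tends to zero: $nA_n=O(n^{-1})$, $na_n^2\to0$, $n\xi_n\to0$, $n\eta_n\to0$, and the logarithmic term is $o(1)$ by the growth condition on $m_n$.
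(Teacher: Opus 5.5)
Your proposal is correct and follows the paper's proof essentially step for step. It uses the $\xi$-net reduction with the max-over-mechanisms version of Lemma~\ref{lem:erm} for \eqref{eq:panel}, and for \eqref{eq:cover} it reproduces the paper's Lemma~\ref{lem:transfer}: squared triangle inequality, $\TV(P^n,P_j^n)\le nr$ applied to $(f-T_{P_j})^2\le 4H^2$, and a sup-norm Lipschitz bound on $\phi_P$. The constant you defer does close, since $|\Delta e_s|\le 2r/p_*$ and $|\Delta m_{as}|\le 2r/q_*$ give $4r/(p_*\epsilon)+4r/(p_*\epsilon^2)\le 8r/(p_*\epsilon^2)$. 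The only cosmetic difference is the net count: you use a volumetric $\ell_1$ packing, while the paper partitions the first $4K-1$ coordinates into boxes of side $r/(4K)$ and picks one class member per box; both fit under $(1+8K/r)^{4K-1}$.
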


\begin{lemma}[Continuity of the label and product-experiment transfer]\label{lem:transfer}
For binary outcomes, let $P,Q\in\PP^{\rm bin}_{K,p_*,\epsilon}$ and $\TV(P,Q)\le r$. Then
\[
 \sup_{o\in\mathcal O}|\phi_P(o)-\phi_Q(o)|\le L_\phi r,
 \qquad L_\phi=8/(p_*\epsilon^2).
\]
For every $|f|\le H$,
\[
 \defect(P;f)\le2\defect(Q;f)+8H^2n^2r+2nL_\phi^2r^2.
\]
\end{lemma}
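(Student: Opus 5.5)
We prove the two claims of Lemma~\ref{lem:transfer} in sequence: first the uniform continuity of the score $\phi$ in total variation, then the transfer of the sampling defect between product experiments.

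\textbf{Step 1: continuity of the score.} Write $\phi_P(o)=m_{1x}-m_{0x}+a(y-m_{1x})/e_x-(1-a)(y-m_{0x})/(1-e_x)$. Each parameter is a ratio of cell probabilities: $e_s=P(X=s,A=1)/p_s$ and $m_{as}=P(X=s,A=a,Y=1)/P(X=s,A=a)$. A total-variation distance $r$ moves every event probability by at most $r$. For a ratio $u/v$ with $u\le v$ and denominator at least $q$, the ratio therefore moves by at most $2r/q$. This gives
\[
|m_{as}(P)-m_{as}(Q)|\le 2r/q_*,\qquad |e_s(P)-e_s(Q)|\le 2r/p_*.
\]

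Fix $o$ with $A=1$; the case $A=0$ is symmetric. Then $\phi=m_0$-term $+\,m_1(1-1/e)+y/e$. Differentiating in $(m_1,m_0,e)$, the partial derivatives are bounded by $1/\epsilon$, $1$, and $1/\epsilon^2$ respectively, the last from $|y-m_1|/e^2\le 1/\epsilon^2$. The mean value theorem then bounds $|\phi_P(o)-\phi_Q(o)|$ by
\[
\frac{2r}{q_*\epsilon}+\frac{2r}{q_*}+\frac{2r}{p_*\epsilon^2}\le\frac{8r}{p_*\epsilon^2},
\]
using $q_*=p_*\epsilon$ and $\epsilon\le1/2$. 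The segment between the two parameter points stays in the box $e\in[\epsilon,1-\epsilon]$ by convexity, so the mean value argument is valid. This yields $L_\phi=8/(p_*\epsilon^2)$.

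\textbf{Step 2: defect transfer.} Decompose
\[
f-T_P=(f-T_Q)+(T_Q-T_P),\qquad T_Q-T_P=\Pn(\phi_Q-\phi_P),
\]
where $|\Pn(\phi_Q-\phi_P)|\le L_\phi r$ pointwise. Applying $(u+v)^2\le 2u^2+2v^2$ gives
\[
\defect(P;f)\le 2n\E_{P^n}(f-T_Q)^2+2nL_\phi^2r^2.
\]
The first term is an expectation under $P^n$, but $\defect(Q;f)$ is defined under $Q^n$, so a change of measure is needed. The function $g=(f-T_Q)^2$ is bounded by $(H+H)^2=4H^2$, since $|T_Q|\le H$ by Lemma~\ref{lem:score}. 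By subadditivity of total variation over products, $\TV(P^n,Q^n)\le nr$. Hence
\[
|\E_{P^n}g-\E_{Q^n}g|\le 4H^2\cdot nr.
\]
Multiplying by $2n$ produces the term $8H^2n^2r$, and the remaining $2n\E_{Q^n}g$ equals $2\defect(Q;f)$, which gives the stated inequality.

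The main obstacle is the constant bookkeeping in Step 1, specifically checking that the three derivative bounds sum to at most $8/(p_*\epsilon^2)$. The required bounds are $q_*\ge p_*\epsilon\ge p_*\epsilon^2$ and $1\le 1/\epsilon$, so the sum is at most $(2+2+2)/(p_*\epsilon^2)$, which leaves room under the constant $8$. The product total-variation bound in Step 2 is standard: it follows from a coupling or a telescoping argument.
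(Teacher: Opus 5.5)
Your proof is correct and follows the paper's argument. It uses the same ratio-perturbation bounds $|e_P-e_Q|\le 2r/p_*$ and $|m_{a,P}-m_{a,Q}|\le 2r/q_*$, the squared triangle inequality with $|T_P-T_Q|\le L_\phi r$, and the same change of measure $|\E_{P^n}g-\E_{Q^n}g|\le 4H^2\,\TV(P^n,Q^n)\le 4H^2nr$ for $g=(f-T_Q)^2$. The only difference is in constant bookkeeping: your mean-value bound on the regrouped score $-m_0+m_1(1-1/e)+y/e$ gives $6/(p_*\epsilon^2)$, slightly sharper than the paper's separate contrast and residual bounds $4/(p_*\epsilon)+4/(p_*\epsilon^2)\le 8/(p_*\epsilon^2)$.
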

\begin{proof}
Total variation bounds the difference of the probability of each event by $r$. Both stratum probabilities are at least $p_*$ and both arm-stratum probabilities are at least $q_*$. For ratios of a numerator $u$ and denominator $v$, with $0\le u\le v$ and analogous $u',v'$, the identity
\[
 \left|\frac uv-\frac{u'}{v'}\right|
 \le\frac{|u-u'|}{v}+\frac{u'}{v'}\frac{|v-v'|}{v}
\]
gives $|e_P-e_Q|\le2r/p_*$ and $|m_{a,P}-m_{a,Q}|\le2r/q_*$. The contrast term in $\phi$ therefore changes by at most $4r/q_*$. For the one active residual term,
\[
 \left|\frac{Y-m_P}{e_P}-\frac{Y-m_Q}{e_Q}\right|
 \le\frac{2r}{q_*\epsilon}+\frac{2r}{p_*\epsilon^2},
\]
with the same calculation for the control arm. Thus the total is at most
$4r/(p_*\epsilon)+4r/(p_*\epsilon^2)\le L_\phi r$. Averaging this pointwise inequality over the rows gives $|T_P(D_n)-T_Q(D_n)|\le L_\phi r$ for every table.

A product coupling, or telescoping the signed product measures, gives $\TV(P^n,Q^n)\le n\TV(P,Q)\le nr$. For a table function $0\le h\le C$, $|\E_{P^n}h-\E_{Q^n}h|\le C\TV(P^n,Q^n)$ by integration of the level sets $\{h>t\}$. Apply this with $h=(f-T_Q)^2\le4H^2$. The squared triangle inequality gives
\[
 \E_{P^n}(f-T_P)^2\le2\E_{P^n}(f-T_Q)^2+2L_\phi^2r^2
 \le2\E_{Q^n}(f-T_Q)^2+8H^2nr+2L_\phi^2r^2.
\]
Multiplication by $n$ proves the claim.
\end{proof}

\begin{proof}[Proof of Theorem~\ref{thm:panels}]
Apply the $\xi$-net reduction from the proof of Theorem~\ref{thm:main} and then the panel version of Lemma~\ref{lem:erm} to losses $(f(D)-T_{P_j}(D))^2\le4H^2$. The common approximation comparator has risk at most $2a_n^2+2A_n$ at every mechanism, by Theorem~\ref{thm:learnability}. The two net transfers and approximate minimization contribute $24H\xi$, while the union bound ranges over $JN_\xi$ mechanism--net pairs. Multiplication by $n$ yields~\eqref{eq:panel}.

For the net claim, a binary observed law is a probability vector on $4K$ atoms. Partition its first $4K-1$ coordinates into intervals of length at most $r/(4K)$. For every box intersecting $\PP_{K,p_*,\epsilon}$, select one law from that intersection. Two probability vectors in the same box differ by at most $r$ in the sum of the first-coordinate absolute differences; their last-coordinate difference is at most that sum. Therefore their total variation is at most $r$. The number of boxes is at most $(1+8K/r)^{4K-1}$ after harmless rounding. All selected laws satisfy the model restrictions because they are selected within the class.

For any $P$, choose its net point $Q=P_j$ and apply Lemma~\ref{lem:transfer}, then take the supremum over $P$. This gives~\eqref{eq:cover}. The asserted convergence follows by substituting $A_n$, observing $nA_n\to0$ for fixed positive $q_*$, and applying the stated conditions to each remaining term. For unconditional MSE conclusions take $\delta=\delta_n=o(n^{-1})$ as explained in Appendix~\ref{app:inference}.
\end{proof}

\smallhead{Local-model interpretation of the panel.}
The net radius in Theorem~\ref{thm:panels} decreases with $n$ and therefore covers the contiguous alternatives required by the binary observed-law local minimax experiment. A fixed separated catalogue instead defines a classification problem that can be exponentially easier. The complete pretraining cost is $M=Jm$: $J$ pays for mechanism resolution and $m$ for replication within each mechanism.

\section{ATE minimax risk and the efficiency constant}\label{app:minimax}
\begin{proof}[Finite-sample part of Theorem~\ref{thm:minimax}]
Fix any admissible $(p_s)_{s=1}^K$, draw $X\sim p$, and use the same conditional law in every stratum: $P(A=1\mid X)=1/2$, $P(Y=1\mid A=0,X)=1/2$, and $P_\pm(Y=1\mid A=1,X)=1/2\pm h$, with $h=1/(8\sqrt n)$. These laws are realized by binary potential outcomes independent of $A$ given $X$ and belong to $\PP_{K,p_*,\epsilon}$; their ATEs are $\theta_\pm=\pm h$. The common ancillary $X$ factor contributes zero likelihood ratio.

Only treated outcomes differ. For $h\le1/4$,
\[
 \KL(P_+,P_-)=h\log\frac{1+2h}{1-2h}\le8h^2.
\]
To check the inequality, for $x\in[0,1/2]$ one has $\log(1+x)\le x$ and $-\log(1-x)\le x/(1-x)$; hence $\log((1+x)/(1-x))\le2x/(1-x)\le4x$, and take $x=2h$. Independence gives $\KL(P_+^n,P_-^n)\le8nh^2=1/8$, so Pinsker's inequality yields $\TV(P_+^n,P_-^n)\le1/4$.

For an arbitrary estimator $\widetilde\theta$, use the test $\varphi=\ind\{\widetilde\theta>0\}$. Under $P_+$, the event $\varphi=0$ implies squared estimation error at least $h^2$; under $P_-$, the event $\varphi=1$ has the same implication. Thus
\[
 \max_{j\in\{+,-\}}\E_j(\widetilde\theta-\theta_j)^2
 \ge\frac{h^2}{2}\{P_+(\varphi=0)+P_-(\varphi=1)\}
 \ge\frac{h^2}{2}(1-\TV(P_+^n,P_-^n))
 \ge\frac3{512n}.
\]
The middle testing inequality follows directly from
$P_+(\varphi=0)+P_-(\varphi=1)=1-(P_+(\varphi=1)-P_-(\varphi=1))$ and the definition of total variation. Appending auxiliary training data or algorithm randomness whose law is the same in both worlds does not change either the likelihood ratio or total variation. This proves the stated pretraining-robust lower bound.
\end{proof}

\begin{lemma}[A self-contained local Bayes information inequality]\label{lem:vantrees}
Let $\mathcal D$ be a finite sample space and let $q_t(d)>0$ be a continuously differentiable family of probability mass functions on a compact parameter interval, with score $\dot\ell_t(d)$ and information $I_n(t)=\E_t\dot\ell_t^2$. Let $w(t)$ be a continuously differentiable prior density on a compact interval, vanish at its endpoints, and have finite $I(w)=\int(w')^2/w$. Let $\vartheta(t)$ be continuously differentiable. Then every estimator with finite integrated squared risk satisfies
\[
 \int\E_t(\widehat\vartheta-\vartheta(t))^2w(t)dt
 \ge\frac{\{\int\vartheta'(t)w(t)dt\}^2}
           {\int I_n(t)w(t)dt+I(w)}.
\]
For the finite-categorical paths used below, all differentiations and integrations are justified by finite sums and compact interior support.
\end{lemma}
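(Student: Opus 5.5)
This is the van Trees inequality, and I would prove it by the standard Cauchy--Schwarz argument on the joint law of $(t,D)$ with density $q_t(d)w(t)$.

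\textbf{Step 1: an integration by parts.} Set $G(t,d)=\partial_t\{q_t(d)w(t)\}/\{q_t(d)w(t)\}=\dot\ell_t(d)+w'(t)/w(t)$ on the open support of $w$. Then
\[
\int\sum_d\{\widehat\vartheta(d)-\vartheta(t)\}\,\partial_t\{q_t(d)w(t)\}\,dt
=\int\vartheta'(t)w(t)\,dt .
\]
The $\widehat\vartheta(d)$ term integrates to $\sum_d\widehat\vartheta(d)[q_t(d)w(t)]$ evaluated at the endpoints, which is zero because $w$ vanishes there. The $-\vartheta(t)$ term becomes $+\int\vartheta'(t)\sum_dq_t(d)w(t)\,dt$, since $\sum_dq_t=1$. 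Because $\mathcal D$ is finite and every function involved is $C^1$ on a compact interval, exchanging the finite sum with the integral and differentiating are both justified.

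\textbf{Step 2: Cauchy--Schwarz.} Rewrite the left side as $\int\sum_d(\widehat\vartheta-\vartheta)\,G\,q_tw$ and apply Cauchy--Schwarz in $L_2(q_t(d)w(t)\,dt)$. This gives
\[
\Bigl\{\int\vartheta'w\Bigr\}^2\le\int\E_t(\widehat\vartheta-\vartheta)^2w\,dt\cdot\int\sum_dG^2q_tw\,dt .
\]

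\textbf{Step 3: computing the second factor.} Expand $G^2=\dot\ell_t^2+2\dot\ell_t\,w'/w+(w'/w)^2$. The cross term vanishes because $\E_t\dot\ell_t=\sum_d\partial_tq_t(d)=\partial_t1=0$. The first term contributes $\int I_nw$. The last term contributes $\int(w')^2/w=I(w)$. Dividing through gives the claim. If the denominator is infinite the bound is trivial, and if the numerator is zero there is nothing to prove.

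\textbf{Main obstacle.} The one delicate point is the boundary and zero-density region of $w$. The ratio $w'/w$ is undefined where $w=0$, so I would restrict all integrals to the set $\{w>0\}$ and work with the product $G\,q_tw=\partial_t(q_tw)$ directly, which is continuous everywhere. Cauchy--Schwarz is then applied to $(\widehat\vartheta-\vartheta)\sqrt{q_tw}$ against $\partial_t(q_tw)/\sqrt{q_tw}$. The hypothesis that $I(w)$ is finite makes the second factor integrable. The finite-risk assumption makes the first factor finite, so the endpoint terms in Step 1 genuinely vanish.
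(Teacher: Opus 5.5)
Your proposal is correct and follows the paper's proof: the joint score $\dot\ell_t+w'/w$ under $q_t(d)w(t)$, integration by parts with boundary terms killed because $w$ vanishes at the endpoints, Cauchy--Schwarz, and removal of the cross term via $\E_t\dot\ell_t=0$. One small correction: those endpoint terms vanish because $\mathcal D$ is finite and $\vartheta$ is continuous on a compact interval, so everything multiplying $w$ is bounded; the finite-risk hypothesis plays no role there and holds automatically in this finite setting.
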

\begin{proof}
Under the joint law $w(t)q_t(d)$, the joint score is $S(t,d)=\dot\ell_t(d)+w'(t)/w(t)$. Integration by parts, with boundary term zero because $w$ vanishes, gives
\[
 \E[(\widehat\vartheta(d)-\vartheta(t))S(t,d)]
 =\int\vartheta'(t)w(t)dt.
\]
Indeed $\widehat\vartheta(d)$ does not depend on $t$ when differentiating the joint density, whereas the derivative of $-\vartheta(t)$ is $-\vartheta'(t)$. The model score has conditional mean zero, so its cross product with $w'/w$ integrates to zero, and $\E S^2=\int I_n(t)w(t)dt+I(w)$. Cauchy--Schwarz gives the result. If the integrated risk is infinite, the inequality is immediate. For a finite table sample space the integration by parts is a finite sum. Independent external randomness can be integrated afterward; its density contains no $t$ and adds no score.
\end{proof}

\begin{proof}[Local constant and attainability in Theorem~\ref{thm:minimax}]
Fix an interior binary observed law $P$ with $V=V(P)>0$. Here $\epsilon<e_s<1-\epsilon$ and $0<m_{as}<1$ for every cell; also $p_s>p_*$ when $K\ge2$, whereas $p_1=1$ when $K=1$. These are the full-dimensional interior conditions relative to the observed-law probability simplex; they ensure $q(o)>0$ and leave an open neighborhood of admissible observed laws. When $K\ge2$ and $p_*=1/K$, or when $\epsilon=1/2$, that interior is empty. For $K=1$, the equality $p_1=1$ is imposed by normalization and is preserved by every probability path below. Define $h(o)=\psi_P(o)/V$ and, for sufficiently small $t$,
\[
 q_t(o)=q(o)(1+t h(o)).
\]
The masses sum to one because $\E_Ph=0$, and remain positive for $|t|\norm h_\infty<1$. The functional derivative calculation in Lemma~\ref{lem:score} gives $\theta'(0)=\E_P[\psi_Ph]=1$. The one-observation Fisher information is
\[
 I(t)=\sum_o\frac{q(o)h(o)^2}{1+t h(o)}\longrightarrow\frac1V.
\]
Choose a smooth density $w$ on $[-1,1]$ that vanishes at the endpoints and has finite $I(w)$, for example a normalized $(1-u^2)^2$. For fixed $c>0$, put $w_{n,c}(t)=\sqrt n\,w(\sqrt n t/c)/c$, supported on $[-c/\sqrt n,c/\sqrt n]$. Its information is $nI(w)/c^2$. For all large $n$, this support stays inside the path's interior neighborhood. Applying Lemma~\ref{lem:vantrees} and multiplying by $n$ gives
\[
 n\int\E_t(\widetilde\theta-\theta(P_t))^2w_{n,c}(t)dt
 \ge\frac{\{\int\theta'(t)w_{n,c}(t)dt\}^2}
 {\int I(t)w_{n,c}(t)dt+I(w)/c^2}.
\]
Continuity of $\theta'$ and $I$ on this finite-dimensional interior path makes the right side converge to $1/(1/V+I(w)/c^2)$. A supremum risk over that neighborhood dominates the integrated risk. Taking $\liminf_{n\to\infty}$, then $c\to\infty$, proves the local lower bound $V$. This explicit path suffices for a lower bound over the larger observed-law model; no assumption of regularity of the competing estimator is imposed.

For the upper bound, Theorem~\ref{thm:panels} supplies a sequence with uniform defect tending to zero, on training events whose complements can be chosen to have probability $\delta_n=o(n^{-1})$. Theorem~\ref{thm:main}(iii), uniformly on a shrinking neighborhood of $P$, gives
\[
 \left|\sqrt{n\E_{P_t^n}(\widehat f-\theta(P_t))^2}-\sqrt{V(P_t)}\right|
 \le\sup_{Q}\sqrt{\defect(Q;\widehat f)}\to0.
\]
The finite-dimensional expression~\eqref{eq:Vformula} is continuous, so $V(P_t)\to V(P)$ uniformly for $|t|\le c/\sqrt n$. The bounded output controls the training-failure contribution by $n(H+1)^2\delta_n\to0$. After integrating training randomness, for every fixed $c<\infty$,
\[
 \sup_{|u|\le c}\left|n\E_{P_{u/\sqrt n}^n,\mathrm{train}}
 \{\widehat f_n-\theta(P_{u/\sqrt n})\}^2-V(P)\right|\to0.
\]
Hence the same estimator attains the local constant $V$. On the full binary-outcome finite-stratum class, the same argument gives a uniform $O(n^{-1})$ upper bound. The order is matching on classes that also contain the randomized lower-bound subexperiment (or an equivalent nondegenerate local subexperiment); no minimax lower bound is asserted for arbitrary restricted subclasses such as a singleton with known ATE. The total pretraining cost of the panel construction can be very large; attainability does not imply an optimal rate in that cost.
\end{proof}

\section{Approximate simulator mechanisms and other boundaries}\label{app:robustness}
\begin{proposition}[Teacher error has a visible inferential price]\label{prop:teachererror}
Let $\widetilde T$ be an approximate label and $\widetilde d=n\E_{P^n}(f-\widetilde T)^2$. Suppose the simulator approximations are fixed (or conditionally fixed), $\widetilde m_a\in[0,1]$, and $\widetilde e\in[\epsilon,1-\epsilon]$. Then
\[
 \defect(P;f)\le2\widetilde d+2n\E_{P^n}(\widetilde T-T_P)^2.
\]
If $\widetilde T=\Pn\phi_{\widetilde\eta}$, put $\delta_a=\widetilde m_a-m_a$ and $\Delta e=\widetilde e-e$, with all displayed $L_2$ norms taken under $P_X$. Its exact bias is
\begin{equation}\label{eq:teacherbias}
 B_P=\E_P\left[\Delta e\left\{\frac{\delta_1}{\widetilde e}+\frac{\delta_0}{1-\widetilde e}\right\}\right],
\end{equation}
and
\begin{equation}\label{eq:teacherdefect}
 n\E_{P^n}(\widetilde T-T_P)^2\le nB_P^2+16\epsilon^{-4}
 \{\norm{\delta_0}_2^2+\norm{\delta_1}_2^2+\norm{\Delta e}_2^2\}.
\end{equation}
\end{proposition}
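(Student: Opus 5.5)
The first inequality is the squared triangle inequality: write $f-T_P=(f-\widetilde T)+(\widetilde T-T_P)$, use $(u+v)^2\le2u^2+2v^2$, take $\E_{P^n}$, and multiply by $n$. No independence between the two pieces is needed, exactly as in the inference part of Theorem~\ref{thm:main}.

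For the plug-in teacher $\widetilde T=\Pn\phi_{\widetilde\eta}$, I will study the row-level difference $\Delta(O)=\phi_{\widetilde\eta}(O)-\phi_P(O)$. Then $\widetilde T-T_P=\Pn\Delta$ is an average of iid bounded variables. This gives the bias--variance split
\[
 n\E_{P^n}(\widetilde T-T_P)^2=nB_P^2+\Var_P(\Delta)\le nB_P^2+\E_P\Delta^2,
\]
where $B_P=\E_P\Delta$. It remains to compute $B_P$ exactly and to bound $\E_P\Delta^2$.

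\textbf{Bias.} I will condition on $X$ and use $\E(A\mid X)=e$ and $\E(AY\mid X)=e\,m_1$. The treated part of $\E(\phi_{\widetilde\eta}\mid X)$ is $\widetilde m_1+e(m_1-\widetilde m_1)/\widetilde e$, and the control part is $\widetilde m_0+(1-e)(m_0-\widetilde m_0)/(1-\widetilde e)$. Subtract $m_1-m_0$ and collect terms. The treated arm gives $\delta_1(1-e/\widetilde e)=\delta_1\Delta e/\widetilde e$. The control arm gives $-\delta_0\{1-(1-e)/(1-\widetilde e)\}=\delta_0\Delta e/(1-\widetilde e)$. Averaging over $P_X$ yields \eqref{eq:teacherbias}. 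Here the fixed-nuisance assumption matters: it lets the conditional expectations pass through $\widetilde m,\widetilde e$.

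\textbf{Second moment.} I will decompose $\Delta$ into $(\delta_1-\delta_0)$, a propensity term and an outcome term. For the treated arm,
\[
 \frac{A(Y-\widetilde m_1)}{\widetilde e}-\frac{A(Y-m_1)}{e}
 =-\frac{A\delta_1}{\widetilde e}+A(Y-m_1)\Big(\frac1{\widetilde e}-\frac1e\Big),
\]
and the control arm is symmetric. The factor $1/\widetilde e-1/e$ equals $-\Delta e/(e\widetilde e)$, which is bounded by $\epsilon^{-2}|\Delta e|$. Also $|Y-m_1|\le1$ and $1/\widetilde e\le\epsilon^{-1}$. So $|\Delta|$ is pointwise at most a sum of terms of the form $c\,|\delta_a(X)|$ or $c\,|\Delta e(X)|$, each with $c\le\epsilon^{-2}$ (using $1\le\epsilon^{-1}\le\epsilon^{-2}$). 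On each row only one arm is active, so the terms collapse to about four summands: $|\delta_1|,|\delta_0|$ (from the contrast and the $1/\widetilde e$ pieces) and $|\Delta e|$. Cauchy--Schwarz in the form $(\sum_{j\le4}u_j)^2\le4\sum u_j^2$ then gives $\E\Delta^2\le16\epsilon^{-4}(\|\delta_0\|_2^2+\|\delta_1\|_2^2+\|\Delta e\|_2^2)$ after integrating over $P_X$.

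\textbf{Main obstacle.} The exact bias identity is mechanical. The hard part is the constant bookkeeping to reach exactly $16\epsilon^{-4}$. The contrast term $\delta_1-\delta_0$ and the term $\delta_1/\widetilde e$ both involve $\delta_1$, so I first plan to group $\delta_1(1-A/\widetilde e)$, which is bounded by $\epsilon^{-1}|\delta_1|$, and likewise for $\delta_0$. That leaves three or four grouped terms, each with coefficient at most $\epsilon^{-2}$. If the grouping yields a slightly larger constant, I would refine the pointwise bound using $e\,\widetilde e\ge\epsilon^2$ arm by arm.
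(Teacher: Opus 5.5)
Your proposal is correct and follows the paper's proof essentially step by step: the squared triangle inequality for the first bound, the conditional-on-$X$ computation giving the exact bias $B_P$, the iid split $n\E_{P^n}(\widetilde T-T_P)^2=nB_P^2+\Var_P(\Delta)$ under fixed nuisances, and a pointwise bound on $\Delta$ that is squared and integrated over $P_X$. The constant is not a real obstacle: your grouping into $\delta_1(1-A/\widetilde e)$ and $\delta_0\{1-(1-A)/(1-\widetilde e)\}$ gives $|\Delta|\le\epsilon^{-1}(|\delta_0|+|\delta_1|)+\epsilon^{-2}|\Delta e|$ and hence a constant $3\epsilon^{-4}$. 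That is well inside the paper's $16\epsilon^{-4}$, which the paper obtains from the cruder bound $(1+\epsilon^{-1})(|\delta_0|+|\delta_1|)+\epsilon^{-2}|\Delta e|$ together with $\epsilon\le1/2$.
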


\begin{proof}[Proof of Proposition~\ref{prop:teachererror}]
Write $f-T_P=(f-\widetilde T)+(\widetilde T-T_P)$ and use $(u+v)^2\le2u^2+2v^2$ before taking expectations. This proves the first inequality without any independence requirement.

For the score identity, condition on $X$. The expected treated residual under the true mechanism is
\[
 \E\left[\frac{A(Y-\widetilde m_1)}{\widetilde e}\mid X\right]
 =\frac e{\widetilde e}(m_1-\widetilde m_1),
\]
and the analogous control expectation is $(1-e)(m_0-\widetilde m_0)/(1-\widetilde e)$. Substitute $\widetilde m_a=m_a+\delta_a$ and subtract $m_1-m_0$. Combining coefficients yields exactly
\[
 (\widetilde e-e)\left\{\frac{\delta_1}{\widetilde e}
                         +\frac{\delta_0}{1-\widetilde e}\right\}.
\]
Averaging over $X$ proves~\eqref{eq:teacherbias}. In particular
$|B_P|\le\epsilon^{-1}\norm{\Delta e}_2(\norm{\delta_1}_2+\norm{\delta_0}_2)$ by Cauchy--Schwarz.

Let $W=\phi_{\widetilde\eta}(O)-\phi_P(O)$. The conditionally fixed case means that generator-side randomness is independent of the fresh table $D_n$ at fixed $P$; condition on that randomness throughout. Since $\widetilde\eta$ is then a fixed generator approximation for the mechanism, the rowwise $W_i$ are iid and
\[
 n\E_{P^n}(\widetilde T-T_P)^2=nB_P^2+\Var_P(W).
\]
Using bounded outcomes, bounded outcome regressions, and propensities in the stated interval,
\[
 |W|\le(1+\epsilon^{-1})(|\delta_0|+|\delta_1|)+\epsilon^{-2}|\Delta e|.
\]
Squaring, using $(u+v)^2\le2u^2+2v^2$ twice, and $\epsilon\le1/2$, gives
\[
 \E_P W^2\le16\epsilon^{-4}
    (\norm{\delta_0}_2^2+\norm{\delta_1}_2^2+\norm{\Delta e}_2^2).
\]
Because $\Var_P(W)\le\E_P W^2$, this proves~\eqref{eq:teacherdefect}. A randomized label approximation must be conditioned on its generator-side randomness or supplied with an additional error bound; the iid step is not valid for arbitrary table-dependent approximate nuisances without further analysis.
\end{proof}

\begin{proposition}[A repeated-mechanism diagnostic for fixed-task bias]\label{prop:paired}
For fixed $P$ and a frozen estimator $f$, let $D_n^{(1)},D_n^{(2)}$ be conditionally independent tables. Put $R_j=f(D_n^{(j)})-T_P(D_n^{(j)})$. Then
\[
 \E_{P^n\otimes P^n}(R_1R_2)=\{\E_{P^n} f-\theta(P)\}^2,
 \qquad
 \E_{P^n} R_1^2=\Var_{P^n}(f-T_P)+\{\E_{P^n} f-\theta(P)\}^2.
\]
\end{proposition}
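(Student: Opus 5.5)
The plan is to prove both identities by exploiting the product structure of the two independent tables and the exact label moments from Lemma~\ref{lem:score}. Throughout, $P$ and the frozen map $f$ are fixed, so $R_j$ is a measurable function of $D_n^{(j)}$ alone. Under $P^n\otimes P^n$ the pair $(R_1,R_2)$ is therefore independent and identically distributed. Both $|f|\le H$ and $|T_P|\le H$ hold (the latter by $|\phi_P|\le H$), so every moment below is finite.

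For the cross-moment identity, independence gives $\E(R_1R_2)=\E R_1\,\E R_2=(\E_{P^n}R_1)^2$. Linearity then gives $\E_{P^n}R_1=\E_{P^n}f-\E_{P^n}T_P$. By Lemma~\ref{lem:score}, $\E_{P^n}T_P=\theta(P)$, so $\E_{P^n}R_1=\E_{P^n}f-\theta(P)$. Squaring yields the first display.

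For the second identity, apply the elementary decomposition $\E R_1^2=\Var(R_1)+(\E R_1)^2$. Here $\Var(R_1)=\Var_{P^n}(f-T_P)$ by definition of $R_1$, and the squared mean was computed in the previous step. No independence between $f(D_n)$ and $T_P(D_n)$ is needed; both are evaluated on the same table.

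There is no real obstacle. The only point needing care is the justification that the product of expectations holds: the two tables must be conditionally independent given the fixed mechanism and the frozen (training-conditioned) weights. This matches the convention in Appendix~\ref{app:conventions} of conditioning on the trained network. As a remark, the first identity shows that $R_1R_2$ is an unbiased estimator of the squared fixed-task bias, while $R_1^2$ estimates $\defect(P;f)/n$. Their difference isolates the variance part of the defect, which is the diagnostic role of the statement.
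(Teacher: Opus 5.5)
Your proposal is correct and matches the paper's own proof: independence of the two tables factors $\E(R_1R_2)$, Lemma~\ref{lem:score} gives $\E_{P^n}T_P=\theta(P)$ so each residual mean equals $\E_{P^n}f-\theta(P)$, and the second identity is just the variance decomposition of $R_1$. Your integrability check via $|f|\le H$ and $|T_P|\le H$ is a small addition that the paper leaves implicit.
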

\begin{proof}
Conditional independence factors the first expectation, and Lemma~\ref{lem:score} makes each residual mean equal to $\E_{P^n} f-\theta(P)$. The second equality is the definition of variance. The sample product can be negative even though its expectation is nonnegative. The diagnostic therefore does not by itself define a nonnegative per-table training loss or control the efficient variance. This is why it is secondary to FSP, rather than a replacement for the fluctuation label.
\end{proof}

\smallhead{Identification remains a maintained boundary.}
A simulator supplies a valid efficient label only relative to its declared identifying model. If real unmeasured confounding breaks exchangeability, the observed-law functional~\eqref{eq:target} can remain estimable while differing from the causal ATE. None of the label, approximation, or pretraining theorems changes that fact. Likewise, clustered sampling needs a cluster-level efficient experiment; continuous treatments need their own score and support conditions. These extensions are not included in the present guarantees.

\section{Proof of the continuous-confounder extension}\label{app:continuous}
\begin{theorem}[Continuous causal-label learnability]\label{thm:continuous}
Let $X\in[0,1]^d$ have density bounded below by $c_*>0$, retain exchangeability at the original $X$, and let $m_a,e$ be H\"older with exponents $\beta_m,\beta_e\in(0,1]$. For a regular cube partition with $h^{-1}\in\mathbb N$, there is a bounded observable histogram comparator $S_{n,h}$, approximable by table attention, such that
\begin{equation}\label{eq:continuous}
 n\E_{P^n}(S_{n,h}-T_P)^2\le C\left\{
 (nh^d)^{-1}+nh^{2(\beta_m+\beta_e)}+h^{2\beta_m}+h^{2\beta_e}
 +nh^{-d}e^{-c_*\epsilon nh^d/8}\right\}=:L_{n,h}.
\end{equation}
If the architecture class (which may grow with $h^{-d}$) contains $f_{0,h}$ with $\sup_D|f_{0,h}(D)-S_{n,h}(D)|\le a_{n,h}$, Theorem~\ref{thm:main} holds with $6nA_n$ replaced by $6L_{n,h}$ and $a_n$ by $a_{n,h}$. If $\beta_m+\beta_e>d/2$, a bandwidth $h_n\asymp n^{-\alpha}$ with $1/\{2(\beta_m+\beta_e)\}<\alpha<1/d$ makes $L_{n,h_n}\to0$.
\end{theorem}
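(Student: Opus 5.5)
The plan is to mimic the finite-stratum argument of Theorem~\ref{thm:learnability}, treating the $h^{-d}$ cubes $B_1,\ldots,B_J$ of the regular partition as strata. Then I would absorb the within-cube heterogeneity of $m_a$ and $e$ as deterministic smoothing error.

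\textbf{Construction.} Define $S_{n,h}$ exactly as in~\eqref{eq:Sn}, with $X$ replaced by its cell index. Counts $N_{as}$ are truncated below at $n\underline q_h/2$, where $\underline q_h=c_*h^d\epsilon$ lower-bounds $P(X\in B_s,A=a)$. Approximability by table attention follows from Lemma~\ref{lem:architecture} with $K=J$ and $q_*=\underline q_h$. The good event $G_n$ is defined as before. By Lemma~\ref{lem:counts} with $q_*=\underline q_h$, $P(G_n^c)\le 2h^{-d}e^{-c_*\epsilon nh^d/8}$. Because $|S_{n,h}-T_P|\le H+1$, this event contributes the last term of $L_{n,h}$ after multiplying by $n$.

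\textbf{Decomposition on $G_n$.} Let $\bar m_{as}=\E(Y\mid X\in B_s,A=a)$ and $\bar e_s=P(A=1\mid X\in B_s)$. I would write
\[
S_{n,h}-T_P=\sum_{s,a}(2a-1)\frac1n\sum_{i:X_i\in B_s,A_i=a}\Bigl\{\frac{N_s}{N_{as}}-\frac1{e_a(X_i)}\Bigr\}(Y_i-m_a(X_i))+R_n,
\]
where $R_n$ collects the terms involving $m_a(X_i)-\bar m_{as}$. I would then split each noise coefficient as
\[
\frac{N_s}{N_{as}}-\frac1{e_a(X_i)}=\Bigl(\frac{N_s}{N_{as}}-\frac1{\bar e_{as}}\Bigr)+\Bigl(\frac1{\bar e_{as}}-\frac1{e_a(X_i)}\Bigr).
\]
Conditional on all $(X_i,A_i)$, the residuals $Y_i-m_a(X_i)$ are independent and centered. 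The first piece then reproduces the argument of Theorem~\ref{thm:learnability}: its contribution is at most $\sum_s\E(N_s-N_{as}/\bar e_{as})^2/(n^2N_{as})\lesssim n^{-2}\sum_s nh^d/(nh^d)$. That is of order $(n^2h^d)^{-1}$, which gives $(nh^d)^{-1}$ after multiplying by $n$. The second piece is $O(h^{\beta_e})$ pointwise by H\"older continuity and positivity. Its contribution is a centered average with variance $O(h^{2\beta_e}/n)$, which gives the $h^{2\beta_e}$ term.

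\textbf{The bias remainder $R_n$.} Reorganizing, $R_n$ is a sum over cells of terms like $\frac1n\sum_{i\in s}\{1-A_i/e(X_i)\}(m_1(X_i)-\bar m_{1s})$, plus the control analogue. Each such term is an iid average with mean
\[
\E[\ind_{B_s}(1-e/\bar e_s)(m_1-\bar m_{1s})]+\text{(small)}.
\]
The within-cell covariance of $e$ and $m_a$ is $O(h^{\beta_e+\beta_m})$. Summed over cells, this gives a deterministic bias of order $h^{\beta_m+\beta_e}$, whose square times $n$ yields $nh^{2(\beta_m+\beta_e)}$. The fluctuation part has variance $O(h^{2\beta_m}/n)$ and yields $h^{2\beta_m}$. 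Replacing the random $N_s/N_{as}$ by $1/\bar e_s$ costs only cross terms already dominated by the products above, which I would handle by Cauchy--Schwarz.

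\textbf{Main obstacle.} The hardest step is obtaining the product rate $h^{\beta_m+\beta_e}$ rather than $h^{\beta_m}$. This requires the exact mixed-bias identity: centering both $e$ and $m_a$ at their cell means, as in Proposition~\ref{prop:teachererror}'s formula~\eqref{eq:teacherbias}. If it fails, I would fall back on that proposition, treating cell-constant $(\bar m,\bar e)$ as approximate nuisances.

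\textbf{Remaining claims.} The transfer to Theorem~\ref{thm:main} is verbatim, since its proof uses $A_n$ only through $\risk_\Pi(f_0)\le 2a^2+2\E(S-T_P)^2$. For the bandwidth choice, note $nh^d\to\infty$ iff $\alpha<1/d$, and $nh^{2(\beta_m+\beta_e)}\to0$ iff $\alpha>1/\{2(\beta_m+\beta_e)\}$. The exponential term then vanishes since $nh^d$ grows polynomially.
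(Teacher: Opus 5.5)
Your decomposition is sound, and it is the paper's algebra organized differently. The paper inserts the intermediate label $T_B=\Pn\phi_{\widetilde\eta}$, built from the binned observed-law nuisances $(\bar m_{as},\bar e_s)$. It bounds $S_{n,h}-T_B$ by applying the count-product identity of Theorem~\ref{thm:learnability} verbatim to the coarsened law of $(B_h(X),A,Y)$, with $K=h^{-d}$ and $p_*=c_*h^d$. It bounds $T_B-T_P$ by Proposition~\ref{prop:teachererror}. Your propensity-noise piece plus your bias remainder (after replacing $N_s/N_{as}$ by $1/\bar e_{as}$) is exactly $T_B-T_P$. Your within-cell covariance sum $\sum_sp_s\Cov(e,m_1\mid B_s)/\bar e_s$ is exactly the treated part of the mixed bias $B_P$ in~\eqref{eq:teacherbias}. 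So your ``fallback'' is the paper's main argument, and the product rate is not an obstacle on either route. One correction: before the replacement, the coefficient in $R_n$ is $1-A_iN_s/N_{1s}$, and after it, $1-A_i/\bar e_s$. With $1-A_i/e(X_i)$, as you wrote, the mean would vanish and the mixed bias would disappear.

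The step that fails as written is the replacement cost. Because you condition on the full $(X_i,A_i)$, you split $Y_i-\bar m_{as}$ into $\varepsilon_i$ and $m_a(X_i)-\bar m_{as}$. This leaves the cross term
\[
 D=\frac1n\sum_s\Bigl(\frac{N_s}{N_{1s}}-\frac1{\bar e_s}\Bigr)\sum_{i:X_i\in B_s,A_i=1}\{m_1(X_i)-\bar m_{1s}\},
\]
plus its control analogue. Each cell's summand has $L_2$ norm of order $h^{\beta_m}/n$. A triangle or Cauchy--Schwarz inequality over the $h^{-d}$ cells therefore gives only $n\E D^2\lesssim h^{2\beta_m-2d}/n$, a factor $h^{-d}$ worse than $h^{2\beta_m}(nh^d)^{-1}$. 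That is not $O(L_{n,h})$ when $2\beta_m<d$. For example, take $d=2$, $\beta_m=1/2$, $\beta_e=1$, which satisfies $\beta_m+\beta_e>d/2$. Along $h_n\asymp n^{-\alpha}$ with $\alpha\in(1/3,1/2)$, this bound grows like $n^{3\alpha-1}$.

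What you need is cross-cell orthogonality. Conditional on the cell labels and treatments $(B_h(X_i),A_i)_{i\le n}$, the rows stay independent. Moreover $\E[m_1(X)\mid B_h=s,A=1]=\bar m_{1s}$, because $\bar m_{1s}$ is the observed-law bin mean. Hence $\E[D^2\mid(B_h(X_i),A_i)_i]$ is a sum of per-cell conditional variances. It is at most a constant times $h^{2\beta_m}$ times your imbalance bound, i.e.\ $O\{h^{2\beta_m}/(n^2h^d)\}$, which is absorbed into $(nh^d)^{-1}$ after multiplying by $n$.

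Simpler still is not to split at all. Conditioning on $(B_h(X_i),A_i)$ instead of $(X_i,A_i)$ treats $Y_i-\bar m_{as}$ as a single centered residual with variance at most $1/4$. That is why the paper's route needs no separate cross-term bookkeeping.
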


\begin{proof}[Proof of Theorem~\ref{thm:continuous}]
Use Euclidean H\"older conditions
\[
 |m_a(x)-m_a(x')|\le L_m\norm{x-x'}^{\beta_m},\qquad
 |e(x)-e(x')|\le L_e\norm{x-x'}^{\beta_e}.
\]
Let $B_h(X)$ index the $h$-cubes, and let $p_b=P(B_h=b)$. The density lower bound gives $p_b\ge c_*h^d$. Define the actual \emph{observed-law} bin quantities
\[
 e_b=P(A=1\mid B_h=b),\qquad
 m_{ab}=\E(Y\mid A=a,B_h=b).
\]
Importantly, $m_{ab}$ is not assumed to equal $\E[Y(a)\mid B_h=b]$. Conditional exchangeability at the original $X$ yields
\[
 m_{1b}=\frac{\E[e(X)m_1(X)\mid B_h=b]}{\E[e(X)\mid B_h=b]},\qquad
 m_{0b}=\frac{\E[(1-e(X))m_0(X)\mid B_h=b]}{\E[1-e(X)\mid B_h=b]}.
\]
These are weighted averages of $m_a(X)$ within the bin. They lie between its infimum and supremum there. Writing $\widetilde m_a(x)=m_{a,B_h(x)}$ and $\widetilde e(x)=e_{B_h(x)}$, the bin diameter $\sqrt d h$ gives
\[
 \sup_x|\widetilde m_a(x)-m_a(x)|\le L_md^{\beta_m/2}h^{\beta_m}=:c_mh^{\beta_m},
 \quad \sup_x|\widetilde e(x)-e(x)|\le L_ed^{\beta_e/2}h^{\beta_e}=:c_eh^{\beta_e}.
\]
The binned propensity is still in $[\epsilon,1-\epsilon]$. Put $T_B=\Pn\phi_{\widetilde\eta}$. Proposition~\ref{prop:teachererror}, which is a score identity under the original $X$-identified model, gives
\begin{equation}\label{eq:continuous-labeldiff}
 n\E_{P^n}(T_B-T_P)^2\le
 4\epsilon^{-2}c_e^2c_m^2\,nh^{2(\beta_m+\beta_e)}
 +16\epsilon^{-4}(2c_m^2h^{2\beta_m}+c_e^2h^{2\beta_e}).
\end{equation}
This step explicitly pays for residual confounding within bins.

The purely algebraic finite-cell learnability proof in Appendix~\ref{app:learnability} uses the observed conditional means and probabilities, not the causal interpretation of its coarse functional. Apply it to $(B_h,A,Y)$, with $K=h^{-d}$ and $p_*=c_*h^d$. Let $S_{n,h}$ be~\eqref{eq:Sn} with denominator floor $c_*\epsilon h^d/2$. It is bounded by one and satisfies
\[
 n\E_{P^n}(S_{n,h}-T_B)^2\le
 \frac1{nc_*\epsilon^2h^d}
 +2nh^{-d}(H+1)^2e^{-c_*\epsilon nh^d/8}.
\]
Combining this bound with~\eqref{eq:continuous-labeldiff} by the squared triangle inequality proves~\eqref{eq:continuous}, for a constant depending only on the displayed model quantities. Deterministic bin labels can be fed to the comparator-attention construction of Lemma~\ref{lem:architecture}; no unknown causal parameter enters that row encoder. This deterministic bin encoder is part of the input map, so uniform network approximation is over the encoded rows. Repeating the ERM proof with this comparator yields the stated modification of Theorem~\ref{thm:main}.

For $h_n\asymp n^{-\alpha}$, the cell term vanishes when $\alpha d<1$, and the coarsening term vanishes when $2\alpha(\beta_m+\beta_e)>1$. The remaining powers of $h_n$ then vanish. Because $nh_n^d$ is a positive power of $n$, the exponentially decreasing last term dominates its polynomial factor and also vanishes. The interval of admissible $\alpha$ is nonempty exactly under the stated strict smoothness inequality. At or below its boundary this particular construction gives no efficient root-$n$ claim; no impossibility theorem for every possible estimator is inferred from that fact.
\end{proof}

\smallhead{An analytic example for the numerical diagnostic.}
In the separate continuous-covariate experiment, $X\sim\operatorname{Uniform}[0,1]$, $e(x)=0.15+0.7x$, $m_0(x)=0.12+0.5x$, and $m_1(x)=m_0(x)+0.07$. Thus the true ATE is exactly $0.07$. On a bin of width $h$, $\Cov(e(X),m_a(X)\mid B)=0.7\cdot0.5\,h^2/12$. If $e_b$ is the propensity at its center, then
\[
 m_{1b}-m_{0b}=0.07+\frac{0.7\cdot0.5\,h^2}{12e_b(1-e_b)}.
\]
This follows directly from the two weighted-average identities above. Averaging over the equally probable bins gives the exact population coarsening bias used in the figure. A large bin is biased even with infinitely many rows; too many small bins produce empty-cell instability. This diagnostic isolates the histogram comparator; Appendix~\ref{app:experiments} separately evaluates neural FSP on unbinned continuous rows.

\section{Complete experimental specification}\label{app:experiments}
The evaluation separates three questions: learning the realized efficient fluctuation, converting that response into calibrated inference, and reusing the frozen rule on new populations. Table~\ref{tab:traceability} connects these questions to theorems; the specifications below distinguish the original three-seed summary experiments from the five-seed extensions and the validation-selected aligned comparison.

\subsection{Metrics, aggregation and uncertainty}\label{app:metric-contract}
For a fixed mechanism, write $\widehat\theta_{br}=f_b(D_r)$ for checkpoint $b$ on common table $r$, $T_r=\theta+P_{n,r}\psi_P$, and $B,R$ for the numbers of checkpoints and tables. Point RMSE estimates $\{\E(\widehat\theta-\theta)^2\}^{1/2}$; native defect estimates $n\E(\widehat\theta-T)^2$. The seven-target experiment reports the through-origin fluctuation coefficient
\[
 \widehat\beta_b=\frac{\sum_{r=1}^R(\widehat\theta_{br}-\theta)(T_r-\theta)}{\sum_{r=1}^R(T_r-\theta)^2}.
\]
The continuous-neural experiment instead uses an ordinary least-squares slope with an intercept, centering both predictions and teacher labels at their empirical means. Both quantify response to the efficient fluctuation; their finite-sample centering conventions differ.

Throughout the experiments, \emph{latent-effect supervision} means the table-invariant target $T_{0,P}(D_n)=\theta(P)$, not a latent-variable model. ``Raw latent-effect,'' ``Latent-PFN,'' and ``Neural latent-effect'' apply this same $\lambda=0$ target to the raw-row/column, original summary, and continuous-row architectures, respectively; ``latent-effect'' is the shortened panel label when space is limited.

For the five-seed extension comparisons, including Figure~\ref{fig:main-mechanism}a--c, the reported RMSE is the mean of checkpoint-specific RMSEs, $B^{-1}\sum_b\{R^{-1}\sum_r(\widehat\theta_{br}-\theta)^2\}^{1/2}$. The randomized-data ensemble first averages predictions, $\bar\theta_r=B^{-1}\sum_b\widehat\theta_{br}$, then computes RMSE over $r$. Averaging squared losses first and taking one square root defines a third, pooled-loss RMSE. The original three-seed summary table (Table~\ref{tab:main}) uses pooled-loss RMSE. These averaging orders are kept separate. Coverage, variance ratios and Kolmogorov distances in the synthetic diagnostics are computed per checkpoint before averaging.

The number of deployment repetitions is experiment-specific: 2,000 in the original summary study and semisynthesis, 1,000 in the seven-target/raw extension, 300 common tables per scenario--$n$ cell in the aligned comparison, 400 per continuous mechanism--length cell, and 1,500 for each randomized-data endpoint and context length. Prediction rows from multiple checkpoints on one table share that table's sampling noise. Across-seed $95\%$ $t$ intervals use the independent training seeds; table-bootstrap intervals condition on the fitted checkpoints. Wilson coverage intervals use the actual number of deployment repetitions in their cell.

\subsection{Paired uncertainty for the main baseline comparison}
The comparison uses the same 300 locked-final tables for five Summary-FSP, five Raw-FSP, five Raw-latent-effect checkpoints and every baseline. We resampled table indices jointly across methods 2,500 times, keeping all checkpoints fixed. Against CausalPFN-S, Summary FSP reduced mean-checkpoint RMSE by $9.65\%$, $9.78\%$ and $13.16\%$ at the prior center, under effect shift and at the overlap boundary; paired intervals were $[6.28,13.02]\%$, $[6.02,13.36]\%$ and $[9.29,16.81]\%$. Its corresponding defect reductions were $86.14\%$, $79.02\%$ and $79.94\%$, with intervals $[83.79,88.37]\%$, $[74.70,82.71]\%$ and $[76.38,83.02]\%$.

The matched raw pair changes only supervision. At the prior center, Raw latent-effect attains lower point RMSE ($.0176$ versus $.0621$) by shrinking toward the training prior, but its response slope is $.214$ rather than Raw FSP's $.955$. Effect shift reverses that RMSE ordering: Raw FSP reduces RMSE by $54.21\%$ ($95\%$ paired interval $[51.03,57.57]\%$) and teacher defect by $98.96\%$ ($[98.81,99.10]\%$), with response slopes $.972$ versus $-.058$. Thus effect shift is the regime that most sharply exposes the target mismatch. Replicate predictions, common table hashes, and bootstrap draws are supplied with the aligned evaluation and plotting scripts.
On the same shifted stream, Raw FSP's RMSE of $.0624$ is $10.19\%$ below CausalPFN-S's $.0695$.

\subsection{Synthetic mechanisms and actual observed rows}
We use four fixed stratum descriptors $z_s\in\{(-1,-1),(-1,1),(1,-1),(1,1)\}$. For every independent training episode,
\[
 (\widetilde p_1,\ldots,\widetilde p_4)\sim\operatorname{Dirichlet}(8,8,8,8),\qquad
 p_s=0.07+0.72\widetilde p_s.
\]
Independently draw $c\sim\operatorname{Uniform}[-2.3,0]$, $\beta\sim N(0,0.4^2I_2)$, and $b\sim N(0,0.2^2)$. The control means are
\[
 m_{0s}=\operatorname{expit}(c+\beta^\top z_s+b z_{s1}z_{s2}).
\]
Draw $\Delta\sim N(0,0.035^2)$ and $\gamma\sim N(0,0.015^2I_2)$, and set
\[
 m_{1s}=\operatorname{clip}_{[0.015,0.985]}(m_{0s}+\Delta+\gamma^\top z_s).
\]
For propensity, draw $c_e\sim N(-0.35,0.5^2)$ and $\beta_e\sim N(0,0.55^2I_2)$, and let
$e_s=\operatorname{clip}_{[0.15,0.85]}\{\operatorname{expit}(c_e+\beta_e^\top z_s)\}$.
Rows are generated independently by $X_i\sim p$, $A_i\mid X_i=s\sim\operatorname{Bernoulli}(e_s)$, and $Y_i\mid X_i=s,A_i=a\sim\operatorname{Bernoulli}(m_{as})$. A compatible potential-outcome model draws Bernoulli potential outcomes independently of assignment given $X$; their unobserved joint dependence is irrelevant to the ATE. No oracle effect or artificial route bias is added after the rows are generated.

The heterogeneous evaluation uses 2,000 fresh mechanisms and one table per mechanism in each cell. Effect shift replaces $\Delta$ with a random sign times $\operatorname{Uniform}[0.13,0.20]$. Weak overlap uses an intercept $N(-1,0.4^2)$ and $1.8\beta_e$, clipped only to $[0.035,0.965]$. The null configuration sets $\Delta=0$ and $\gamma=0$; the final clipping convention is retained. Because $m_0$ can lie outside the treatment-mean clipping range, this is a zero-increment generator before clipping, not an assertion that every clipped task has exactly zero ATE. Actual truth always uses $\sum_sp_s(m_{1s}-m_{0s})$, not the requested increment. This distinction is represented in the raw outputs.

The original summary-backbone fixed-mechanism experiments instead use
\[
 p=(0.20,0.30,0.30,0.20),\quad e=(0.20,0.38,0.62,0.78),\quad
 m_0=(0.12,0.23,0.34,0.48),
\]
with $m_1-m_0=(0.025,0.035,0.015,0.025)$. The large-effect mechanism adds $0.15$ to every contrast, giving exactly $\theta=0.175$. The aligned main comparison uses the overlap boundary $e=(0.15,0.20,0.80,0.85)$; the more extreme $e=(0.04,0.12,0.78,0.95)$ design is retained as an out-of-domain stress test. Each configuration and $n\in\{64,128,256,512,1024\}$ has independent tables, and all methods within a cell receive identical rows.

\subsection{Network inputs, training and absence of deployment correction}
The six coordinates of stratum token $s$ are
\[
 (N_s/n,\ N_{1s}/n,\ Z_{1s}/n,\ Z_{0s}/n,\ \log(n)/6,\ n^{-1/2}).
\]
They are deterministic statistics of observed rows. There is no propensity estimate, outcome-regression estimate, teacher score, or true task label in an input token. The first four entries jointly determine the empirical categorical law; ordering of the four tokens is irrelevant to the aggregation. The network has a $6\to48\to48$ GELU embedding, one four-head Transformer encoder layer with width 48 and feed-forward width 96, no dropout, and a local $54\to64\to32\to1$ GELU readout. A $2\tanh(\cdot/2)$ transformation bounds each local contrast. Weighting by $N_s/n$ gives the final effect. A separate $48\to32\to1$ head reads the detached aggregate representation and predicts log variance.

Training uses AdamW, initial learning rate $0.0012$, cosine decay to $0.00006$, weight decay $10^{-5}$, batch size 256, 60 epochs, and gradient-norm clipping at 2. The scalar loss is mean squared target error plus $0.002$ times squared log-variance-label error, with $V$ floored at $0.03$ before its logarithm. The variance branch does not send gradients into the mean backbone, although both branches share the global optimization implementation. Targets are exactly $\theta$, $T_P$, $(\theta+T_P)/2$, and $T_P+0.025$ for the four recorded variants. There is no externally fitted nuisance model at test time.

For each $M$ and seed, the same generator stream supplies equal counts of the four training lengths: $M$ is the total number of episodes, and each length receives $M/4$. The fixed-$n$ theory applies to a length-conditioned training experiment with its own episode count. An independent 2,000-task, in-prior validation pool selects the epoch with the smallest loss for that method's own mean target. No shifted evaluation task or real observation participates in this selection. Seeds $0,1,2$ are used at all three primary training budgets; blended and biased-label ablations use seed zero and $M=32768$. The original suite contains all 20 checkpoints and epoch histories. For the aligned comparison, the lowest mean own-label validation loss over common seeds $0$--$2$ selected Summary FSP at $M=32768$; seeds $3$--$4$ were trained only after this budget was locked. Test metrics never enter budget or checkpoint selection.

\subsection{Partial fluctuation and the raw row/column backbone}\label{app:lambda-experiment}
The supervision-path study fixes the summary architecture, generator, optimizer and compute, then trains $\lambda\in\{0,.25,.5,.75,.9,.95,1\}$ with $M=8192$, 40 epochs, and seeds $0,\ldots,4$. Evaluation uses 1,000 common tables in every mechanism--$n$ cell. Besides target RMSE and FSP defect, we regress the frozen prediction on the realized efficient-label fluctuation within a fixed mechanism; its slope is the empirical sampling-response coefficient in Figure~\ref{fig:lambda-path}. The replicate-level outputs also report own-label risk, $n$- and $n^2$-scaled risk, fixed-mechanism bias, and sampling variance for every executed $(\lambda,n)$ cell.
At the typical mechanism and $n=256$, checkpoint-mean full-label defect falls from $.82815$ at $\lambda=0$ to $.09496$ at $\lambda=1$, an $8.72$-fold reduction; the five $\lambda=1$ checkpoints have mean native coverage $.952$.

\begin{figure}[t]\centering
\includegraphics[width=.94\linewidth]{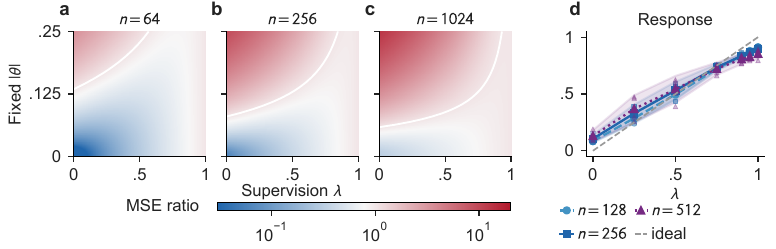}
\caption{\textbf{Changing supervision changes the sampling response.} (a--c) Exact Gaussian MSE relative to FSP ($s=.035$, $v=1$); the white contour marks equal MSE. (d) All 105 seedwise coefficients at the typical mechanism: seven targets, five seeds, three context lengths, $M=8192$. Lines are means; bands are $95\%$ across-seed $t$ intervals. Appendix~\ref{app:metric-contract} defines the coefficient.}\label{fig:lambda-path}
\end{figure}

\begin{figure}[p]\centering
\includegraphics[width=\linewidth]{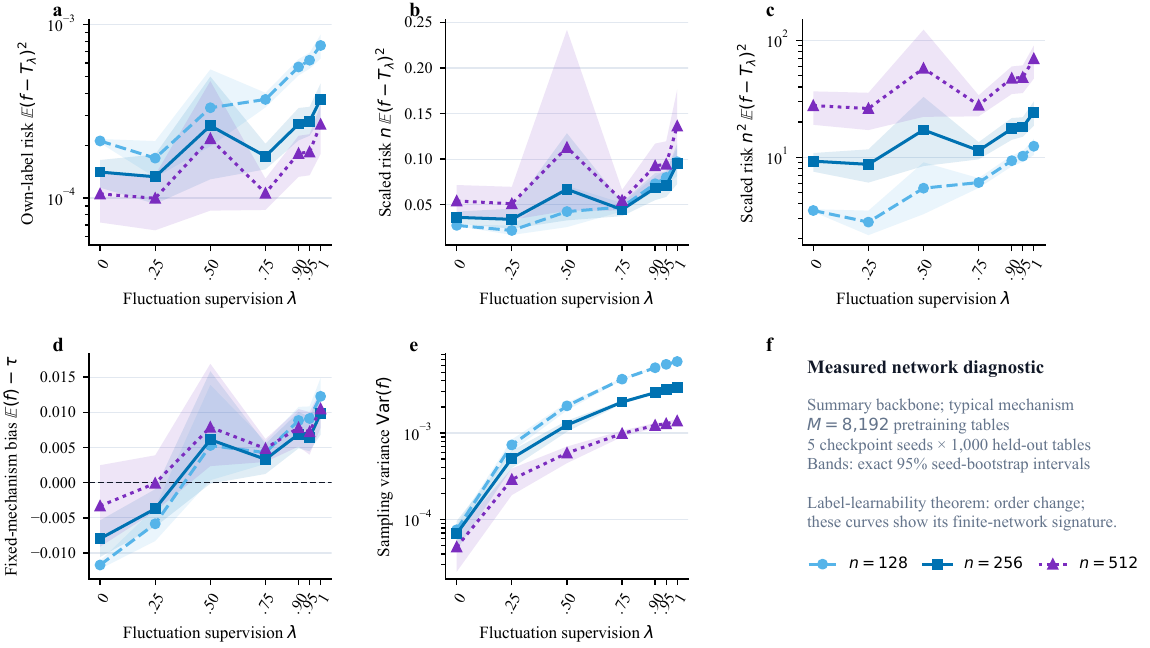}
\caption{\textbf{Seven-target finite-network diagnostics.} Panels a--c report own-label risk and its $n$- and $n^2$-scaled versions; panels d--e report fixed-mechanism bias and sampling variance. Every mark averages five independently trained checkpoints over 1,000 common held-out tables. Intervals are percentile intervals from all $5^5$ ordered seed-bootstrap resamples. These finite-network diagnostics complement the population order transition in Theorem~\ref{thm:learnability}.}\label{fig:lambda-scaling}
\end{figure}

The raw model receives only $n\times4$ arrays $(X_1,X_2,A,Y)$, with the two binary covariates coded in $\{-1,1\}$. A value map and learned role embedding feed a four-column, four-head Transformer inside each row. Eight inducing tokens cross-attend to all row representations, a second four-head block mixes those queries, and a direct raw-row residual is fused with the attention summary. Width is 32, feed-forward width is 64, dropout is zero, and the bounded effect head is $32\to64\to32\to1$. The detached variance head is $32\to32\to1$. This computation is permutation-invariant over rows; an explicit 16-table permutation test changed effects by at most $1.94\times10^{-7}$ and log variances by at most $2.69\times10^{-7}$. The original diagnostic suite uses five Raw FSP and five Raw latent-effect models with $M=8192$ and 25 epochs. The aligned comparison retrains both targets on the same five $M=32768$ table streams and seeds for 40 epochs; architecture, optimizer, batch size and own-label checkpoint selection are otherwise identical. AdamW starts at $1.5\times10^{-3}$, decays to $6\times10^{-5}$, uses weight decay $10^{-5}$, gradient clipping at 2 and batch size 128 on Apple MPS. The first raw architecture collapsed toward a constant at $M=2048$; its checkpoints and outputs are retained as the \texttt{raw\_v1\_pilot} artifacts, and the higher-capacity residual architecture is the declared V2 evaluation.

\subsection{Released baseline protocol}
We evaluate the official \texttt{vdblm/CausalPFN} \texttt{ATEEstimator} at repository commit \texttt{7da4afa3\allowbreak 5affea0f\allowbreak 1fb58893\allowbreak e9f41e87\allowbreak dafdc7eb} with released checkpoint SHA-256 \texttt{4f0f5371\ldots841d}. CausalPFN-S is run on the fresh locked stream of exactly 300 tables in every scenario--$n$ cell. This final stream is disjoint from the former development stream; the earlier S/RA evaluation on the first 300 of 1,000 tables remains a historical artifact and does not enter current figures, tables or headline claims. Single-thread CPU inference avoids a macOS OpenMP collision. The audit reconstructs every aligned cell independently and records its raw-array SHA-256 fingerprint. The official Do-PFN checkpoint~\citep{dopfn}, repository commit \texttt{90d67433\allowbreak b43c4d52\allowbreak d752dc33\allowbreak 6070f525\allowbreak ff856e0b}, is evaluated at $n=256$ on those aligned tables after exact hash joining, by averaging its predicted CATE over observed covariates. Its published implementation specifies continuous numerical covariates and outcomes; our binary table experiment is consequently an out-of-domain transfer stress test, not a like-for-like benchmark. Its frozen weights are reused, and the adapter performs no external outcome normalization or test-time fine-tuning. The full checkpoint hash and per-table predictions are in \path{results/dopfn_run_manifest.json}. CausalFM was pinned at commit \texttt{7808c62}, whose evaluation notebook refers to an unreleased \texttt{best\_model.pth}; the OSPC paper-facing sources expose no runnable checkpoint. The audit records these repository-level outcomes and reports numerical scores only for completed official inference paths.

\subsection{Baselines and interval semantics}
The stratified baseline uses empirical stratum weights and within-arm outcome means, assigning $1/2$ to an empty arm cell. Its smoothed counterpart replaces an arm mean by $(Z_{as}+1/2)/(N_{as}+1)$. Both use the same finite-stratum plug-in sampling-variance formula, with empirical propensities clipped to $[0.025,0.975]$. These are asymptotic plug-in intervals, not exact small-sample coverage statements. The unadjusted contrast uses the ordinary Bernoulli two-group sample-variance formula for its own mean difference. The oracle-label diagnostic uses the true simulator $V(P)$; it is never an implementable baseline.

For the current paired comparison, the classical meta-learners are fit on every table in the same fresh 300-table scenario--$n$ cells used by Figure~\ref{fig:main-mechanism}; Figure~\ref{fig:paired-baseline-profiles} uses their $n=256$ slice. S-, T-, and X-learners follow \citet{kunzel2019}; the DR-learner follows \citet{kennedy2023dr}. Their conditional-effect predictions are averaged over the observed covariates to estimate the ATE. Binary-outcome regressions use four observed-stratum indicators and fixed $L_2$-logistic fits with $C=1$; X- and DR-learner second stages use ridge regression with penalty one. Cross-fitted AIPW/DML uses five treatment-stratified folds and propensities clipped to $[.05,.95]$ \citep{dml}. Every fit uses only its current deployment table, with no simulator quantities or test-mechanism tuning. With an intercept in the DR ridge fit, its empirical-covariate average equals the mean AIPW score, so their ATE entries coincide by construction. The oracle teacher is used solely to score sampling defect and response slope.

Table~\ref{tab:paired-baselines} and Figure~\ref{fig:paired-baseline-profiles} separate point accuracy, efficient-teacher defect and fluctuation response. Summary FSP improves RMSE/defect over CausalPFN-S at the prior center ($9.6\%/86.1\%$), under effect shift ($9.8\%/79.0\%$), and at the overlap boundary ($13.2\%/79.9\%$). The controlled raw comparison is sharper: prior-centered shrinkage gives Raw latent-effect a low in-center RMSE, whereas effect shift raises its RMSE to $.1364$, defect to $5.876$, and flips its response slope to $-.058$; Raw FSP records $.0624$, $.061$, and $.972$. These joint readouts show why point accuracy, teacher fidelity and response must be assessed together.

\begin{table}[t]\centering\small
\caption{\textbf{Paired fixed-mechanism baselines.} All estimates use the same 300 held-out tables in each scenario ($n=256$). Summary FSP, Raw FSP and Raw latent-effect each use five $M=32768$ checkpoints; the raw pair shares architecture, table streams, optimizer and seeds and differs only in the label. CausalPFN-S and Do-PFN each use one released checkpoint, and classical rows refit once per table. Teacher defect $\defect=n\mathbb E(\widehat\theta-T_P)^2$ and the through-origin response slope use the oracle teacher only for evaluation. $^{\dagger}$Do-PFN is tested outside its published continuous-input/output domain. $^{\ddagger}$The DR empirical-$X$ average equals AIPW exactly under this intercept fit.}\label{tab:paired-baselines}
\resizebox{\linewidth}{!}{\begin{tabular}{@{}lrrrrr@{}}\toprule
Method & Typical RMSE & Shifted RMSE & Boundary RMSE & Shifted $\defect$ & Shifted slope \\ \midrule
Summary FSP & 0.0634 & 0.0627 & 0.0696 & 0.042 & 0.985 \\
Raw FSP & 0.0621 & 0.0624 & 0.0650 & 0.061 & 0.972 \\
Raw latent-effect & 0.0176 & 0.1364 & 0.0161 & 5.876 & -0.058 \\
CausalPFN-S & 0.0702 & 0.0695 & 0.0802 & 0.200 & 1.021 \\
S-learner & 0.0576 & 0.0565 & 0.0644 & 0.061 & 0.879 \\
T-learner & 0.0671 & 0.0664 & 0.0776 & 0.157 & 0.988 \\
X-learner & 0.0647 & 0.0642 & 0.0766 & 0.044 & 1.008 \\
DR-learner$^{\ddagger}$ & 0.0664 & 0.0663 & 0.0798 & 0.077 & 1.026 \\
AIPW/DML$^{\ddagger}$ & 0.0664 & 0.0663 & 0.0798 & 0.077 & 1.026 \\
Do-PFN$^{\dagger}$ & 0.2159 & 0.1258 & 0.3092 & 2.276 & 1.391 \\
\bottomrule\end{tabular}
}
\end{table}

\subsection{Measured deployment cost}
The deployment benchmark replays the same 300 fixed-mechanism tables per scenario on an Apple M4 Max CPU with PyTorch and every numerical library restricted to one thread. Quality is the mean of checkpoint-specific RMSEs, not ensemble RMSE. Warm singleton latency includes observed-row preprocessing and all method-specific work: summary aggregation for FSP, nuisance fitting for the classical estimators, and the released CausalPFN \texttt{fit}/\texttt{estimate\_ate} pipeline, including its per-table gradient-boosting weak learner. Table generation, metric calculation and file I/O are excluded. Cold model construction and checkpoint loading are recorded separately; the cumulative curves in Figure~\ref{fig:amortization-evidence}d include one cold load and then process $Q\in\{1,3,10,30,100,300\}$ new tables. FSP batch sizes are fixed at 512 for the summary map and 128 for the raw map; the other methods refit table by table.

Averaging scenario-specific median warm latencies, one deployed Summary-FSP checkpoint takes $.1381$ ms per table, versus $1.6045$ ms for S-learner and $205.9508$ ms for CausalPFN-S: speedups of $11.62\times$ and $1{,}491.27\times$. Raw FSP takes $.6759$ ms on average. At $Q=300$, including one load, median batched totals are $.00832$ s for Summary FSP and $.19677$ s for Raw FSP, versus $.41296$ s for the fastest refitted comparator, T-learner. These are measured wall times. The timing manifest records the environment, checkpoint/source hashes and all repeats.

\begin{figure}[t]\centering
\includegraphics[width=.94\linewidth]{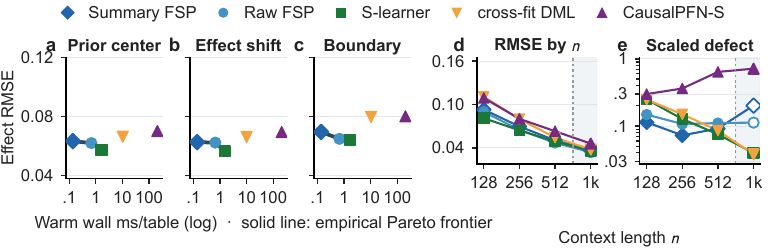}
\caption{\textbf{Statistical quality and deployment cost define an empirical Pareto frontier.} (a--c) Mean-checkpoint RMSE versus warm one-checkpoint wall latency on one CPU thread (300 common $n=256$ tables); ``warm'' excludes one-time model loading but includes preprocessing and all per-table work. Solid charcoal segments connect nondominated methods. (d--e) Effect RMSE and scaled teacher defect $n\E(\widehat\theta-T_P)^2$ on the fixed boundary-overlap mechanism. The factor $n$ makes $O(n^{-1})$ teacher MSE flat on panel e, so monotone decay is not implied. Summary and Raw FSP use $M=32768$; open FSP marks and grey shading denote frozen $n=1024$ extrapolation.}\label{fig:main-mechanism}
\end{figure}

All neural normal intervals use their native learned $\widehat V/n$ and are evaluated as frequentist intervals. Replacing $\widehat V$ by simulator-known $V$ while holding the mean map fixed isolates the effect of scale estimation on coverage. Bias, response coefficients and oracle-studentized Kolmogorov distance then diagnose the mean map. Deployment uses the native learned scale.

In the five-seed $M=8192$ diagnostics, for each architecture, training seed, mechanism, and context length, we also compute the exact empirical Kolmogorov distance between the 1,000 native-studentized outputs and $N(0,1)$, and repeat the calculation after replacing the learned scale by oracle $V$. At the typical mechanism and $n=256$, seed-averaged native/oracle distances are $.081/.082$ for summary FSP and $.352/.345$ for raw FSP. These 90 cell-level values are stored in \texttt{studentized\_kolmogorov.csv}; the QQ curves are visual summaries of the same distributional question, not substitutes for the metric.

\begin{table}[t]\centering\small
\caption{\textbf{Joint inference diagnostics at $n=256$, $M=8192$.} Means over five checkpoints, each evaluated on 1,000 common tables. Coverage and Kolmogorov columns list native/oracle-$V$ values; smaller Kolmogorov distance means closer agreement with $N(0,1)$.}\label{tab:inference-diagnostics}
\begin{tabular}{@{}llrrrrr@{}}\toprule
Mechanism & FSP map & Bias & $\defect$ & $\widehat V/V$ & Coverage & Kolmogorov\\\midrule
Typical & Summary & .0098 & .095 & .906 & .952/.965 & .081/.082\\
Typical & Raw & .0414 & .715 & .882 & .953/.972 & .352/.345\\
Large effect & Summary & $-.0046$ & .115 & .909 & .963/.971 & .064/.071\\
Large effect & Raw & $-.0039$ & .399 & .864 & .993/.995 & .186/.199\\
Weak overlap & Summary & .0388 & 1.444 & .398 & .955/1.000 & .287/.311\\
Weak overlap & Raw & .0828 & 3.286 & .328 & .706/.998 & .591/.514\\\bottomrule
\end{tabular}
\end{table}

\subsection{Monte Carlo reporting conventions}
Bias is averaged over repeated samples of a \emph{fixed} mechanism. Task-mixture averages are called average signed error, not fixed-mechanism bias. Monte Carlo standard errors use the variance of the estimation error, whereas fixed-mechanism sampling variance is reported separately. The original summary-study paired intervals resample 2,000 common table indices 2,500 times after averaging squared losses over its three checkpoints. The released-baseline comparison uses the 300-table protocol specified above. Both quantify deployment-sampling uncertainty conditional on the checkpoints.

Density ridges use Gaussian kernel smoothing solely for display and are peak-normalized, so ridge heights should not be compared as common-scale probability densities. The displayed horizontal quantile ranges are distribution summaries, not confidence intervals for a mean. The main rainclouds show all recorded points in their declared cells. The legacy birthweight panel in Figure~\ref{fig:real} overlays a seeded subset of 160 points on a density and summaries computed from all 2,000 outputs. Heatmaps use common color scales when comparing methods. No interpolation of unrun experimental cells is used.

\subsection{Continuous-covariate diagnostics}
The analytic mechanism in Appendix~\ref{app:continuous} is simulated for $n\in\{64,128,256,512,1024,2048,4096\}$ and $K\in\{2,4,8,16,32,64\}$, with 1,500 independently generated tables per cell. The comparator denominator is floored at $n(0.15)/(2K)$ in count units. We record squared distance to the continuous oracle label, defect, bias, MSE, and exact population coarsening bias.

A second experiment trains directly on unbinned continuous rows. Each token contains four padded covariate coordinates, a four-coordinate dimension mask, treatment, and outcome. Mechanisms cross dimension $d\in\{1,2,4\}$, smooth versus rough nonlinear response surfaces, strong versus weak overlap, and $n\in\{128,256\}$. Three Neural FSP and three Neural latent-effect checkpoints use the same permutation-invariant gated set encoder, $M=4096$ tables per target and seed, and 20 epochs. We evaluate the frozen checkpoints additionally at $n\in\{512,1024\}$ without retraining. The resulting 48 cells contain 400 common tables each: 19,200 unique tables and 307,200 recorded estimator/checkpoint rows, including neural ensembles and oracle-label rows.

The per-table comparators are difference in means, S-, T-, and X-learners, full-fit AIPW, two-fold DR-learner, and two-fold cross-fitted AIPW/DML. Their ridge/logistic nuisance sieve contains the simulator's feature families; penalties $(.35,.60,.35)$ and propensity clipping $[.05,.95]$ are fixed before evaluation. This is a deliberately well-specified classical comparison. At the trained lengths, S-learner is the lowest-RMSE classical method in all 24 cells. Mean-checkpoint FSP wins 16 cells and lowers equal-cell-weight macro RMSE from $.07695$ to $.07154$, a $7.03\%$ reduction. It wins all 12 weak-overlap cells, lowering macro RMSE by $14.56\%$. A 10,000-draw paired bootstrap resamples the same 400 table indices jointly across methods within every cell, computes checkpoint-specific cell RMSE before macro averaging, and conditions on the six fixed checkpoints; the respective intervals are $[6.10,7.97]\%$ and $[13.21,15.90]\%$.

The expanded grid also resolves the boundary. Under weak overlap, FSP macro RMSE remains below S-learner at all four lengths ($.03298$ versus $.03317$ at $n=1024$), although only three of six weak cells favor FSP at that extrapolation length. Across all 12 cells, S-learner overtakes at $n=1024$ ($.03085$ versus $.03239$). At $n=256$, mean response slopes are $.819$ for FSP, $.951$ for S-learner, $.988$ for full-fit AIPW, and $1.008$ for cross-fitted DML. Thus pretraining supplies its clearest point-risk gain in the trained weak-overlap regime and amortizes deployment; per-dataset semiparametric fitting remains closer to the efficient response. Figure~\ref{fig:amortization-evidence} displays all cells and fixed seeds, making the accuracy--reuse boundary directly inspectable.

\begin{figure}[t]\centering
\includegraphics[width=.485\linewidth]{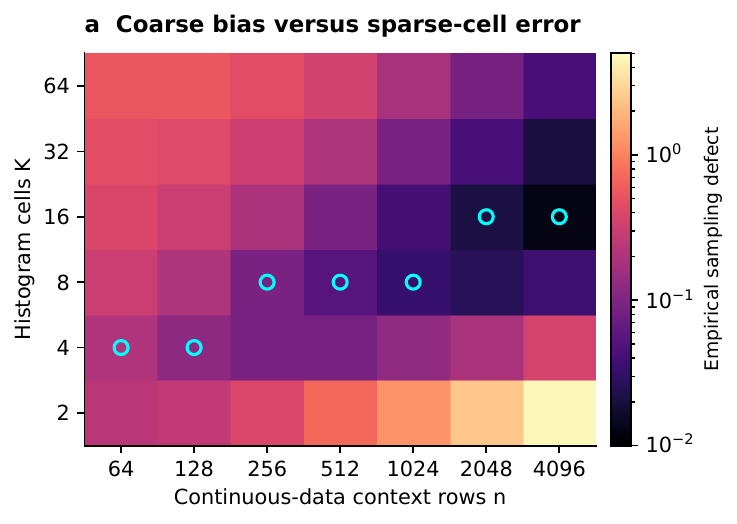}\hfill
\includegraphics[width=.485\linewidth]{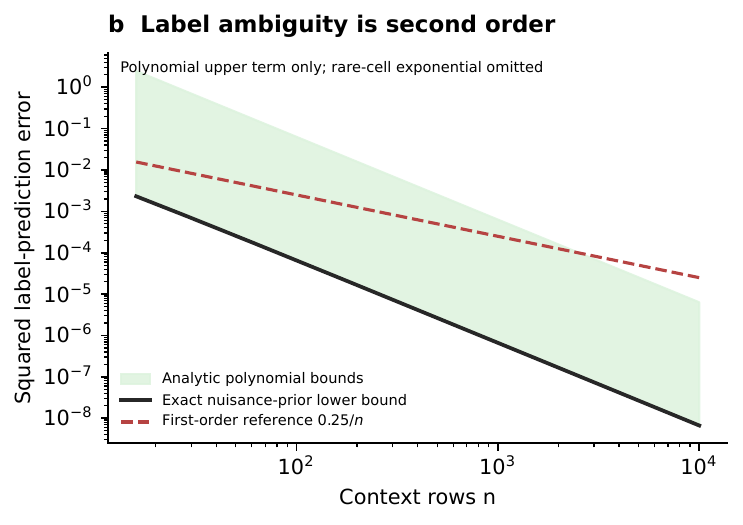}
\caption{\textbf{Two distinct scales, and the cost of forgetting confounding information.} Left: empirical comparator defect on continuous data; cyan rings identify the smallest observed defect at each $n$, without asserting optimality beyond the tested grid. Coarse bins retain confounding bias; excessively fine bins create sparse-cell error. Right: analytic polynomial bounds and the exact Bernoulli nuisance-prior label lower bound. The exponentially small rare-cell term is omitted only in this explicitly labeled scale diagram, not in the theorem. The $n^{-1}$ reference concerns causal parameter estimation, whereas the $n^{-2}$ curve concerns predicting the efficient training label.}\label{fig:continuous}
\end{figure}

\begin{figure}[p]\centering
\includegraphics[width=\linewidth]{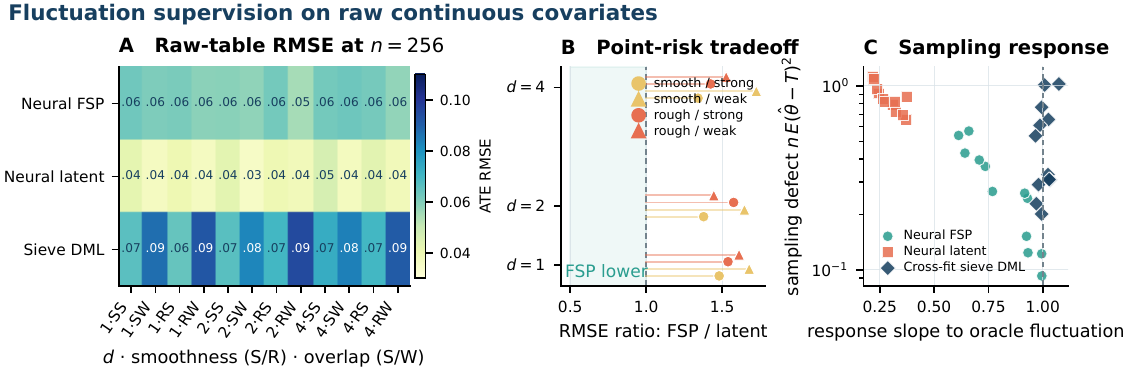}
\caption{\textbf{Neural FSP on unbinned continuous covariates.} The 12 mechanisms use 400 common tables each at $n=256$; neural metrics average three checkpoints. Panel A resolves dimension, smoothness and overlap; B shows the point-risk cost of moving toward the efficient sampling response; C compares native defect with response slope. Neural latent-effect supervision has lower point RMSE, whereas FSP recovers more efficient fluctuation.}\label{fig:continuous-neural}
\end{figure}

\begin{figure}[p]\centering
\includegraphics[width=\linewidth]{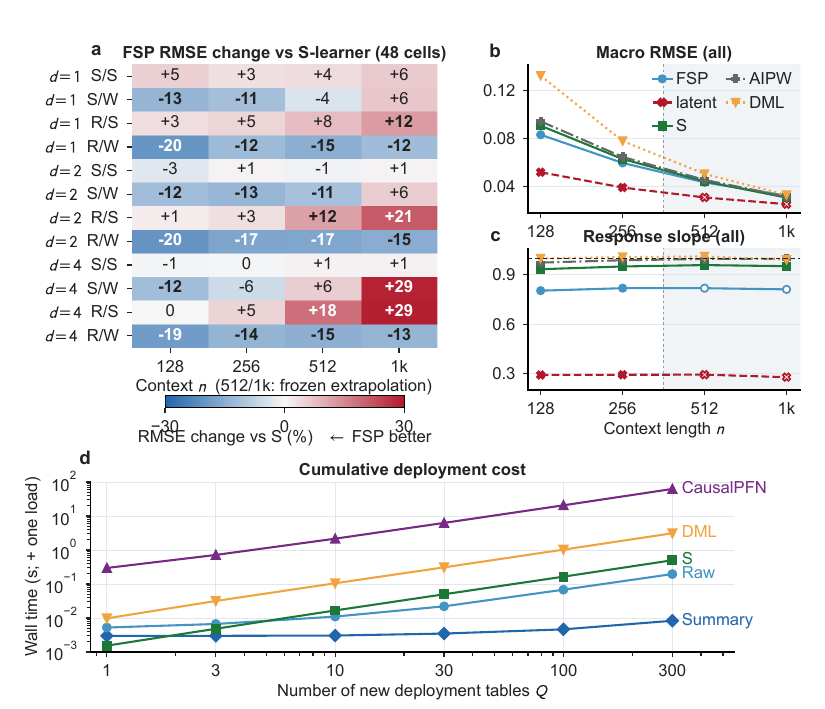}
\caption{\textbf{Complete nonlinear and deployment-cost evidence.} (a) FSP's checkpoint-mean RMSE change relative to S-learner in all 48 continuous cells; blue negative entries favor FSP. Row suffixes denote smooth/rough response (S/R) and strong/weak overlap (S/W). (b--c) Equal-cell-weight macro RMSE and sampling-response slope; the grey region and open neural marks denote frozen extrapolation. (d) Median measured cumulative one-thread CPU wall time, including one cold load; FSP curves use direct batches, whereas comparison curves accumulate observed per-table fits. No panel filters seeds or executed cells.}\label{fig:amortization-evidence}
\end{figure}

\subsection{Finite-dictionary pretraining lower-bound experiment}
We execute the Bernoulli hypercube used in the proof of Theorem~\ref{thm:dictionary-lower}, with $N\in\{4,8,16,32,64,128,256\}$ and $M\in\{32,64,128,256,512,1024\}$. Put $d=\lfloor\log_2N\rfloor$, $\varepsilon^2=\min\{1/4,d/(8M)\}$ and at each repetition draw $\sigma$ uniformly from $\{-1,1\}^d$, holding it fixed across the $M$ episodes. Each episode draws $J$ uniformly from $\{1,\ldots,d\}$ and $Y\in\{-1,1\}$ with $\Pr(Y=1\mid J=j)=(1+\varepsilon\sigma_j)/2$. Exact squared-loss ERM over the $2^d=N$ sign functions is coordinatewise empirical-sign selection. The simulator draws the sufficient counts exactly from their multinomial and conditional binomial laws; exhaustive dictionary enumeration independently checks one repetition in every cell--seed pair. Population excess risk is exactly $4\varepsilon^2d^{-1}\sum_j\ind\{\widehat\sigma_j\ne\sigma_j\}$.

Each of the 42 cells has five seeds and 500 repetitions, totaling 105,000. The log--log slope against $d/M$ is $1.000$ (descriptive regression interval $[.996,1.003]$), with $R^2=.9999$. Mean excess risk divided by the proof's lower bound lies between $5.64$ and $5.96$ across cells. This measures ERM in the theorem's shrinking-signal family: the signal itself changes with $M$, so the observed rate describes local episode-learning difficulty, rather than the learning curve of a fixed neural FSP problem. Raw repetitions and the seed/cell summaries accompany the executable validation script.

The earlier Gaussian dictionary companion uses the same $N\times M$ grid, $a^2=.20\log(N)/M$, five seeds and 500 repetitions. Its 105,000 repetitions yield slope $1.043$ (standard error $.018$, descriptive interval $[1.008,1.078]$) and $R^2=.988$ against $\log N/M$ (Figure~\ref{fig:dictionary-scaling}). It illustrates the same local complexity scale in a different observation model; the Bernoulli experiment above directly implements the proof family.

\begin{figure}[p]\centering
\includegraphics[width=\linewidth]{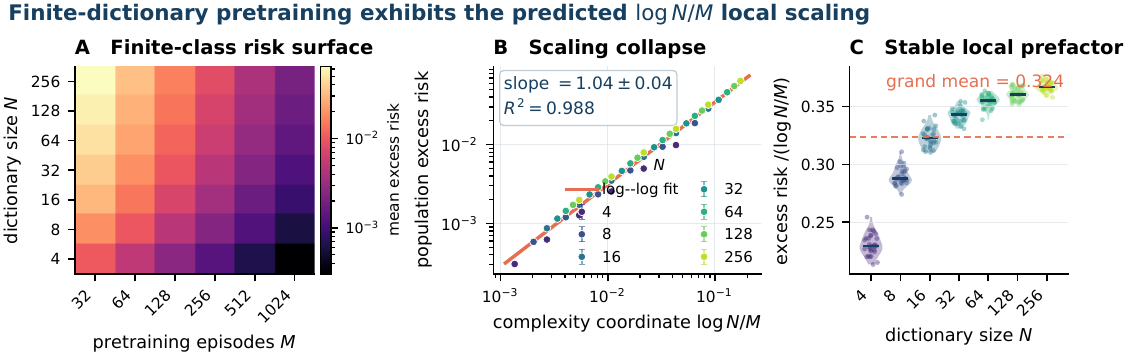}
\caption{\textbf{Gaussian dictionary companion to the Bernoulli proof-family experiment.} The executed $N\times M$ surface (left) collapses against $\log N/M$ with near-unit log--log slope (center). Error bars are $\pm1.96$ standard errors across five simulation seeds. Right: each violin pools 30 seed--budget mean risks per $N$, normalized by $\log N/M$.}\label{fig:dictionary-scaling}
\end{figure}

\subsection{Cattaneo data provenance and two different validation targets}
The raw file was obtained from the installed statsmodels distribution at \path{treatment/tests/results/cataneo2.csv} and copied without modifying its records. The package records its SHA-256 digest and source documentation \citep{cattaneo2010,statsmodelsdata}. Exposure is \texttt{mbsmoke\_}; outcome is \texttt{lbweight}. Strata are $2\ind\{\texttt{mage}\ge25\}+\ind\{\texttt{medu}\ge12\}$. No pregnancy-care mediator is adjusted for in this deliberately restricted example.

For actual-data resampling, the full empirical law fixes a descriptive contrast $0.06208267148538865$. We generate 2,000 bootstrap samples at each $n\in\{128,256,512\}$ with replacement. The benchmark is calculated from the same original records; it is explicitly \emph{not held out}. Its role is to define the target of a conditional empirical-distribution study, not provide ground truth for a scientific causal effect. Models remain frozen throughout.

For semisynthetic validation, only the real stratum proportions are retained. We choose
\[
 e=(0.20,0.12,0.27,0.18),\quad m_0=(0.07,0.04,0.11,0.075),\quad
 m_1=m_0+\Delta,
\]
with $\Delta\in\{0.025,0.075,0.15\}$. New assignment and outcomes are sampled from these mechanisms. True ATE and sampling variance are therefore known and do not depend on the original smoking or birthweight labels. We use 2,000 repetitions at each $n$ and $\Delta$. Figure~\ref{fig:main-validation}c reports the original summary backbone trained at $M=32768$, with three independent seeds; coverage is computed per checkpoint and then averaged. These models remain frozen, with no real-data fine-tuning.

\subsection{Randomized National Supported Work validation}
The Dehejia--Wahba experimental sample contains 185 treated and 260 control observations from the National Supported Work demonstration \citep{lalonde1986,dehejia1999}. The archived Stata file is downloaded from the authors' NBER data page and stored with SHA-256 \texttt{d1bd2680\ldots4e072}. Age and education are thresholded at 25 and 12 to form the two raw covariate columns. Post-treatment employment is $\ind\{\mathrm{RE78}>0\}$; its finite-sample randomized benchmark is the complete-data arm difference $0.1106029$. At each $n\in\{128,256\}$, 1,500 arm-stratified bootstrap samples preserve the experimental arms. The pre-treatment endpoint $\ind\{\mathrm{RE75}>0\}$ has causal effect zero by temporal ordering; each repetition takes a simple random subsample and completely re-randomizes treatment at the trial allocation fraction, eliminating the chance baseline imbalance in the single realized assignment. All five $M=8192$ summary-FSP and raw-FSP extension checkpoints see identical rows, and the design-based difference in means is computed from those rows.

\begin{figure}[p]\centering
\includegraphics[width=\linewidth]{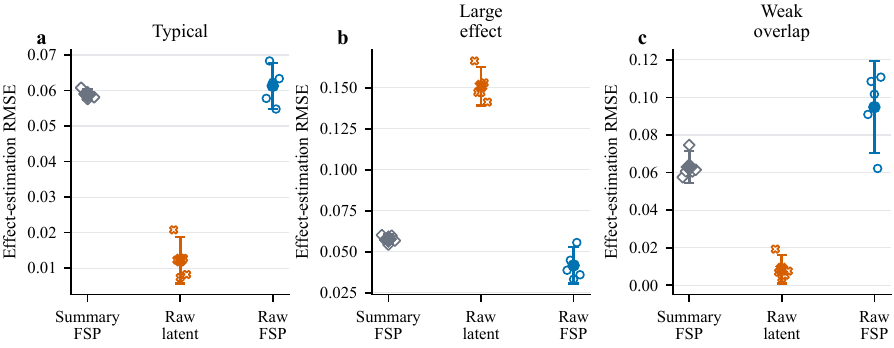}
\caption{\textbf{Seed-level raw-backbone stress test.} RMSE at $n=256$, $M=8192$, using all 1,000 common tables per mechanism; the aligned main comparison instead uses its disjoint locked 300-table stream. Open marks show five training seeds, filled marks their mean, and intervals are $95\%$ across-seed $t$ intervals.}\label{fig:raw-backbone-full}
\end{figure}

\begin{figure}[p]\centering
\includegraphics[width=\linewidth]{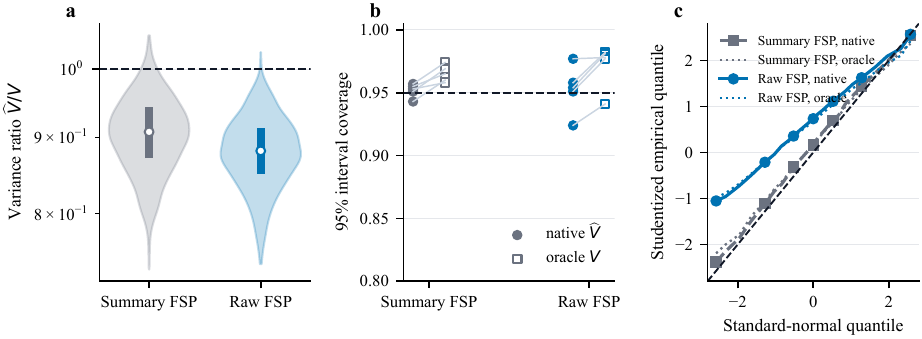}
\caption{\textbf{Mean-head and variance-head diagnostics.} Typical mechanism, $n=256$, 1,000 common tables. (a) Tablewise variance ratios averaged over five checkpoints, with median/interquartile marks. (b) Paired checkpoint-specific native/oracle coverage. (c) Quantiles averaged over checkpoints, rather than quantiles of ensemble predictions. QQ shape distinguishes centering and scale even when marginal coverage is close to nominal.}\label{fig:variance-full}
\end{figure}

\begin{figure}[p]\centering
\includegraphics[width=\linewidth]{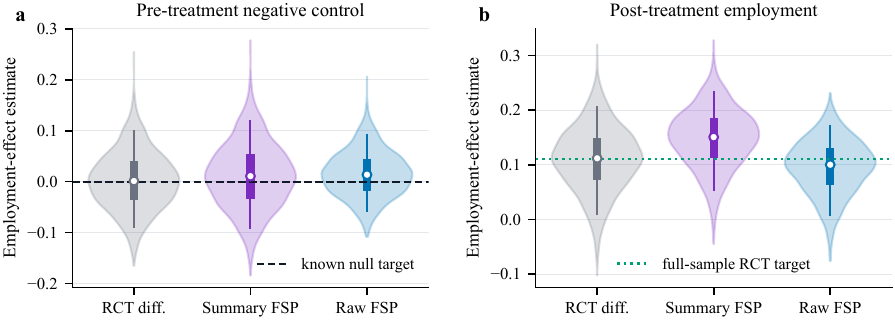}
\caption{\textbf{Full randomized-data distributions.} National Supported Work at $n=256$, with 1,500 resamples for each endpoint: the re-randomized pre-treatment null and the post-treatment empirical benchmark. Neural predictions average five checkpoints. White marks show medians, thick bars interquartile ranges, and thin bars 5th--95th percentile ranges.}\label{fig:nsw-full}
\end{figure}

For the NSW ensemble summaries, predictions are averaged over the five fixed checkpoints within each repetition before computing bias and RMSE. At $n=256$, raw-FSP post-treatment ensemble RMSE is $.0522$, while the mean of its individual-checkpoint RMSEs is $.0578$; the design-based difference has RMSE $.0596$. The ensemble therefore has its own explicitly defined estimand-performance summary, alongside the seed-level diagnostics.

\subsection{Second randomized benchmark: social-pressure turnout trial}
We independently evaluate the frozen checkpoints on the Gerber--Green--Larimer household-randomized social-pressure field experiment \citep{gerber2008,gerberdata}. The pinned Yale Dataverse version contains 344,084 records. We compare the neighbors-mailing arm (38,201 individuals in 20,000 households) with the no-mail control (191,243 individuals in 99,999 households), using turnout in the August 2006 Michigan primary as the binary outcome. The authors' analysis file verifies the arm codes. Two pre-treatment voting indicators, general-election turnout in 2002 and primary-election turnout in 2004, form the raw binary covariates; all four resulting strata contain both arms. The complete-trial observed-arm difference, $0.3779482-0.2966383=0.0813099$, is the declared finite-trial randomized benchmark, not a superpopulation ground truth.

At each $n\in\{128,256\}$, 1,500 common arm-stratified samples are drawn without replacement within each repetition, using the trial allocation fraction rounded to integer arm counts. Every repetition is evaluated by the design-based difference and by all five frozen $M=8192$ summary-FSP and raw-FSP extension checkpoints, with no retraining or checkpoint selection. At $n=128$, the respective design-based and five-checkpoint neural-ensemble means are $.08018$, $.06732$, and $.05218$, with RMSE $.11750$, $.08467$, and $.08901$ against the full-trial difference. At $n=256$, the means are $.08502$, $.07362$, and $.05721$, with RMSE $.07751$, $.05710$, and $.06275$. Thus both learned estimators reduce repeated-subsample RMSE in this design, while their negative biases expose shrinkage that the RMSE comparison does not erase.

The resampling unit here is the individual record within each trial arm. This defines precision around the empirical arm contrast; the original experiment randomized households, so inference under its assignment design would instead require household-level resampling or randomization. We use this experiment for the declared point-risk and bias comparison. The raw-FSP mean individual-checkpoint RMSE at $n=256$ is $.0675$, versus ensemble RMSE $.0628$.

\begin{figure}[p]\centering
\includegraphics[width=\linewidth]{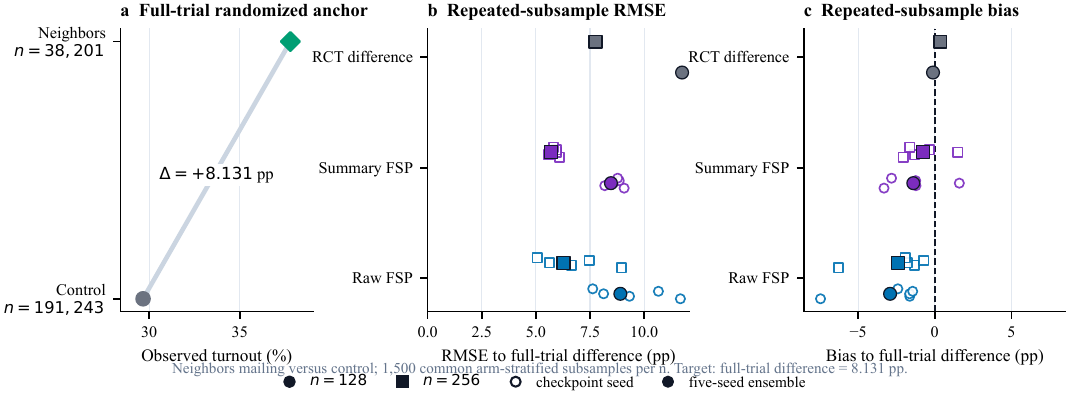}
\caption{\textbf{Independent randomized validation on voter turnout.} (a) Full-trial neighbors-mailing versus control contrast. (b--c) RMSE and bias over 1,500 common arm-stratified subsamples at each context length. Open symbols show five learned checkpoints; filled symbols show their prediction ensemble or the unaveraged design-based estimator. Every error targets the full-trial observed-arm difference.}\label{fig:second-rct}
\end{figure}

\subsection{Teacher and prediction ablations}
The blended-label and shifted-label experiments modify only synthetic target construction. The first measures the continuum between prior-conditioned effect prediction and fluctuation prediction; the second tests the signed bias term in Proposition~\ref{prop:teachererror}. Together with native/oracle variance replacement, they isolate target design and scale learning. Architecture comparisons test transfer of the phenomenon across encoders; they do not identify the necessity of every architectural block. The teacher perturbation study uses a controlled label shift, leaving table-dependent estimated-nuisance teachers as a separate dependence problem.

\begin{figure}[t]\centering
\includegraphics[width=.77\linewidth]{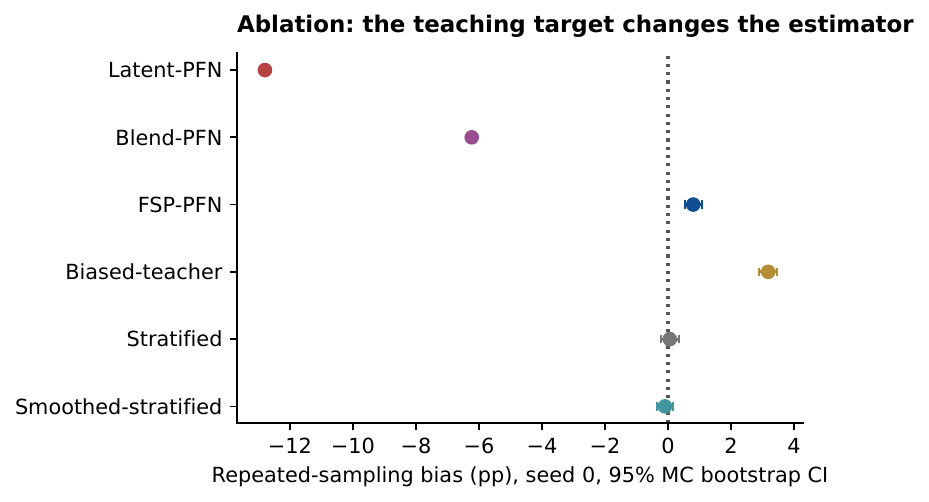}
\caption{\textbf{Changing the target changes the native repeated-sample bias.} Fixed large-effect mechanism, $n=256$, $M=32768$, seed zero, same deployment tables. Intervals bootstrap table indices and quantify Monte Carlo uncertainty of the signed bias. They do not quantify variation across training seeds. The blended target partially retains shrinkage, and the deliberately biased teacher visibly shifts the learned estimator.}\label{fig:ablation}
\end{figure}

\subsection{Complete evidence panels}
\begin{figure}[p]\centering
\includegraphics[width=.485\linewidth]{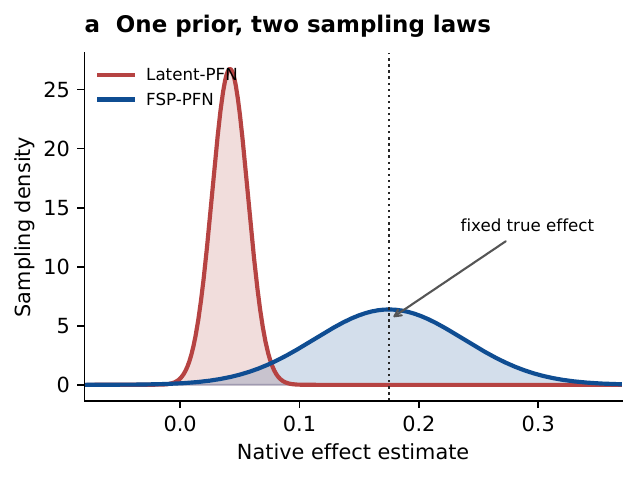}\hfill
\includegraphics[width=.485\linewidth]{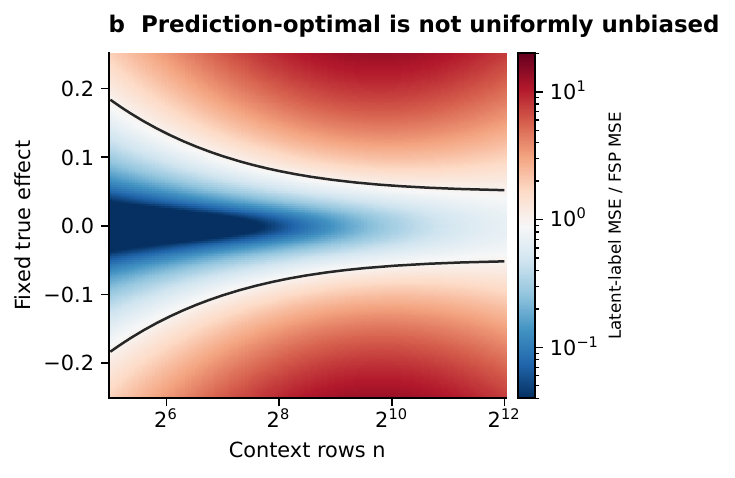}
\caption{\textbf{Exact Gaussian objective map.} Fixed-task sampling distributions and the analytic latent-effect-to-FSP MSE ratio from Proposition~\ref{prop:gauss}; the equality contour separates prior-center shrinkage from effect-shift attenuation.}\label{fig:gaussian}
\end{figure}

\begin{figure}[p]\centering
\includegraphics[width=.485\linewidth]{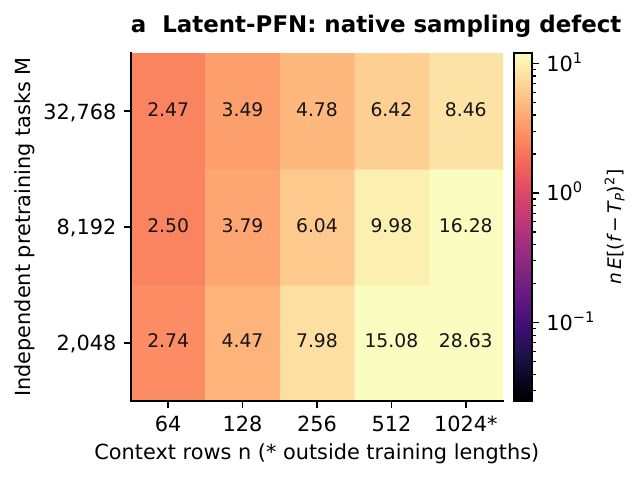}\hfill
\includegraphics[width=.485\linewidth]{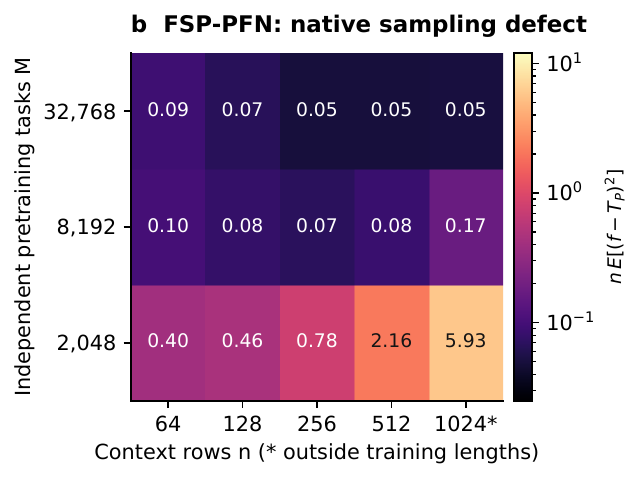}
\caption{\textbf{Deployment rows and pretraining episodes are separate resources.} Same-scale maps of $n\E(f-T_P)^2$ on the large-effect mechanism over the executed $M\times n$ grid. Asterisks identify the extrapolation length $n=1024$.}\label{fig:scaling}
\end{figure}

\begin{figure}[p]\centering
\includegraphics[width=\linewidth]{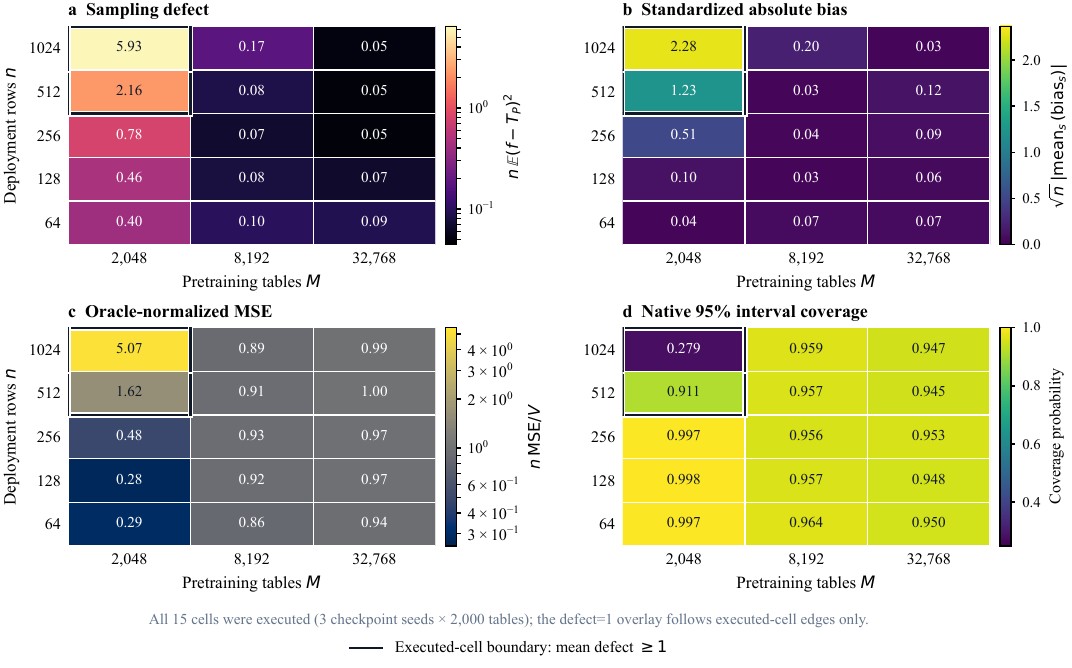}
\caption{\textbf{One executed $M\times n$ grid, four inferential consequences.} Summary-backbone FSP on the fixed large-effect mechanism, with three checkpoint seeds and 2,000 common tables per cell. Panels show native defect, $\sqrt n$ times the absolute seed-averaged bias, oracle-normalized $n\operatorname{MSE}$, and native $95\%$ coverage. The outlined executed-cell region is where mean defect is at least one; every displayed value is an executed cell.}\label{fig:pretraining-deployment-map}
\end{figure}

\begin{figure}[p]\centering
\includegraphics[width=.485\linewidth]{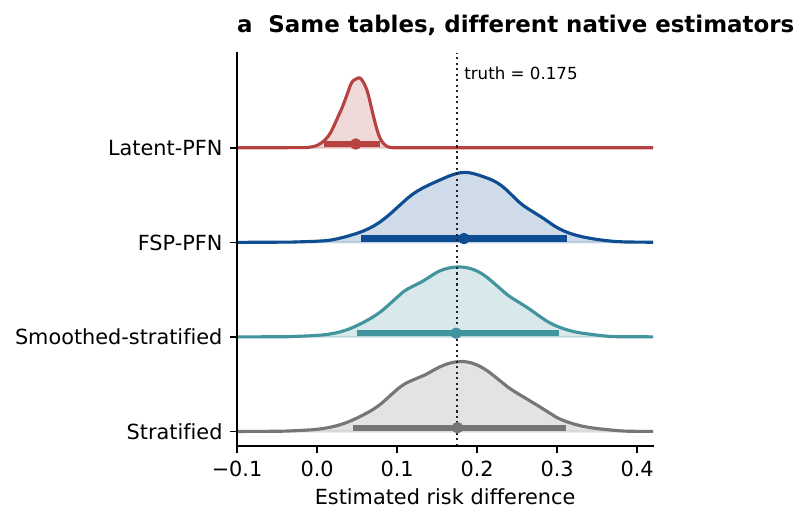}\hfill
\includegraphics[width=.485\linewidth]{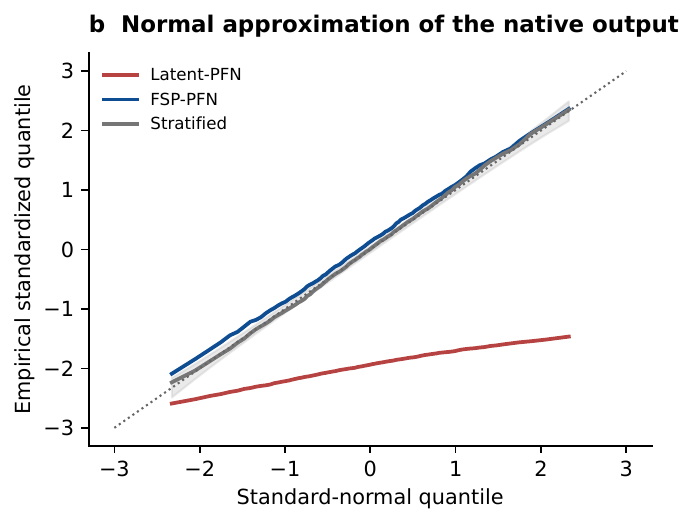}
\caption{\textbf{Repeated-sample shape at a fixed mechanism.} Peak-normalized ridges and QQ diagnostics use the large-effect mechanism, $n=256$, 2,000 common tables, and seed-zero neural models trained at $M=32768$. QQ outputs are standardized with oracle $V$; the grey band is an approximate pointwise $95\%$ normal-quantile envelope.}\label{fig:sampling}
\end{figure}

\begin{figure}[p]\centering
\includegraphics[width=.485\linewidth]{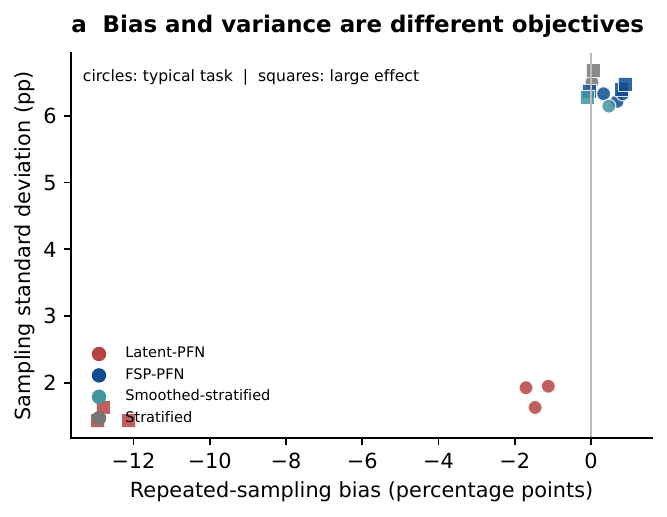}\hfill
\includegraphics[width=.485\linewidth]{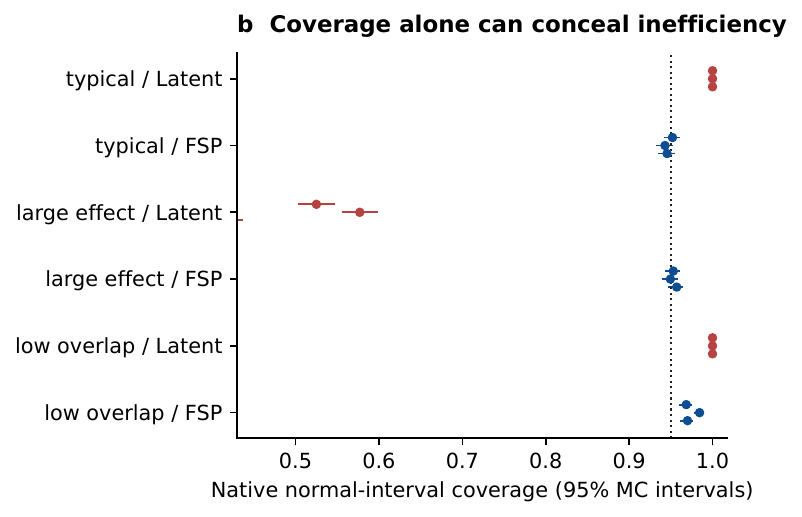}
\caption{\textbf{Bias, sampling spread, and native coverage.} At $n=256$, $M=32768$, learned methods show three checkpoint seeds and baselines one deterministic-model point. Right-panel $95\%$ Wilson intervals use 2,000 deployment repetitions per checkpoint. Centering, spread and coverage assess complementary aspects of inference.}\label{fig:tradeoff}
\end{figure}

\begin{figure}[p]\centering
\includegraphics[width=.77\linewidth]{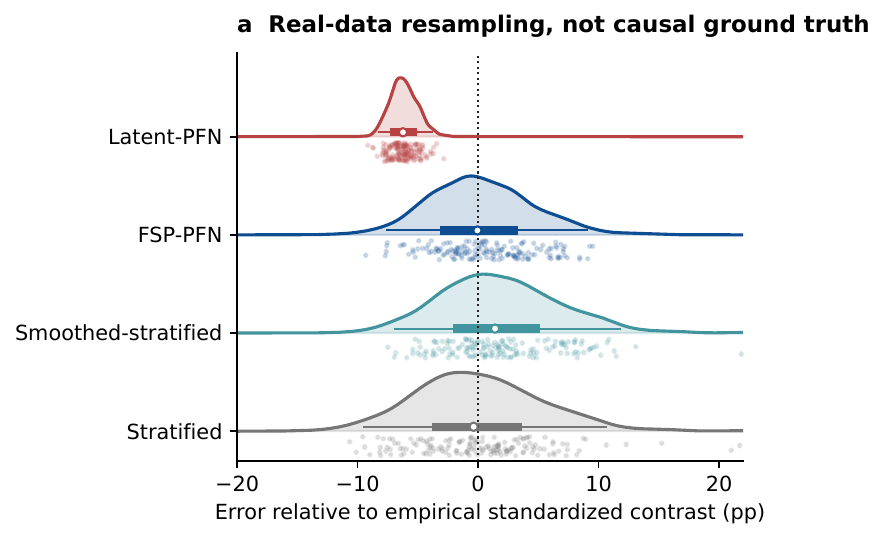}\\[3pt]
\includegraphics[width=.77\linewidth]{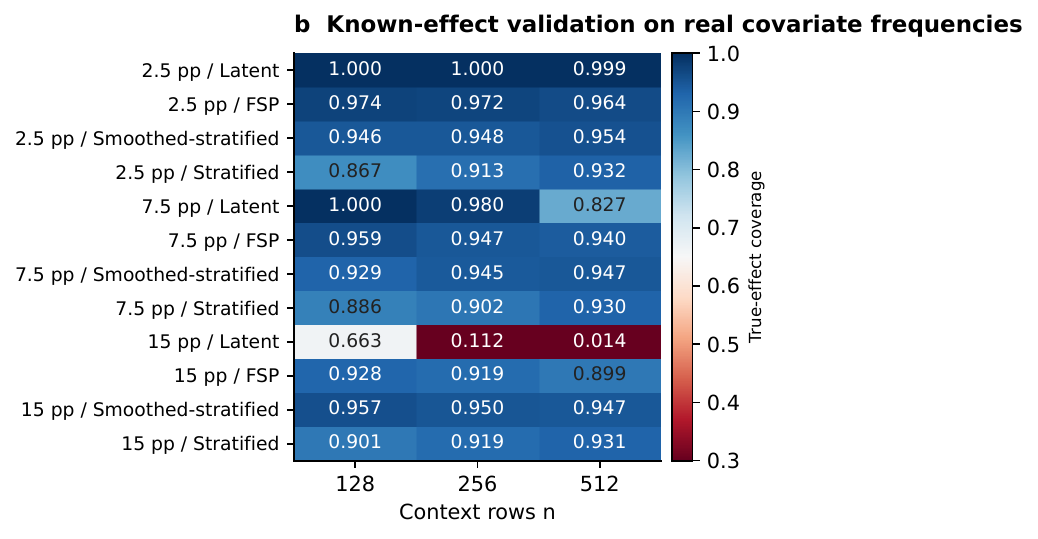}
\caption{\textbf{Observed-data stability and known-effect calibration.} Top: the empirical birthweight contrast, $n=256$, seed zero; 160 displayed dots accompany density and median/interquartile/2.5th--97.5th percentile summaries of all 2,000 bootstrap outputs. Bottom: known-effect semisynthesis uses 2,000 tables per context length and effect, with neural coverage averaged over three checkpoints trained at $M=32768$.}\label{fig:real}
\end{figure}

\begin{table}[p]\centering\small
\caption{\textbf{Accuracy and inferential behavior on the original summary architecture.} All fixed-mechanism rows use $n=256$ and learned methods use $M=32768$. The final block reports inclusion of an empirical descriptive contrast.}\label{tab:main}
\resizebox{\linewidth}{!}{\begin{tabular}{llrrr}
\toprule
Target / setting & Method & Bias & RMSE & Coverage / inclusion \\
\midrule
Fixed large effect & Latent-PFN & -0.1262 & 0.1271 & 0.506 \\
 & FSP-PFN & +0.0056 & 0.0645 & 0.953 \\
 & Smoothed-stratified & -0.0010 & 0.0628 & 0.956 \\
 & Stratified & +0.0006 & 0.0667 & 0.942 \\
\addlinespace
Fixed typical effect & Latent-PFN & -0.0143 & 0.0234 & 1.000 \\
 & FSP-PFN & +0.0062 & 0.0632 & 0.947 \\
 & Smoothed-stratified & +0.0047 & 0.0616 & 0.949 \\
 & Stratified & +0.0003 & 0.0650 & 0.929 \\
\addlinespace
Semi-synthetic, 7.5 pp & Latent-PFN & -0.0640 & 0.0655 & 0.980 \\
 & FSP-PFN & -0.0001 & 0.0488 & 0.947 \\
 & Smoothed-stratified & +0.0224 & 0.0589 & 0.945 \\
 & Stratified & +0.0003 & 0.0592 & 0.902 \\
\addlinespace
Real empirical contrast & Latent-PFN & -0.0557 & 0.0573 & 0.993 \\
 & FSP-PFN & +0.0026 & 0.0444 & 0.957 \\
 & Smoothed-stratified & +0.0178 & 0.0526 & 0.940 \\
 & Stratified & +0.0006 & 0.0526 & 0.911 \\
\bottomrule
\end{tabular}
}
\end{table}

\begin{table}[t]\centering\small
\caption{Theorem--evidence contract. Proof establishes each mathematical statement; the paired experiment probes its observable implication.}\label{tab:traceability}
\begin{tabularx}{\linewidth}{@{}p{.23\linewidth}YY@{}}\toprule
Theory & Implemented check & Diagnostic readout\\\midrule
$\lambda$-phase transition & Seven targets, five seeds, identical generator and compute & Scaled own-label risk, FSP defect, response slope\\
Finite pretraining defect & Summary-backbone $M\times n$ maps and aligned nine-method fixed-mechanism $n$-grid & Bias, RMSE, $n\E(f-T_P)^2$\\
Bias, variance, CLT & Fixed-mechanism repetitions; native output quantiles & Centering, spread, Kolmogorov/QQ shape\\
Studentization & Native/oracle-$V$ intervals and Kolmogorov distances & $\widehat V/V$, coverage, centering and shape\\
Pretraining lower bound & Exact Bernoulli proof family; separate Gaussian dictionary companion & Excess risk versus $\log N/M$\\
Teacher robustness & Blended and shifted labels on common tables & Signed bias response\\
Continuous extension & 48 nonlinear cells; seven fitted estimators; frozen length extrapolation & Point risk, defect, response slope\\
Amortized deployment & Aligned five-method timing on common tables & Warm latency and cumulative wall time\\
Real transfer & Two empirical trial benchmarks; known-effect semisynthesis & Benchmark RMSE/bias, re-randomized null; synthetic coverage\\\bottomrule
\end{tabularx}
\end{table}

\section{Proof, software, and claim audit}\label{app:audit}
\subsection{Mathematical verification ledger}
Every numbered manuscript result has a written proof in the preceding appendices. The proof chain separates classical inputs (Bernstein, Berry--Esseen, Pinsker, total-variation contraction, and conditional-expectation projection) from problem-specific identities and bounds. In particular, the count-product cancellation proves second-order FSP-label learnability; the finite-cover oracle inequality tracks pretraining, approximation, and optimization errors; norm perturbations yield the inferential consequences; Assouad's cube supplies the finite-dictionary $M$ lower bound; and a self-contained van Trees argument supplies the local $n$ lower bound. The continuous and panel extensions state their narrower scopes explicitly.

The primary label identities, Gaussian and $\lambda$-path algebra, count decompositions, beta-prior finite sums, and finite-class constants are independently checked by exact or symbolic computation in \path{code/verify_math.py}. A separate mathematical red-team audit rederived all constants and quantifiers after the final scope edits. These checks complement the analytic proofs and are recorded with commands and outputs.

\subsection{Lean source and verification status}
The \texttt{LeanProofs} project maps every one of the 21 numbered results in this manuscript to compiled declarations in \texttt{coverage\_manifest.json}. The map includes actual Gaussian and causal experiments, the bounded-outcome label minimax problem, finite-episode pretraining, normal approximation and studentization, the explicit quantized network and training schedule, variance learning, uniform panels, the deployment and local information bounds, and the continuous-covariate extension. It also records the original constants and the statistical domains of the formal statements. A separate count of Lean's auxiliary declarations is not a count of paper results.

The checker uses Lean~4.32.0 with Git-pinned Mathlib and the formalized Berry--Esseen theorem in \texttt{ProbabilityApproximation}. It compiles every source module with warnings treated as errors, audits all compiled project declarations (including generated and private ones), checks the upstream normal-approximation theorem, and confirms that the 21 manuscript labels agree exactly with the statement-level map. The transitive axiom audit permits only \texttt{propext}, \texttt{Classical.choice}, and \texttt{Quot.sound}. Source hashes, dependency revisions, commands, and the result of the immutable full-project run are in \path{LeanProofs/verification_status.json}; \path{research_audit/Lean_coverage_20260922.md} explains the correspondence. The earlier Lean~4.19 check is archived separately.

\subsection{Empirical integrity checks}
The original extension suite records 405,000 fixed-mechanism prediction rows: 45 fitted maps evaluated on 9,000 distinct common tables, rather than 405,000 independent deployment samples. Its 90,000 variance-diagnostic rows are derived from the same predictions. The aligned comparison adds 75,600 prediction rows from nine named methods on 3,600 common tables across 12 mechanism--length cells. The NSW and social-pressure evaluations add 66,000 and 33,000 prediction rows. Checks verify all seven $\lambda$ levels and five training seeds, MSE-decomposition closure to $1.39\times10^{-17}$, and raw-row permutation invariance below $2.7\times10^{-7}$. The MSE identity is a software-consistency check; sampling-law accuracy is evaluated separately by bias, variance and the 90 empirical Kolmogorov distances.

The Gaussian dictionary extension contains 105,000 replicate records. The original 76,800-row continuous-neural record is expanded to 307,200 paired estimator/checkpoint rows on 19,200 unique tables; all released cells are numerically replayed before the two frozen extrapolation lengths are added. The revision additionally executes 105,000 repetitions of the exact Bernoulli lower-bound family, 2,500 paired bootstrap resamples of the 300-table large-effect comparison, and 10,000 cell-stratified paired resamples for the continuous comparison. The official CausalPFN evaluation is paired by a shared seed--scenario--length--replicate rule; independent reconstruction records one raw-table SHA-256 fingerprint per scenario--length cell. The turnout benchmark checks treatment-code mapping, the official source MD5, common subsample-index hashes and ten checkpoint hashes.

The compact replay inventory, \texttt{results/artifact\_provenance.json}, identifies the supplied extension records and checkpoints; \texttt{code/audit\_compact\_evidence.py} verifies their hashes, finite values and shared-table counts. Earlier run manifests document the historical experiment suite, including large original-summary replicate files regenerated by the supplied training scripts. The aligned prediction, timing, table and Figure 4/7 pipeline is bound by \texttt{aligned\_pipeline\_audit.json}; its audit rechecks external identities, source arrays, model budgets, hashes and render inputs. New Bernoulli and bootstrap calculations are independently replayable from \texttt{code/validate\_experiment\_contract.py}.

Three accounting corrections were applied before the frozen artifact audit: the semisynthetic oracle-label interval uses oracle variance, the unadjusted contrast uses its own two-group variance, and task-average signed-error uncertainty uses the error variance. The raw V1 constant-prediction failure is retained as a pilot artifact; the reported V2 architecture is independently named and fully specified. Results for external systems are emitted only after a completed official checkpoint call. This rule yields a matched-table CausalPFN comparison and repository-level availability records for CausalFM and OSPC.

\subsection{Scope contract}
The finite-class theorem treats the architecture class as fixed before its concentration sample and exposes optimization tolerance rather than assuming an optimizer certificate. The binary mechanism panel provides the stated uniform route; continuous outcomes retain task-average, dominated-shift, or atomic routes. The continuous-$X$ theorem pays both sparse-cell and within-bin confounding terms. Studentization requires a separately consistent variance head. Real causal interpretation follows the identifying assumptions in Section~\ref{sec:setup}; randomized-study resampling evaluates declared empirical contrasts, the re-randomized NSW negative control has a known null, and semisynthesis supplies known causal effects. The birthweight bootstrap targets its declared observed-data functional. These contracts make each conclusion traceable to a theorem condition and an executable diagnostic.

\end{document}